%% file: main_clean.tex
\documentclass[11pt]{article}

\usepackage[margin=1in]{geometry}
\usepackage{microtype}
\usepackage{amsmath, amssymb, mathtools, amsthm}
\usepackage{graphicx}
\usepackage{subcaption}
\usepackage{booktabs}
\usepackage{multirow}
\usepackage{placeins}
\usepackage{etoc}
\usepackage{enumitem}

\usepackage[authoryear, round]{natbib}
\usepackage{xcolor}
\definecolor{citegreen}{RGB}{0,128,128}
\usepackage[
    colorlinks=true,
    linkcolor=blue,
    citecolor=citegreen,
    urlcolor=blue
]{hyperref}

\usepackage[capitalize, noabbrev]{cleveref}
\crefname{assumption}{Assumption}{Assumptions}
\Crefname{assumption}{Assumption}{Assumptions}

\usepackage{autonum}

\input{macros}

\input{math_commands}

\theoremstyle{plain}
\newtheorem{theorem}{Theorem}[section]

\newtheorem{lemma}[theorem]{Lemma}
\newtheorem{corollary}[theorem]{Corollary}
\theoremstyle{definition}

\newtheorem{assumption}[theorem]{Assumption}
\theoremstyle{remark}
\newtheorem{remark}[theorem]{Remark}

\title{Demonstrations, CoT, and Prompting:\\
A Theoretical Analysis of ICL}

\author{Xuhan Tong$^{*}$ \and Yuchen Zeng$^{\dagger 1}$ \and Jiawei Zhang$^{\dagger 2}$}

\date{} %

\begin{document}
    \maketitle
    \begingroup
    \renewcommand{\thefootnote}{}

    \footnotetext{
    $^{*}$ Department of Computer Sciences, University of Wisconsin-Madison. \texttt{tong56@wisc.edu}.}
    \footnotetext{
    $^{\dagger}$ Co-last authors.
    $^{1}$ Microsoft Research. \texttt{yuchen.zeng.1998@gmail.com}.
    $^{2}$ Department of Computer Sciences, University of Wisconsin-Madison. \texttt{jzhang2924@wisc.edu}.
    }
    \footnotetext{Our code is available at: \url{https://github.com/txxxxh/Demonstrations-CoT-and-Prompting}.}
    \endgroup

    \input{abstract}

    \input{main_body}

    \bibliographystyle{plainnat}
    \bibliography{references}
    \clearpage
    \appendix
    \tableofcontents

    \clearpage
    \onecolumn

    \section{List of Common Notations}
    \input{list_notation}

    \section{Related Work}
    \input{related_work}

    \section{Proof of \cref{thm:lipschitz}}
    \input{appendix_prelim}

    \section{Formal Justification of the Padding Reduction}
    \input{appendix_padding}

    \section{Proof of \cref{thm:cot_bound}}
    \input{appendix_cot}

    \section{Proof of \cref{thm:decay_1_5}, Corollary~\ref{cor:decay_1_5} and~\ref{cor:decay_1_5_expect}}
    \input{appendix_instruction}

    \section{Proof of \cref{thm:decay_6} and Corollary~\ref{cor:decay_6}}
    \input{appendix_instruction_generalization}

    \section{Experiments}
    \input{appendix_experiment}
\end{document}

%% file: macros.tex
\usepackage{enumitem}
\usepackage{amsmath}
\usepackage{autonum}

\crefname{assumption}{Assumption}{Assumptions} \Crefname{assumption}{Assumption}{Assumptions}

 \def\norm#1{\left\lVert #1 \right\rVert} 
  
 \def\pt#1{\left( #1 \right)} 
\def\abs#1{\left| #1 \right|} \def\ith#1{^{(#1)}}

\def\mes{\operatorname{mes}} \def\dpre{D_0} \def\dicl{D}  \def\model{M}
\def\bestmodel{M_\star} \def\modelclass{\gM} \def\error{\ell_{\text{ICL}}} 
\def\xquery{x_{\text{query}}} \def\lipschitz{L} \def\yquery{y_{\text{query}}}
 \def\instruction{i} \def\iquery{i_{\text{query}}}
\def\rinstruction{\ri}  
 \def\kl{\operatorname{KL}} \def\as{\text{a.s.}}

%% file: math_commands.tex
\usepackage{amsmath, amsfonts, bm}

\def\eqref#1{(\ref{#1})}

  \def\1{\bm{1}}

  \def\ri{{\textnormal{i}}} 
 
  \def\rt{{\textnormal{t}}} 
  \def\rx{{\textnormal{x}}} \def\ry{{\textnormal{y}}}

   \def\rvi{{\mathbf{i}}}

\def\va{{\bm{a}}} \def\vb{{\bm{b}}}   
   \def\vi{{\bm{i}}}

\def\vz{{\bm{z}}}

\DeclareMathAlphabet{\mathsfit}{\encodingdefault}{\sfdefault}{m}{sl}
\SetMathAlphabet{\mathsfit}{bold}{\encodingdefault}{\sfdefault}{bx}{n}

\def\gM{{\mathcal{M}}}   
   \def\gT{{\mathcal{T}}}
 \def\gV{{\mathcal{V}}}

\def\sN{{\mathbb{N}}}  \def\sP{{\mathbb{P}}} 
\def\sR{{\mathbb{R}}}

\newcommand{\E}{\mathbb{E}}

\newcommand{\di}{\ensuremath{D_{\vi_0}}}
\newcommand{\diout}{\ensuremath{D_{\vi}}}

%% file: abstract.tex
\begin{abstract}
In-Context Learning (ICL) enables pretrained LLMs to adapt to downstream tasks by conditioning on a small set of input-output demonstrations, without any parameter updates.
Although there have been many theoretical efforts to explain how ICL works, most either rely on strong architectural or data assumptions, or fail to capture the impact of key practical factors such as demonstration selection, Chain-of-Thought (CoT) prompting, the number of demonstrations, and prompt templates.
We address this gap by establishing a theoretical analysis of ICL under mild assumptions that links these design choices to generalization behavior.
We derive an upper bound on the ICL test loss, showing that performance is governed by (i) the quality of selected demonstrations, quantified by Lipschitz constants of the ICL loss along paths connecting test prompts to pretraining samples, (ii) an intrinsic ICL capability of the pretrained model, and (iii) the degree of distribution shift. 
Within the same framework, we analyze CoT prompting as inducing a task decomposition and show that it is beneficial when demonstrations are well chosen at each substep and the resulting subtasks are easier to learn.
Finally, we characterize how ICL performance sensitivity to prompt templates varies with the number of demonstrations.
Together, our study shows that pretraining equips the model with the ability to generalize beyond observed tasks, while CoT enables the model to compose simpler subtasks into more complex ones, and demonstrations and instructions enable it to retrieve similar or complex tasks, including those that can be composed into more complex ones, jointly supporting generalization to unseen tasks. All theoretical insights are corroborated by experiments.
\end{abstract}

%% file: main_body.tex
\section{Introduction}
\label{sec:intro}

In-Context Learning (ICL)~\citep{Brown2020} is a widely used techniques that
enables Large Language Models (LLMs) to perform target tasks by presenting
multiple of input-output example pairs, termed \textit{demonstrations}, drawn from
the target tasks within the prompt. Since its introduction, ICL has demonstrated
strong empirical performance across a broad range of real-world tasks~\citep{Agarwal2024,OpenAI2023},
including coding~\citep{humaneval}, math problem solving~\citep{math,gsm8k} and
question answering~\citep{gpqa,mmlu}. Despite these empirical successes, the theoretical
understanding of ICL remains limited, as existing analyses suffer from at least
one of the two limitations described below.

First, many theoretical studies rely on strong simplifying assumptions,
including linear attention~\citep{Ahn2023,vonOswald2023,Mahankali2023,JMLR:v25:23-1042,cui2025a},
single-layer transformers or attention-only architectures~\citep{yang2024incontext,vonOswald2023,Akyurek2023,Dai2023,Mahankali2023,JMLR:v25:23-1042,cui2025a},
single-head attention~\citep{yang2024an,zhang2023trained}, or random generalized
linear regression data distributions~\citep{Ahn2023,vonOswald2023,oko2024pretrained,JMLR:v25:23-1042,Fu2024}.
While these assumptions enable tractable analysis, they abstract away architectural
and data properties that are critical in practice.

Meanwhile, intuitively, by observing demonstrations, the model can better infer
the underlying task intent that defines desirable outputs. Extensive empirical studies
have therefore focused on how to construct effective demonstrations, showing
that factors such as the number of demonstrations~\citep{Agarwal2024}, their
selection~\citep{Liu2022,su2023selective}, prompt template~\citep{ye2024investigating},
and the inclusion of Chain-of-Thought (CoT~\citep{Wei2022}) reasoning in demonstrations
can lead to substantial differences in performance. However, most theoretical work
fail to depict how these key practical factors affect ICL performance~\citep{Feng2023,li2023dissecting,tutunov2024}.
To bridge this gap, in this paper, we present a set of theoretical results under
mild assumptions, all of which are corroborated by corresponding synthetic
experiments.

\paragraph{Our Contributions.}

Under the above formulation, we provide a theoretical characterization of how demonstration
selection, CoT, prompt template, number of demonstrations influences ICL
performances. Our main contributions are summarized as follows.
\vspace{-0.1em}
\begin{itemize}[leftmargin=*, topsep=0pt, itemsep=1pt, parsep=2pt]
    \item \textbf{Quantifying Demonstration Effectiveness via Lipschitz
        Constants.} We prove that when ICL is used to adapt a pretrained model
        to a downstream task with an unseen test prompt, the performance is governed
        by three factors. The first is the effectiveness of the demonstration in
        identifying the target task, captured by the minimum Lipschitz constant of
        the ICL loss along a path that maps the test prompt to any sample observed
        during pretraining. A large Lipschitz constant indicates that the
        demonstrations fail to stably pin down the underlying task. In addition,
        the analysis also captures the effects of the pretrained model’s
        intrinsic ICL capability and the distance between the test prompt and
        pretraining prompts, which reflects the degree of distribution shift.

    \item \textbf{Characterizing When CoT Benefits ICL.} We next consider the
        setting where demonstrations include CoT traces, which can be viewed as decomposing
        the target task into multiple subtasks. We show that the loss on the
        test prompt can be decomposed into a weighted sum of the losses incurred
        when performing ICL on these subtasks, where the weights are determined by
        the Lipschitz constants of the loss along paths that map subtask-specific
        samples in the test prompt to the corresponding subtask samples in the pretraining
        dataset. This result indicates that CoT is beneficial to ICL when the
        task decomposition selects well-learned subproblems and when the
        demonstrations for each subtask are chosen such that the subtasks are
        well-defined and reliably identifiable.

    \item \textbf{Establishing the Interaction Between Prompt Templates and
        Demonstration Count.} We show that the influence of prompt templates on
        ICL exhibits a mixed effect that depends on the number of demonstrations.
        By analyzing changes in output logits with respect to the prompt template,
        we prove that this influence is generally upper bounded by an
        exponential decay in the number of demonstrations. Consequently, when
        the number of demonstrations is small, prompt templates can substantially
        affect output probabilities, whereas as the number of demonstrations
        increases, ICL performance becomes increasingly invariant to the choice of
        prompt template. An important exception arises when prompt templates
        provide different incorrect instructions across demonstrations, in which
        case this decay no longer holds.
\end{itemize}
All of our theoretical results are supported by experiments.

\vspace{1em}
Our results can be understood through the model's ability to generalize beyond pretraining. First, we show that pretraining enables the model to generalize to tasks that are similar to those seen during training. Building on this, demonstrations in a prompt enable the model to retrieve such tasks during ICL: given sufficiently informative examples, the model can map the test prompt to tasks that were either seen during pretraining, similar to them, or composed from them, with the retrieval becoming more reliable as the number of demonstrations increases. Furthermore, CoT extends this capability by enabling generalization to more complex tasks: if a task can be decomposed into subtasks that are well learned during pretraining, then the model can combine them to solve the overall task. Together, these results show that ICL leverages pretraining to generalize not only to similar tasks, but also to their compositions, with demonstrations and CoT serving as mechanisms for retrieving and composing such knowledge.

\vspace{-.1in}
\section{Lipschitz Characterization of Demonstration Effectiveness in ICL}
\label{sec:general}
\subsection{Background}
\label{subsec:general_formulation} We start by presenting our problem setting
for analyzing ICL with a test prompt unseen through pretraining. Let $\gV$ denote
a finite vocabulary, and let $\gV^{L}$ denote the set of all symbol sequences of
length at most $L$. Since $\gV$ is finite and the sequence length is bounded,
$\gV^{L}$ is a finite (and hence compact) set. We write $\gV^{*}:= \gV^{L}$. A
data sample is denoted by $(x,y)$, where $x, y \in \gV^{*}$.
Let $\modelclass$ denote a class of Transformer models, where each $\model \in \modelclass$
is a function $\model : \gV^{*}\to \gV^{*}$ that maps an input prompt sequence to
its continuation. We assume the existence of a target model
$\bestmodel : \gV^{*}\to \gV^{*}$ that is optimized for performing ICL on a
given target task. Such a target model is commonly assumed in theoretical analyses~\citep{zeng2024the,pmlr-v195-giannou23a}.
This assumption is realistic because the Transformer class is a universal sequence-to-sequence
approximator~\citep{Yun2020} and can represent a broad range of target tasks. Moreover,
introducing $\bestmodel$ lets us carry out the analysis within a well-defined
hypothesis space $\modelclass$, rather than imposing additional structural assumptions
on the target task.

We consider a pretrained Transformer model $\model \in \modelclass$, and denote
the test prompt as $\vz = (x_{1}, y_{1}, \ldots, x_{n},y_{n},\xquery)$, where $n
\in \sN$ is the number of demonstrations and $\xquery \in \gV^{*}$ is the query.
The \textit{ICL loss} is defined as
$\error (\vz) := \abs{\model(\vz)- \bestmodel(\vz)}.$
Finally, let domain $\dpre \subset \gV^{*}$ denote the set of pretraining
prompts that contain a variable number of demonstrations.

To establish our main results, we will rely on two classical tools from
approximation theory, which we recall below.
\begin{lemma}[Remez Inequality for Polynomials~\citep{remez1936}]
    \label{thm:remez} If $J\in\sR$ is a finite interval, and $E \subset J$ is an
    an arbitrary measurable set, then for any polynomial $p$ of degree $n$,
    \begin{equation}
        \max_{x \in J}\abs{p(x)}\leq \pt{\frac{4 \mes (J)}{\mes (E)}}^{n}\sup_{x\in
        E}\abs{p(x)}.
    \end{equation}
\end{lemma}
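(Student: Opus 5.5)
The plan is to prove the Remez inequality through Chebyshev polynomials. Everything reduces to the extremal property that, among polynomials of degree $n$ bounded by $1$ on a set of given measure, the Chebyshev polynomial rescaled to an interval of that measure grows fastest outside the set.

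\textbf{Normalization.} Assume $\mes(E)>0$; otherwise the right-hand side is $+\infty$ and there is nothing to prove. We may also assume $\sup_{E}\abs{p}=1$. By an affine change of variables we may take $J=[-1,1]$ (so $\mes(J)=2$) and set $s:=\mes(E)\in(0,2]$. The target inequality then becomes $\max_{J}\abs{p}\le (8/s)^{n}$.

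\textbf{Extremal step.} We will prove the sharp classical estimate
\begin{equation}
\max_{J}\abs{p}\le T_n\!\pt{\frac{4}{s}-1},
\end{equation}
where $T_n$ is the Chebyshev polynomial of degree $n$. Pick $x_0\in J$ where $\abs{p}$ attains its maximum. Let $I$ be an interval of length $s$, placed so that $x_0$ lies outside $I$ on the far side from the bulk of $J$. Let $T$ be $T_n$ rescaled affinely to $I$, so that $\abs{T}\le1$ on $I$ and $T$ grows monotonically away from $I$. The key claim is $\abs{p(x_0)}\le\abs{T(x_0)}$. We prove it by contradiction. If $\abs{p(x_0)}>\abs{T(x_0)}$, set $q:=T-\lambda p$ with $\lambda=T(x_0)/p(x_0)$, so that $\abs{\lambda}<1$ and $q(x_0)=0$. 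Since $T$ alternates $n+1$ times between $\pm1$ on $I$, while $\abs{\lambda p}<1$ on $E$, a measure-preserving rearrangement that pushes the points of $E$ onto the alternation structure of $I$ gives $q$ at least $n$ sign changes in addition to the zero at $x_0$. Then $q$, a nonzero polynomial of degree at most $n$, has $n+1$ zeros, which is a contradiction. The rearrangement is the standard trick of comparing the level sets $\{x:\abs{p(x)}\le1\}\supseteq E$ against those of $T$. I expect this step to be the \textbf{main obstacle}. The cleanest route is to count the zeros of $p-c$ for $\abs{c}<1$, using that the sublevel set of $\abs{p}$ containing $E$ has measure at least $s$ and is a union of at most $n$ intervals.

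\textbf{Final estimate.} For $t\ge1$ we have $T_n(t)\le (t+\sqrt{t^2-1})^{n}\le(2t)^n$. Taking $t=4/s-1\le 4/s$ gives $T_n(4/s-1)\le (8/s)^n = (4\mes(J)/\mes(E))^n$. Undoing the normalization yields the stated bound. The constant $4$ is loose, and it is exactly what absorbs the factor $2t$ together with the rescaling from $\mes(J)=2$.
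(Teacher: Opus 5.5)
The paper gives no proof of this lemma; it is quoted from Remez (1936), so your argument has to stand on its own.

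Your normalization and your final step are fine: $T_n(t)\le(2t)^n$ for $t\ge1$, with $t=4/s-1\le 4/s$, does give $(8/s)^n=(4\mes(J)/\mes(E))^n$.

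\textbf{The gap is the extremal step, which is the whole content of the theorem, and the mechanism you propose does not work.}

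\begin{itemize}
\item Setting $q=T-\lambda p$ and counting sign changes is the proof of Chebyshev's inequality $\lvert p(x_0)\rvert\le\lvert T(x_0)\rvert\sup_I\lvert p\rvert$. It requires $\lvert\lambda p\rvert<1$ at the $n+1$ alternation points of $T$, and these lie in $I$.
\item Your hypothesis controls $p$ only on $E$, and nothing forces $E$ to meet $I$. Worse, for an interior maximizer $x_0$, the interval that makes $\lvert T(x_0)\rvert=T_n(4/s-1)$ is $I=[x_0-2,x_0-2+s]$ (or its mirror image). This interval can lie entirely outside $J$, where $p$ is not controlled at all.
\item So $q$ need not change sign between alternation points, and you do not obtain $n+1$ zeros.
\item A ``measure-preserving rearrangement'' of the points of $E$ cannot repair this. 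Moving points does not produce another polynomial of degree $n$, so there is nothing whose zeros you could count.
\item Showing that an interval at the far end of $J$ is extremal among all sets of measure $s$ is exactly what Remez proved. As written, the step assumes its conclusion.
\item Your closing remark is true: the zeros of $p-c$ can be counted, and $\{\lvert p\rvert\le1\}$ has at most $n$ components. But you never turn it into $\lvert p(x_0)\rvert\le T_n(4/s-1)$.
\end{itemize}

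\textbf{A route that avoids the extremal problem.} Since only the constant $4$ is required, you can use Lagrange interpolation with nodes spread out according to $E$-measure.

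\begin{itemize}
\item Assume $n\ge1$ and set $s=\mes(E)$. Let $F(x)=\mes(E\cap(-\infty,x])$; it is continuous, nondecreasing and $1$-Lipschitz.
\item For $k=0,\dots,n$ put $u_k=\tfrac{s}{2}\bigl(1-\cos\tfrac{k\pi}{n}\bigr)$.
\item Let $x_k$ be the smallest point with $F(x_k)=u_k$; for $k=0$, take the largest point with $F=0$.
\item Each $x_k$ lies in $\overline{E}$, so $\lvert p(x_k)\rvert\le\sup_E\lvert p\rvert$ by continuity of $p$.
\item Since $F$ is $1$-Lipschitz, $\lvert x_k-x_i\rvert\ge\lvert u_k-u_i\rvert$. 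Also $\lvert x-x_i\rvert\le\mes(J)$ for $x\in J$.
\item The Chebyshev--Lobatto identity gives $\prod_{i\ne k}\bigl\lvert\cos\tfrac{k\pi}{n}-\cos\tfrac{i\pi}{n}\bigr\rvert=n2^{1-n}$ for $0<k<n$, and $n2^{2-n}$ for $k\in\{0,n\}$.
\end{itemize}

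Bounding the Lebesgue function with these facts gives, for every $x\in J$,
\[
\lvert p(x)\rvert\le\sup_E\lvert p\rvert\,\mes(J)^n\Bigl(\frac{2}{s}\Bigr)^n\Bigl(\frac{(n-1)2^{n-1}}{n}+\frac{2\cdot2^{n-2}}{n}\Bigr)=\frac12\Bigl(\frac{4\mes(J)}{\mes(E)}\Bigr)^n\sup_E\lvert p\rvert .
\]
This is the lemma, with an extra factor $\tfrac12$ to spare. The naive equally spaced choice of nodes only yields the constant $2e$ in place of $4$.

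If you want to keep your Chebyshev route and the sharp bound $T_n(2\mes(J)/\mes(E)-1)$, you must actually prove Remez's reduction to the case where $E$ is an interval at the far end of $J$.
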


\begin{lemma}[Bernstein Approximation Theorem, Lipschitz Case~\citep{Lorentz1986}]
    \label{thm:berstein} Let $f: [0,1] \to \sR$ be a Lipschitz continuous function
    with Lipschitz constant $\lipschitz$. Let $B_{n}f$ denote the degree-$n$
    Bernstein polynomail (see Appendix~\ref{app:lipschitz} for definition)
    approximation of $f$. Then, for all $n\geq 1$,
    \begin{equation}
        \sup_{x\in[0,1]}\abs{B_n f(x) - f(x)}\leq \frac{\lipschitz}{2\sqrt{n}}.
    \end{equation}
\end{lemma}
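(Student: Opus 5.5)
The plan is the standard probabilistic argument for Bernstein polynomials. Recall that $B_n f(x) = \sum_{k=0}^{n} f(k/n)\, b_{n,k}(x)$ with $b_{n,k}(x) = \binom{n}{k} x^k (1-x)^{n-k}$. Fix $x \in [0,1]$ and view the weights $b_{n,k}(x)$ as the probability mass function of a binomial random variable $S_n \sim \mathrm{Bin}(n,x)$. Then $B_n f(x) = \E\bigl[f(S_n/n)\bigr]$.

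First I will use the partition of unity $\sum_{k} b_{n,k}(x) = 1$, which is the binomial theorem applied to $(x + (1-x))^n$. This lets me write
\begin{equation}
    B_n f(x) - f(x) = \sum_{k=0}^{n} \bigl(f(k/n) - f(x)\bigr)\, b_{n,k}(x).
\end{equation}
Applying the triangle inequality and the Lipschitz condition $\abs{f(k/n) - f(x)} \le \lipschitz \abs{k/n - x}$ termwise gives
\begin{equation}
    \abs{B_n f(x) - f(x)} \le \lipschitz \sum_{k=0}^{n} \abs{k/n - x}\, b_{n,k}(x) = \lipschitz\, \E\abs{S_n/n - x}.
\end{equation}

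Next I will control this first absolute moment by the second moment, using Cauchy--Schwarz (equivalently Jensen): $\E\abs{S_n/n - x} \le \bigl(\E (S_n/n - x)^2\bigr)^{1/2}$. Since $\E[S_n/n] = x$, the second moment is the variance of $S_n/n$, namely $x(1-x)/n$. If one prefers to avoid probabilistic language, the same identity follows from the standard moment sums $\sum_k k\, b_{n,k} = nx$ and $\sum_k k(k-1)\, b_{n,k} = n(n-1)x^2$, each obtained by differentiating the binomial expansion.

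Finally, $x(1-x) \le 1/4$ on $[0,1]$, so $\abs{B_n f(x) - f(x)} \le \lipschitz \sqrt{x(1-x)/n} \le \lipschitz/(2\sqrt{n})$. Because this bound does not depend on $x$, taking the supremum over $x \in [0,1]$ yields the claim for every $n \ge 1$.

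There is no real obstacle here. The only step needing care is the variance computation, which I expect to handle via the binomial moment identities above. The key design choice is to pass through the second moment, rather than bounding $\E\abs{S_n/n - x}$ directly, since this is what produces the sharp constant $1/2$.
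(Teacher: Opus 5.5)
Your proof is correct and is the standard argument the paper relies on. The paper cites \citet{Lorentz1986} for this lemma, and in the multivariate analogue it does prove (the Bernstein step in the proof of \cref{thm:decay_6}) it follows your route: writing $B_n f(x)=\E[f(S/n)]$ for a binomial $S$, applying the Lipschitz bound and Jensen, and using $\mathrm{Var}(S/n)=x(1-x)/n\le 1/(4n)$.

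One small correction to your closing remark: you pass through the second moment because the variance is explicitly computable, not because it gives a sharper constant. Bounding $\E\abs{S_n/n-x}$ directly gives a strictly smaller bound for every $n\ge 2$. The constant $1/2$ is nonetheless best possible, since it is attained at $n=1$, $x=1/2$, $f(t)=\lipschitz\abs{t-1/2}$.
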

\vspace{-0.8em}
\subsection{Lipschitz Generalization Bound}
\label{sec:result_lipschitz} We now combine the above ingredients to characterize
how effectively in-context demonstrations identify the target task and control generalization
to unseen prompts.
\begin{theorem}[(Informal) ICL Generalization Bound]
    \label{thm:lipschitz} For any test prompt $\vz \in \dicl \setminus \dpre$ that
    is unseen during pretraining, the ICL loss satisfies
    \begin{equation}
        \label{eq:lipschitz}\error(\vz) \leq O(\kappa^{\lipschitz^{2}})\sup_{\vz'
        \in \dpre}\error(\vz '),
    \end{equation}
    where $\lipschitz$ is an effective Lipschitz parameter determined by the
    smallest Lipschitz constant of $\error$ along straight-line paths connecting
    the test prompt $\vz$ to pretraining prompts $\vz' \in \dpre$, and $\kappa >
    1$ quantifies the degree of distribution shift.
\end{theorem}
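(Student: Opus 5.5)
The plan is to reduce to a one-dimensional problem along a segment and chain \cref{thm:berstein} with \cref{thm:remez}. Fix the test prompt $\vz \in \dicl\setminus\dpre$ and a pretraining prompt $\vz'\in\dpre$. After embedding prompts in a common Euclidean space (padding to a fixed length, as justified in the padding appendix), parametrize the straight segment $\gamma(t) = (1-t)\vz'' + t\vz$, $t\in[0,1]$, where $\vz''$ is a point chosen so that a whole set of positive measure of the segment lies in (the embedded image of) $\dpre$. Define $f(t) := \error(\gamma(t))$. Assume $f$ is $\lipschitz$-Lipschitz along the segment, and choose the segment minimizing this constant; this is the effective $\lipschitz$ in the statement. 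Let $E := \{t\in[0,1] : \gamma(t)\in\dpre\}$. Define the shift parameter $\kappa$ through $4\,\mes([0,1])/\mes(E) = 4/\mes(E)$, i.e.\ $\kappa = 4/\mes(E) > 1$. A small $E$ means the test prompt lies far from the pretraining data along the path.

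\textbf{Step 1 (polynomial surrogate).} Apply \cref{thm:berstein} to $f$ with degree $n$, which gives $\sup_{[0,1]}|B_nf - f| \le \lipschitz/(2\sqrt n)$.

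\textbf{Step 2 (extrapolation).} Apply \cref{thm:remez} to $p = B_nf$ on $J=[0,1]$ with the set $E$:
\begin{equation}
|B_nf(1)| \le \kappa^{n}\sup_{t\in E}|B_nf(t)| \le \kappa^{n}\Big(\sup_{\vz'\in\dpre}\error(\vz') + \tfrac{\lipschitz}{2\sqrt n}\Big).
\end{equation}
Hence
\begin{equation}
\error(\vz)=f(1)\le \kappa^{n}\sup_{\dpre}\error + (\kappa^{n}+1)\frac{\lipschitz}{2\sqrt n}.
\end{equation}

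\textbf{Step 3 (choice of degree).} Choose $n \asymp \lipschitz^{2}$, or more precisely $n \ge \lipschitz^2/(4\epsilon^2)$ for a tolerance $\epsilon$. For the Bernstein error term to be absorbed into the pretraining loss, take $\epsilon$ comparable to $\sup_{\dpre}\error$, with a constant floor if that supremum is tiny. This yields $\error(\vz) \le O(\kappa^{\lipschitz^2})\sup_{\vz'\in\dpre}\error(\vz')$. Finally, minimize over admissible segments to obtain the smallest Lipschitz constant.

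\textbf{Main obstacle.} The difficulty is making Step 3 rigorous, so that the additive term $\lipschitz/(2\sqrt n)$ is genuinely absorbed multiplicatively. This likely requires an extra normalization assumption, e.g.\ a lower bound on $\sup_{\dpre}\error$ relative to $\lipschitz$, or stating the formal version with an additive slack. A second technicality is that $\gV^*$ is finite, so the segment must pass through a continuous relaxation (token embeddings). Both $\error$ and $\dpre$ must then be extended so that $E$ has positive Lebesgue measure, which is exactly what makes $\kappa$ finite.
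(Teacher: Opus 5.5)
Your proposal follows the paper's proof essentially step for step: straight-line path, Bernstein surrogate with error $\lipschitz/(2\sqrt{n})$, and Remez on $[0,1]$ with $A_\gamma = 4\,\mes(J_\gamma)/\mes(E_\gamma)$ playing the role of your $\kappa$. This gives $\error(\vz)\le A_\gamma^{n}\sup_{\dpre}\error + (1+A_\gamma^{n})\,l_\gamma/(2\sqrt{n})$, after which the paper takes $n=\lceil l_\gamma^2\rceil$. As you suspected, the paper never absorbs the additive term: its formal version (Theorem~\ref{thm:lipschitz_formal}) keeps the slack $(1+A_\gamma^{\lceil l_\gamma^2\rceil})/2$. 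So drop your Step~3 line about taking $\epsilon\asymp\sup_{\dpre}\error$, which would put $\lipschitz^2/(\sup_{\dpre}\error)^2$ in the exponent, while a constant floor leaves an unabsorbed additive term. The purely multiplicative form holds only under the lower bound on $\sup_{\dpre}\error$ that you already identified.
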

\begin{proof}[Proof Sketch]
    The key idea of this proof is to use Remez-Chebyshev inequality (Lemma~\ref{thm:remez}).
    The main difficulty is that the loss $\error(\cdot)$ is \emph{not} a polynomial
    and is defined over the prompt space rather than a one-dimensional interval.
    Here we introduce two reductions before applying Lemma~\ref{thm:remez}.

    \begin{enumerate}[leftmargin=*, topsep=0pt, itemsep=0pt, parsep=0pt]
        \item \textit{Step 1: Reduce the prompt space to a finite interval via a
            path.} Fix an unseen test prompt $\vz \notin \dpre$ and pick any
            $\vz' \in \dpre$. Consider the straight-line path $\gamma:[0,1]\to \gV
            ^{*}$,
            \[
                \gamma(t)=\vz' + t(\vz-\vz'),
            \]
            so that $\gamma(0)=\vz'$ and $\gamma(1)=\vz$. Restricting the loss to
            this path yields a function on a compact interval,
            \[
                \error^{\gamma}(t) \triangleq \error(\gamma(t)), \qquad t\in[0,1]
                .
            \]
            This step converts the domain requirement of Remez--Chebyshev (a
            finite interval) into our setting.

        \item \textit{Step 2: Convert a Lipschitz function into a polynomial via
            Bernstein approximation.} Remez--Chebyshev applies to polynomials,
            so we approximate the (generally non-polynomial) function
            $\error^{\gamma}$ by a degree-$n$ Bernstein polynomial
            $B_{n}\error^{\gamma}$. Since $\error^{\gamma}$ is Lipschitz on $[0,
            1]$ with constant denoted by $\lipschitz_{\gamma}$, applying Lemma~ \ref{thm:berstein}
            gives
            \begin{equation}
                \label{eq:bnerror}\sup_{t\in[0,1]}\bigl|B_{n}\error^{\gamma}(t)-
                \error^{\gamma}(t)\bigr| \le \frac{\lipschitz_{\gamma}}{2\sqrt{n}}
                .
            \end{equation}
            This is the key trick that allows us to manufacture a polynomial while
            controlling the approximation error explicitly.

        \item \textit{Step 3: Apply Remez-Chebyshev to the polynomial surrogate.}
            We now apply Lemma~\ref{thm:remez} to the polynomial
            $B_{n}\error^{\gamma}$ on the interval $[0,1]$, with any subset $E\in
            [0,1]$ satisfying
            \[
                E \supset \{t\in[0,1] : \gamma(t)\in\dpre\},
            \]
            and has positive measure. This yields
            \begin{equation}
                \label{eq:bnbound}\max_{t\in[0,1]}|B_{n}\error^{\gamma}(t)| \le C
                _{\gamma}^{n}\, \sup_{t\in E}|B_{n}\error^{\gamma}(t)|
            \end{equation}
            for a universal constant $C_{\gamma}>1$ (equivalently,
            $C_{\gamma}= 4\,\mes([0,1])/\mes(E)$ in the statement of Lemma~\ref{thm:remez}).

        \item \textit{Step 4: Transfer the bound back to $\error$ and choose $n$.}
            Combining the \cref{eq:bnerror} and \cref{eq:bnbound} yields
            \[
                \max_{t\in[0,1]}\error^{\gamma}(t) \le C_{\gamma}^{n}\sup_{t\in E}
                \error^{\gamma}(t) + O\left(\frac{\lipschitz_{\gamma}}{\sqrt{n}}\right
                ).
            \]
            Finally, evaluating at $t=1$ gives
            the desired bound for $\error(\vz)=\error^{\gamma}(1)$, and taking a
            optimal $\gamma$ by selecting over $\vz'\in\dpre$ completes the
            proof. Complete details are provided in Appendix~\ref{app:lipschitz}.
    \end{enumerate}
\end{proof}

\begin{remark}
    Our analysis connects a test query $\vz \in \dicl$ to a pretraining prompt $\vz
    ' \in \dpre$, which implicitly assumes compatible sequence lengths. In practice,
    pretraining sequences are often longer than ICL queries. We therefore allow $\vz$
    to be extended by \emph{virtual padding} (also used in \cref{thm:cot_bound} and
    \cref{subsec:instruction_setup}); a formal justification is given in Appendix~\ref{app:padding_general}.
    Moreover, our setup focuses on greedy decoding (temperature $=0$), where the
    model output is a single token. More general settings that account for distributional
    outputs are deferred to Section~\ref{sec:instruction}.
\end{remark}

\cref{thm:lipschitz} suggests that ICL performance on an unseen prompt is
governed by three factors: (i) the model's intrinsic ICL capability, captured by
$\sup_{\vz' \in \dpre}|\error(\vz')|$; (ii) an effective Lipschitz parameter $\lipschitz$
that quantifies how stably the demonstrations identify the underlying task; and
(iii) prompt distribution, characterized by $\kappa$.
expansion. The first factor is straightforward: even if the demonstrations
perfectly specify the task, a model with weak intrinsic ICL ability (interpreted
as \emph{large pretraining error}) may still incur large loss. The second factor
is more subtle and reflects \emph{task ambiguity} induced by the demonstrations.
The third factor captures how far the test prompt deviates from pretraining prompts.

Intuitively, $\lipschitz$ measures how rapidly the loss can change as we move
from the test prompt toward the pretraining prompts. When the demonstrations do not
uniquely pin down a task, small perturbations of the prompt can switch the most
plausible task interpretation, which in turn can lead to a large change in the model's
predicted output distribution and the resulting loss. In this regime, the loss landscape
is ``steep'' along prompt-space paths, corresponding to a large effective
Lipschitz parameter.

A concrete example illustrates this phenomenon. Consider the prompt
\[
    \texttt{Japan}\to\texttt{Tokyo},\ \texttt{France}\to\texttt{Paris},\ \texttt{
     USA}\to ?
\]
This context is compatible with multiple latent task hypotheses, such as mapping
countries to their capitals or mapping countries to their largest cities. Under different
hypotheses, the appropriate continuation could be either \texttt{Washington DC} or
\texttt{New York}. As a result, small perturbations of the prompt—such as
modifying or replacing a single demonstration—can shift which hypothesis best
explains the context, leading to a large change in the model’s continuation distribution
for \texttt{USA}$\to$. This behavior corresponds to a large effective Lipschitz
$\lipschitz$: the loss can vary sharply because the demonstrations fail to robustly
resolve task ambiguity.

In contrast, consider the prompt
\[
    \footnotesize \texttt{Australia}\to\texttt{Sydney},\ \texttt{Turkey}\to\texttt
    {Istanbul},\ \texttt{USA}\to ?
\]
In this case, the demonstrations more strongly favor a single underlying task interpretation,
such as mapping countries to their largest cities, while disfavoring alternative
hypotheses (e.g., mapping to capitals). As a result, the task identity is more clearly
specified, and small variations of the prompt are unlikely to change the
inferred task.

This corresponds to a smaller effective Lipschitz $\lipschitz$, indicating that
the demonstrations provide a more stable alignment between the unseen prompt and
a coherent task seen during pretraining.
\begin{remark}
This result shows that pretraining equips the model with an intrinsic ability to generalize from observed tasks to related tasks, thereby inducing a structured task space beyond those explicitly seen during training. 
\end{remark}

In \cref{sec:cot}, we will show how this Lipschitz parameter governs when CoT can improve
ICL. 
In \cref{sec:instruction}, we will further show that Lipschitz continuity characterizes
the magnitude of variation between different prompt templates. A small Lipschitz
guarantees uniformly small variation of function value along the path, and hence
strong generalization. This distinction explains why certain CoT constructions
or prompt formats may succeed or fail to generalize reliably across prompts.

\section{CoT in ICL: A Lipschitz-Based Perspective}
\label{sec:cot}

\subsection{Problem Setup}

Chain-of-thought (CoT) prompting~\citep{Wei2022} is a powerful technique for eliciting
multi-step reasoning in large language models by including intermediate reasoning
steps in the demonstrations. In this section, we analyze how CoT prompting influences
ICL performance. To this end, we extend the notation introduced in \cref{subsec:general_formulation}
to account for settings in which intermediate steps are explicitly provided.

We assume that the target task can be decomposed into $K$ sequential subtasks.
Accordingly, we represent the input as $x = x^{(0)}$, the final output as
$y = x^{(K)}$, and the intermediate reasoning steps as $x^{(1)}, \ldots, x^{(K-1)}$.
Each data sample is thus denoted by the sequence $(x^{(0)}, \ldots, x^{(K)})$.

Under this formulation, the test prompt without generating any new intermediate
steps is
\[
    \vz^{(0)}= \vz = (x^{(0)}_{1}, \ldots, x^{(K)}_{1}, \ldots, x^{(0)}_{n}, \ldots
    , x^{(K)}_{n}, \xquery^{(0)}),
\]
and the prompt augmented with the first $k=1,\ldots,K-1$ intermediate steps is
denoted by
\[
    \vz^{(k)}= (x^{(0)}_{1}, \ldots, x^{(K)}_{1}, \ldots, x^{(0)}_{n}, \ldots, x^{(K)}
    _{n}, \xquery^{(0)}, \ldots, \xquery^{(k)}).
\]

Accordingly, for each step $k \in{0,\ldots,K-1}$, let $\dpre^{(k)}$ denote the set
of pretraining prompts that contain $n$ demonstrations and $k$ intermediate steps.

\subsection{ICL Generalization Bound in the Presence of CoT}
\label{subsec:cot_bound}

\begin{theorem}[(Informal) ICL Generalization Bound with CoT Prompting]
    \label{thm:cot_bound} For a test ICL prompt $\vz$ with CoT demonstrations consisting
    of $K$ steps, the ICL loss satisfies
    \begin{equation}
        \error (\vz) \leq \sum_{k=0}^{K-1}O\pt{\exp(\lipschitz^2_{(k)})}\cdot \sup
        _{\vz\ith{k} \in \dpre\ith{k}}\error(\vz\ith{k}),
    \end{equation}
    where $\lipschitz_{(k)}$ denotes the effective Lipschitz parameter
    associated with the $k$-th subtask.
\end{theorem}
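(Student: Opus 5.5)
The plan is to reduce \cref{thm:cot_bound} to repeated applications of \cref{thm:lipschitz}, one per reasoning step, glued together by a telescoping decomposition of the final-answer loss. The chain $\vz^{(0)}\to\vz^{(1)}\to\cdots\to\vz^{(K-1)}$ is generated autoregressively. At step $k$ the model $\model$ produces $\hat x^{(k+1)}_{\text{query}}$ from the current prompt, while the target model $\bestmodel$ produces the correct $x^{(k+1)}_{\text{query}}$. I would define the per-step ICL loss $\error(\vz^{(k)}) = \abs{\model(\vz^{(k)})-\bestmodel(\vz^{(k)})}$ on prompts containing the \emph{correct} intermediate steps. Then I would write the final error as a telescoping sum. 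Let $\hat\vz^{(k)}$ denote the prompt with model-generated intermediate steps. Each increment $\model(\hat\vz^{(k)}) - \model(\vz^{(k)})$ would be controlled by the Lipschitz continuity of the model/loss along the straight-line path between $\hat\vz^{(k)}$ and $\vz^{(k)}$; the two prompts differ only in the previously generated steps. This should give a recursion of the form
\[
e_{k+1}\le \error(\vz^{(k)}) + \lipschitz'_{(k)}\, e_k ,
\]
where $e_k$ is the accumulated discrepancy in the generated steps. Unrolling it expresses $\error(\vz)$ as $\sum_k c_k\,\error(\vz^{(k)})$, with $c_k$ a product of step-wise Lipschitz factors.

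Second, I would bound each $\error(\vz^{(k)})$ by invoking \cref{thm:lipschitz} with $\dpre$ replaced by $\dpre^{(k)}$. The straight-line path now runs from $\vz^{(k)}$ to some $\vz'^{(k)}\in\dpre^{(k)}$, using virtual padding (Appendix~\ref{app:padding_general}) to align lengths. This gives $\error(\vz^{(k)})\le O(\kappa_k^{\lipschitz_{(k)}^2})\sup_{\dpre^{(k)}}\error$. To reach the stated $\exp(\lipschitz_{(k)}^2)$ form, I would revisit the optimization of the Bernstein degree $n$ in Step~4 of the proof of \cref{thm:lipschitz}. Choosing $n\asymp \lipschitz_{(k)}^2/\varepsilon^2$ makes the Remez factor $C_\gamma^{n}=\exp(n\log C_\gamma)$, i.e.\ $\exp(O(\lipschitz_{(k)}^2))$ with $\log C_\gamma$ absorbed into the constant. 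The additive Bernstein error $\lipschitz/(2\sqrt n)$ would also have to be absorbed, for instance relative to the supremum term or by redefining it into the $O(\cdot)$. Finally, I would fold the propagation factors $c_k$ from the first step into the effective parameter $\lipschitz_{(k)}$, so that each summand has the claimed form.

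The main obstacle is the error-propagation step. The Lipschitz factors $c_k$ compound multiplicatively across steps, and they must be absorbed into a single per-subtask $\exp(\lipschitz_{(k)}^2)$ without introducing cross-step products that break the additive form. My first attempt would be to define $\lipschitz_{(k)}$ as the Lipschitz constant along a path that simultaneously moves the subtask-$k$ demonstrations and query toward pretraining, so the downstream amplification is built into the path's Lipschitz constant. If that fails, I would accept constants $\prod_{j>k}(1+\lipschitz_{(j)})\le \exp(\sum_j \lipschitz_{(j)})$ and absorb them into the $O(\cdot)$, since the statement is informal.
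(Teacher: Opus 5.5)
Your argument is sound, but it runs the telescoping in the mirror-image direction from the paper's.

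\textbf{The paper's route.} In \cref{thm:cot_bound_formal} the paper evaluates the per-step losses $\error(\vz^{(k)})$ on the model's own rollout $\vz^{(k+1)}=f(\vz^{(k)},\model(\vz^{(k)}))$ and compares it with an oracle rollout $\vz^{*(k)}$. It simply assumes the propagation inequality $\Delta_{k+1}\le\beta_k\error(\vz^{(k)})+\alpha_k\Delta_k$, and closes the argument using Lipschitz continuity of the target $\bestmodel$ only. The coefficients $L^\star_{K-1}\prod_j\alpha_j\,\beta_m C_m$ are then hidden in the $O(\cdot)$. That is exactly your fallback; your first idea of building downstream amplification into the path's Lipschitz constant is unnecessary.

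\textbf{Your route.} You evaluate the losses at correct-step prompts and derive the recursion from a Lipschitz bound on the learned model:
$e_{k+1}\le e_k+\abs{\model(\hat\vz^{(k)})-\bestmodel(\vz^{(k)})}\le\error(\vz^{(k)})+(1+L_{\model})e_k$.
Be precise that this needs $\model$ itself to be Lipschitz. Alternatively, you can have both $\error$ and $\bestmodel$ Lipschitz, via $\abs{\model(\hat\vz)-\bestmodel(\vz)}\le\error(\vz)+(L_{\error}+L^\star)\norm{\hat\vz-\vz}$. Lipschitzness of the loss alone does not bound $\model(\hat\vz^{(k)})-\model(\vz^{(k)})$.

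\textbf{What each route buys.} The paper's version needs weaker regularity, since only the target must be smooth, and it leaves propagation abstract. Yours buys two things:
\begin{enumerate}
\item The per-step losses sit on correct-step prompts. These are the natural objects to connect to $\dpre^{(k)}$ via \cref{thm:lipschitz}, since pretraining CoT prompts contain correct steps.
\item You bound the genuine end-to-end error by $\sum_{k=0}^{K-1}c_k\,\error(\vz^{(k)})$ with no residual, matching the summation range of the informal statement. The paper's triangle inequality runs the other way: its left-hand side is the last-step local loss, and it carries $R_{K-1}=\abs{\model(\vz^{(K-1)})-\bestmodel(\vz^{*(K-1)})}$ on the right-hand side, which is itself the end-to-end error.
\end{enumerate}

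\textbf{The $\exp(\lipschitz_{(k)}^2)$ conversion.} Your worry here is shared by the paper, and the paper does not resolve it. Its per-step lemma just asserts a multiplicative bound $C_k e^{\lipschitz_{(k)}^2}\sup_{\dpre^{(k)}}\error$. Yet the formal single-step bound carries an additive $(1+A_\gamma^{n})/2$ term, and it gives $A_\gamma^{\lceil l_\gamma^2\rceil}=\exp(\lceil l_\gamma^2\rceil\log A_\gamma)$. Reaching $O(\exp(\lipschitz_{(k)}^2))$ therefore requires folding $\log A_\gamma$ into the definition of $\lipschitz_{(k)}$, not into an outer constant.
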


\cref{thm:cot_bound} formalizes the role of CoT in ICL by decomposing its effect
into two theoretically grounded components. The bound separates the contribution
of the model’s intrinsic ICL capability at each intermediate step, measured by
$\sup_{\vz^{(k)} \in \dpre^{(k)}}\error(\vz^{(k)})$, from the contribution of the
effective Lipschitz parameters $\lipschitz_{(k)}$ induced by the task decomposition.
In practice, modern pretrained models often already exhibit strong intrinsic CoT
capability, so that $\sup_{\vz^{(k)} \in \dpre^{(k)}}\error( \vz^{(k)})$ is
comparable to the no-CoT case $\sup_{\vz \in \dpre}\error(\vz)$. Under this regime,
the theory predicts that the primary benefit of CoT arises from its ability to
reduce the effective Lipschitz parameters of the induced subtasks. As discussed in
\cref{sec:result_lipschitz}, these parameters characterize how sensitively the
ICL loss varies along prompt-space paths. With CoT, $\lipschitz_{(k)}$ depends
on whether each subtask is well aligned with the pretrained distribution and
whether the demonstrations at that step stably identify the subtask. This explains
how CoT improves generalization: it succeeds when it reshapes a complex task
into subtasks that are both familiar to the model and more specified by the
demonstrations. Complete proof is provided in Appendix~\ref{app:cot_bound_proof}.
\begin{remark}
This result further shows that the pretrained model can extend beyond individual tasks to their compositions (see \cref{subsec:experiment_cot}). When a complex task can be decomposed into subtasks that are well learned during pretraining, the model can generalize to such compositional tasks through multi-step reasoning.
\end{remark}
\vspace{-0.4em}
\section{Synergistic Effects of Demonstration Count and Prompt Templates in ICL}
\label{sec:instruction} In this section, we study how the number of demonstrations
and the choice of prompt templates jointly affect ICL.

\subsection{Problem Setup}
\label{subsec:instruction_setup} We consider ICL prompts with varying numbers of
demonstrations $N$ and varying prompt templates, where each prompt template is
instantiated as an instruction $\instruction \in \gV^{*}$.

\paragraph{Latent Task Model.}
We assume data are generated from a latent task variable $t \in \gT$, where
$\gT$ is a finite set of tasks. Each task $t$ induces a task-conditional data-generating
process. Let $P(\cdot)$ to denote probability distributions and $p(\cdot)$ for
the corresponding probability mass or density functions. Conditioned on $t$,
inputs and outputs are generated as
\begin{align}
    \rx \sim P_{\rx \mid t}, \qquad \ry = f_{t}(\rx),
\end{align}
where $f_{t}: \gV^{*}\to \gV^{*}$ is a (possibly unknown) task-specific input-output
mapping. We assume a strictly positive prior over tasks,
\[
    \pi(t) := \sP(\rt = t) > 0 \qquad \text{for all }t \in \gT.
\]

\vspace{-0.3em}
\paragraph{Instructions.}
A single task may admit multiple valid instructions. We model instructions as
random variables drawn from a task-conditional distribution
$\rinstruction \sim P_{\rinstruction \mid t}.$ For a target task $t^{*}\in \gT$,
we say that an instruction $\instruction$ is \emph{correct} for $t^{*}$ if
\begin{equation}
    p(\instruction \mid t^{*}) > p(\instruction \mid t) \quad \text{for all }t \in
    \gT \setminus \{t^{*}\}.
\end{equation}
Otherwise, the instruction is said to be \emph{incorrect} for $t^{*}$.
\vspace{-0.5em}
\paragraph{Prompt Formats.}
An ICL prompt consists of demonstrations followed by a query. Depending on how instructions
appear in the prompt, we consider the following formats.
\begin{enumerate}[leftmargin=*, topsep=0pt, itemsep=1pt]
    \item Single correct instruction (appears once).
        \[
            \vz = (\instruction, (x_{1},y_{1}), \ldots, (x_{N},y_{N}), \xquery).
        \]
        Example: \emph{Please add the two numbers.} $1,2\to3; 4,5\to9; 6,7\to?$.

    \item Single incorrect instruction (appears once or absent).
        \[
            \vz = (\tilde{\instruction}, (x_{1},y_{1}), \ldots, (x_{N},y_{N}), \xquery
            ),
        \]
        where $\tilde{\instruction}$ is input $i$ or a virtual token
        $\langle\text{pad}\rangle$ .

        Example: \emph{Please subtract the two numbers.} $1,2\to 3 ; 4,5\to9; 6,7
        \to ?$.

    \item Repeated correct instruction (shared).
        \[
            \vz = ((\instruction,x_{1},y_{1}), \ldots, (\instruction,x_{N},y_{N})
            , (\instruction,\xquery)).
        \]
        Example: $1{+}2\to3; 4{+}5\to9; 6{+}7\to?$.

    \item Repeated incorrect instruction (shared).
        \[
            \vz = ((\instruction,x_{1},y_{1}), \ldots, (\instruction,x_{N},y_{N})
            , (\instruction,\xquery)).
        \]
        Example: $1{-}2\to3; 4{-}5\to9; 6{-}7\to?$.

    \item Varying correct instructions (one per demonstration).
        \[
            \vz = ((\instruction_{1},x_{1},y_{1}), \ldots, (\instruction_{N},x_{N}
            ,y_{N}), (\iquery,\xquery)).
        \]
        Example: $1{+}2\to3; \text{sum}(4,5)\to9; 6\ \text{plus}\ 7 \to?$.

    \item Varying incorrect instructions (one per demonstration).
        \[
            \vz = ((\instruction_{1},x_{1},y_{1}), \ldots, (\instruction_{N},x_{N}
            ,y_{N}), (\iquery,\xquery)).
        \]
        Example: $1{-}2\to3; \text{product}(4,5)\to9; 6\ \text{mod}\ 7 \to?$.
\end{enumerate}
For notational convenience, we use $\vi$ to denote the collection of all
instructions appearing in a prompt, regardless of the specific prompt format.
Unless otherwise specified, we assume that input-output pairs $(\rx,\ry)$ are drawn
i.i.d., and that instructions $\rvi$ are drawn i.i.d. whenever more than one
instruction appears in the prompt, independently of the input-output pairs $(\rx,
\ry)$.

\paragraph{Pretraining via cross-entropy recovers the Bayesian predictor.}
Under the latent-task framework introduced above, pretraining can be viewed as learning
a conditional predictor $q(\cdot \mid \vz)$ by minimizing the empirical cross-entropy
loss over a dataset $\{(\vz^{(m)}, y^{(m)})\}_{m=1}^{M}$:
\begin{equation}
    q^{*}(\vz)\in \arg\min_{q(\cdot \mid \vz)}\frac{1}{M}\sum_{m=1}^{M}\Big[-\log
    q\big (y^{(m)}\mid \vz^{(m)}\big)\Big]. \label{eq:ce}
\end{equation}
In the population limit $M \to \infty$, this objective converges to minimizing the
expected log-loss $\mathbb{E}_{(\vz,y)}\left[-\log q(y \mid \vz)\right]$ over all
conditional predictors $q(\cdot \mid \vz)$. It is well known that the unique
minimizer of this objective is the Bayesian posterior-predictive distribution,
\begin{equation}
    p(\yquery \mid \vz) = \sum_{t \in \gT}p(\yquery \mid \xquery, t)\, p(t \mid \vz
    ), \label{eq:posterior_predictive}
\end{equation}
which is Bayes-optimal under the log-loss. Accordingly, throughout our analysis we
assume that idealized pretraining recovers this Bayesian predictor at the level of
conditional distributions, up to standard approximation error.

\subsection{When Sensitivity to Prompt Templates Vanishes with Demonstrations}
To analyze how the influence of prompt templates evolves as the number of demonstrations
increases, we require a mild identifiability condition on the latent tasks. This
assumption ensures that different tasks induce distinguishable data-generating distributions,
allowing the model to infer the underlying task from sufficiently many
demonstrations.
\begin{assumption}
    [Identifiability of Tasks]\label[assumption]{assp:identifiability} For any
    two distinct tasks $t_{1}, t_{2}\in \gT$ with $t_{1}\neq t_{2}$, the corresponding
    task-conditional joint distributions are strictly distinguishable, in the
    sense that $\kl\pt{P_{\rx, \ry \mid t_1} \,\Vert\, P_{\rx, \ry \mid t_2}}
    > 0.$
\end{assumption}
\begin{assumption}
    [Bounded Instruction Sensitivity]\label[assumption]{assp:bounded-instruction}
    There exists a constant $L_{\vi}>0$ such that for all $t \in \gT$, $\|\nabla_{\vi}
    \log p(\vi \mid t)\| \le L_{\vi}$.
\end{assumption}
Based on this assumption, we establish the following result, which shows that the
sensitivity of the model output to prompt templates decays exponentially with the
number of demonstrations for prompt formats 1-5.

\begin{theorem}[Exponential Convergence of Posterior Predictive (Formats 1--5)]
    \label{thm:decay_1_5} Consider prompt formats 1--5. Under
    \cref{assp:identifiability,assp:bounded-instruction}, there exist constants $\alpha
    ,\beta>0$ such that, with high probability, for all sufficiently large numbers
    of demonstrations $N$,
    \begin{align}
        \bigl| p(\yquery \mid \vz) - p(\yquery \mid \xquery, t^{*}) \bigr| \le \beta \exp(-\alpha N).
    \end{align}
    Equivalently, for any two instructions $\vi,\vi'$ (with all non-instruction
    parts of $\vz$ fixed), we have
    \begin{align}
        \bigl| p(\yquery \mid \vz(\vi)) - p(\yquery \mid \vz(\vi')) \bigr| \le 2\beta \exp(-\alpha N).
    \end{align}
\end{theorem}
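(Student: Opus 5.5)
Because the predictor is the posterior-predictive \eqref{eq:posterior_predictive}, the proof reduces to showing that the posterior mass on wrong tasks decays exponentially. Since $p(\yquery \mid \xquery, t)\in[0,1]$,
\[
\bigl|p(\yquery\mid\vz)-p(\yquery\mid\xquery,t^*)\bigr| \le \sum_{t\neq t^*} p(t\mid\vz)\,\bigl|p(\yquery\mid\xquery,t)-p(\yquery\mid\xquery,t^*)\bigr| \le \sum_{t\ne t^*}\frac{p(t\mid\vz)}{p(t^*\mid\vz)}.
\]
It therefore suffices to bound each posterior odds ratio $p(t\mid\vz)/p(t^*\mid\vz)$ by $\beta'\exp(-\alpha N)$ and then take $\beta=(|\gT|-1)\beta'$. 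The second display of the theorem follows from the first by the triangle inequality, since $p(\yquery\mid\xquery,t^*)$ does not depend on $\vi$.

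By Bayes' rule and the independence of instructions from the input-output pairs, the log odds factor as
\[
\log\frac{p(t\mid\vz)}{p(t^*\mid\vz)} = \log\frac{\pi(t)}{\pi(t^*)} + \log\frac{p(\vi\mid t)}{p(\vi\mid t^*)} + \sum_{j=1}^N \log\frac{p(x_j,y_j\mid t)}{p(x_j,y_j\mid t^*)}.
\]
The first term is a constant because $\pi>0$ and $\gT$ is finite. The demonstrations are i.i.d.\ from $P_{\rx,\ry\mid t^*}$, so the last sum has mean $-N\,\kl(P_{\rx,\ry\mid t^*}\Vert P_{\rx,\ry\mid t})\le -N\delta$, where $\delta:=\min_{t\ne t^*}\kl(P_{\rx,\ry\mid t^*}\Vert P_{\rx,\ry\mid t})>0$ by \cref{assp:identifiability} and finiteness of $\gT$. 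Because $\gV^*$ is finite, the log-likelihood ratios are bounded wherever $P_{\rx,\ry\mid t^*}$ has support, with the convention that a pair outside the support of $t$ only helps, sending the odds to $0$. Hoeffding's inequality then shows that, with probability at least $1-|\gT|e^{-cN}$, every such sum is at most $-N\delta/2$.

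The instruction term is where the formats differ. In formats 1 and 2 there is a single instruction (or padding token), so the term is one fixed quantity. Its size can be bounded via \cref{assp:bounded-instruction}: integrate $\nabla_{\vi}\log p(\vi\mid t)$ along a straight path from a reference instruction, using the virtual padding of Appendix~\ref{app:padding_general}. This yields a constant $C_{\vi}$ depending on $L_{\vi}$ and the diameter of the instruction space, and no correctness of the instruction is needed. In formats 3 and 4 the same instruction repeats, and I treat the shared instruction as one random variable, so again only a single factor $p(\vi\mid t)$ enters. Format 5 is the delicate case, because $N{+}1$ i.i.d.\ instructions contribute a sum that grows with $N$. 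There, correctness of each instruction, $p(\instruction_j\mid t^*)>p(\instruction_j\mid t)$, makes every summand $\log\frac{p(\instruction_j\mid t)}{p(\instruction_j\mid t^*)}$ strictly negative, so the sum only reinforces the decay. (This is exactly what fails in format 6, where the summands can be positive and of order $N$.)

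Combining the three terms, the log odds are at most $C - N\delta/2$ with high probability. This gives the claim with $\alpha=\delta/2$ and $\beta'=e^{C}$, valid for $N$ large enough that the concentration event holds.

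I expect the main obstacle to be making the instruction term uniformly $O(1)$ in formats 1--4. Bounding the gradient alone does not bound $\log p(\vi\mid t)-\log p(\vi\mid t^*)$ without a reference point where the two agree. I would anchor at the padding token, assuming $p(\langle\text{pad}\rangle\mid t)$ does not depend on $t$, and use compactness of the (padded) instruction space to bound the path length.
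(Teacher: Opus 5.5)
Your proof is correct and has the same skeleton as the paper's. The predictive gap is bounded by the wrong-task posterior mass, and each odds ratio factors as a non-demonstration term $A_t(\vi)$ times the demonstration likelihood ratio $B_t$. Then $B_t\le e^{-\alpha N}$ with the same $\alpha=\tfrac12\min_{t\neq t^*}\kl(P_{\rx,\ry\mid t^*}\Vert P_{\rx,\ry\mid t})$, $A_t$ is bounded by a constant, and the two-instruction version follows by the triangle inequality. The differences lie in how the two lemmas are proved.

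\emph{Demonstration term.} The paper uses the strong law of large numbers, getting the bound almost surely for $N\ge N_0$ with a random, unquantified $N_0$. Your Hoeffding argument uses that the log-ratios take finitely many values on the finite space $\gV^*$; to cover pairs outside the support of $P_{\rx,\ry\mid t}$, truncate at a large negative level. It gives an explicit failure probability $|\gT|e^{-cN}$, which is a sharper reading of ``with high probability.''

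\emph{Instruction term.} The paper never uses correctness. It fixes an arbitrary reference $\vi_0$ with $p(\vi_0\mid t)>0$, applies the gradient bound along the segment to $\vi$ together with the finite diameter of the instruction domain, and absorbs the finite number $\max_{t\neq t^*}p(\vi_0\mid t)/p(\vi_0\mid t^*)$ into $A_{\max}$. So the obstacle you anticipate does not arise: the anchor needs positive, finite likelihoods, not task-independent ones. Your extra assumption on $p(\langle\text{pad}\rangle\mid t)$ is therefore unnecessary. Indeed, for one fixed instruction the factor is already a finite constant, and the gradient argument only buys uniformity over $\vi$.

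Conversely, your treatment of format 5 is cleaner than the paper's. There the instruction collection has $N{+}1$ components, so a diameter-based constant grows with $N$; compare $A_{\max}(N)=O(e^{2G\sqrt{N}r})$ in the paper's format-6 appendix, which the format-5 proof silently treats as $N$-independent. Correctness instead makes the instruction factor at most $1$ uniformly in $N$.

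One small slip: your log-odds omit the term $\log\bigl(p(\xquery\mid t)/p(\xquery\mid t^*)\bigr)$. Bayes' rule produces it because $\xquery$ is part of $\vz$. It is a fixed finite constant (the paper includes it in $A_t$) and is simply absorbed into your $C$.
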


\begin{corollary}[Gradient Decay of Prompt Sensitivity]
    \label{cor:decay_1_5} Under the conditions of \cref{thm:decay_1_5}, then
    there exist constants $\alpha',\beta'>0$ such that, with high probability,
    for all sufficiently large $N$,
    \begin{align}
        \bigl\|\nabla_{\vi}p(\yquery \mid \vz)\bigr\| \le \beta' \exp(-\alpha' N).
    \end{align}
\end{corollary}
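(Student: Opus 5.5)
We need to bound the gradient of the posterior predictive with respect to the instructions, not just its value. The plan is to differentiate the posterior predictive \eqref{eq:posterior_predictive} directly and show that every term carries a posterior weight that decays exponentially, by the same mechanism as in \cref{thm:decay_1_5}.

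\textbf{Step 1: differentiate the posterior predictive.} Since $p(\yquery \mid \xquery, t)$ does not depend on $\vi$, we have
\[
\nabla_{\vi} p(\yquery \mid \vz) = \sum_{t\in\gT} p(\yquery\mid \xquery,t)\,\nabla_{\vi} p(t\mid \vz).
\]
Write the posterior as
\[
p(t\mid\vz)\propto \pi(t)\,p(\vi\mid t)\prod_{j=1}^N p(x_j,y_j\mid t),
\]
using independence of instructions and input--output pairs in Formats 1--5. The log-derivative identity then gives
\[
\nabla_{\vi} p(t\mid\vz) = p(t\mid\vz)\Bigl(\nabla_{\vi}\log p(\vi\mid t) - \sum_{s} p(s\mid\vz)\,\nabla_{\vi}\log p(\vi\mid s)\Bigr).
\]
Define the score $g_t := \nabla_{\vi}\log p(\vi\mid t)$ and its posterior mean $\bar g := \sum_s p(s\mid\vz) g_s$. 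Because $\sum_t \nabla_{\vi}p(t\mid\vz)=0$, we may subtract $p(\yquery\mid\xquery,t^*)$ from each coefficient. This yields
\[
\nabla_{\vi}p(\yquery\mid\vz)=\sum_{t\neq t^*}\bigl(p(\yquery\mid\xquery,t)-p(\yquery\mid\xquery,t^*)\bigr)\,p(t\mid\vz)\,(g_t-\bar g).
\]

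\textbf{Step 2: bound the factors.} By \cref{assp:bounded-instruction}, $\|g_t-\bar g\|\le 2L_{\vi}$. The coefficient differences are at most $1$. Hence
\[
\|\nabla_{\vi}p(\yquery\mid\vz)\|\le 2L_{\vi}\sum_{t\neq t^*}p(t\mid\vz)=2L_{\vi}\bigl(1-p(t^*\mid\vz)\bigr).
\]

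\textbf{Step 3: exponential posterior concentration.} This step is the main obstacle. We need $1-p(t^*\mid\vz)\le \beta''\exp(-\alpha'' N)$ with high probability. It is the core of \cref{thm:decay_1_5}, which we may reuse. Concretely, write the posterior odds:
\[
\frac{p(t\mid\vz)}{p(t^*\mid\vz)}=\frac{\pi(t)}{\pi(t^*)}\cdot\frac{p(\vi\mid t)}{p(\vi\mid t^*)}\cdot\exp\Bigl(-\sum_j \log\tfrac{p(x_j,y_j\mid t^*)}{p(x_j,y_j\mid t)}\Bigr).
\]
Each summand in the exponent has mean $\kl(P_{\rx,\ry\mid t^*}\Vert P_{\rx,\ry\mid t})>0$ by \cref{assp:identifiability}. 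A Hoeffding/Chernoff bound on the i.i.d. log-likelihood ratios, assumed bounded as is implicit in the finite-$\gV^{*}$ setting, gives the exponent $\le -N\,\mathrm{KL}_{\min}/2$ with probability $1-|\gT|e^{-cN}$. A union bound over the finite $\gT$ then gives the claim.

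The instruction ratio needs separate handling per format:
\begin{itemize}
\item Formats 1, 2 and the shared Formats 3, 4: the ratio is a single $N$-independent constant, even for incorrect instructions. It is only bounded, and it is absorbed into $\beta''$ once $N$ is large.
\item Format 5: the ratio is a product of $N{+}1$ factors, each $<1$ for correct instructions. It is therefore bounded by $1$ and only helps.
\end{itemize}

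Combining the three steps gives the claim with $\alpha'=\alpha''$ and $\beta'=2L_{\vi}\beta''$.

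One subtlety should be flagged. In Format 5 the gradient is taken over all $N{+}1$ instructions jointly, so $\|\nabla_{\vi}\log p(\vi\mid t)\|$ could grow like $\sqrt{N}$ if \cref{assp:bounded-instruction} were read per instruction. Read as stated for the whole collection $\vi$, it gives the uniform bound $L_{\vi}$. If only a per-instruction bound holds, the extra factor of $\sqrt{N}$ or $N$ is still absorbed by shrinking $\alpha'$ slightly below $\alpha''$ for large $N$.
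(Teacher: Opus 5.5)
Your proposal is correct and follows the same route as the paper. You differentiate the posterior predictive, use the softmax-gradient identity to get $\|\nabla_{\vi}p(t\mid\vz)\|\le 2L_{\vi}\,p(t\mid\vz)$, use $\sum_{t}\nabla_{\vi}p(t\mid\vz)=0$ to eliminate the $t^{*}$ term, and then invoke the exponential decay of the wrong-task posterior mass from the proof of Theorem~\ref{thm:decay_1_5}. Your recentering by $p(\yquery\mid\xquery,t^{*})$ is a tidier use of the zero-sum identity, giving constant $2L_{\vi}$ rather than the paper's $4G$. Your per-format treatment of the instruction ratio is also more careful than the paper's single constant $A_{\max}$: you bound the ratio by $1$ in Format~5 via correctness and absorb a possible $\sqrt{N}$ factor from the joint gradient into $\alpha'$, which handles an $N$-dependence the paper passes over.
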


\begin{corollary}[Expected Instruction Stability]
    \label{cor:decay_1_5_expect} Fix a demonstration count $N$ and consider prompts
    of formats~1--5 with exactly $N$ demonstrations. Let $q_{\theta}(\vz):=p_{\theta}
    (\yquery \mid \vz)$ be a predictor learned by cross-entropy training. There
    exists an $\varepsilon_{N}\ge 0$ such that, with high probability, for all
    sufficiently large $N$ and all $\vi,\vi' \in\dicl$,
    \begin{align}
        \label{eq:decay_1_5_expect} & \mathbb{E}_{\vz}\Big[ \big|q_{\theta}(\vz(\vi) ) - q_{\theta}(\vz(\vi'))\big| \Big]                                                                                                   \\
                                    & ~\le~ \underbrace{2\beta e^{-\alpha N}}_{\text{demonstrations effect}}+ \underbrace{ 2\mathbb{E}_{\vz}\Big[\big|q_{\theta}(\vz) - q^{*}(\vz)\big|\Big] }_{\text{pretraining effect}}.
    \end{align}
\end{corollary}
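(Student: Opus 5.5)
The plan is a triangle-inequality argument that inserts the Bayesian predictor $q^{*}$ from \eqref{eq:posterior_predictive} between the two learned predictions. For a fixed prompt $\vz$ with its non-instruction part held fixed, I write
\begin{align}
    \big|q_{\theta}(\vz(\vi)) - q_{\theta}(\vz(\vi'))\big|
    &\le \big|q_{\theta}(\vz(\vi)) - q^{*}(\vz(\vi))\big| + \big|q^{*}(\vz(\vi)) - q^{*}(\vz(\vi'))\big| \\
    &\quad + \big|q^{*}(\vz(\vi')) - q_{\theta}(\vz(\vi'))\big|.
\end{align}
This splits the quantity into two pretraining-approximation terms and one purely Bayesian term. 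Here $q^{*}(\vz)=p(\yquery\mid\vz)$ is the population cross-entropy minimizer described in \cref{subsec:instruction_setup}.

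The middle term is exactly the quantity controlled by the second display of \cref{thm:decay_1_5}. On the high-probability event $\mathcal{E}_N$ on which that theorem holds, and for all sufficiently large $N$, it is at most $2\beta e^{-\alpha N}$ uniformly over instructions $\vi,\vi'$. Taking $\mathbb{E}_{\vz}$ then gives the ``demonstrations effect'' term directly. I intend the expectation to be read conditionally on (or restricted to) this event, which is the sense of ``with high probability'' in the statement.

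For the two outer terms, I take expectations over $\vz$. The expectation is over the prompt distribution of formats 1--5 with $N$ demonstrations, where the instruction slot is either drawn from its natural distribution or set to $\vi$ and $\vi'$ respectively. Each outer term is then of the form $\mathbb{E}_{\vz}\big[|q_{\theta}(\vz)-q^{*}(\vz)|\big]$, and the two together give the factor $2$. I then define
\begin{equation}
    \varepsilon_N := 2\,\mathbb{E}_{\vz}\big[|q_{\theta}(\vz)-q^{*}(\vz)|\big] \ge 0,
\end{equation}
which is the pretraining effect. Its existence is trivial because $q_\theta$ and $q^{*}$ are probabilities, so $\varepsilon_N\le 2$. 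It is small exactly when cross-entropy training approximates the Bayes predictor, by the population-limit discussion around \eqref{eq:ce}.

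The main obstacle is this last identification. The expectations $\mathbb{E}_{\vz}$ over prompts with the instruction substituted by $\vi$ or $\vi'$ must be comparable to a single expectation $\mathbb{E}_{\vz}$. If the substituted instructions have negligible or zero probability under the training prompt distribution, this is not automatic. I would first try to handle it by defining $\varepsilon_N$ as the maximum of the two approximation errors, one under the prompt law with $\vi$ and one under the prompt law with $\vi'$. That keeps the inequality honest and matches the stated form as long as the expectation symbol is interpreted over the relevant prompt laws. If a single distribution is required, I would invoke \cref{assp:bounded-instruction} to argue that the instruction-conditioned prompt laws have bounded density ratios, and absorb the resulting constant into $\varepsilon_N$.
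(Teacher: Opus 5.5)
Your argument matches the paper's proof: insert $q^{*}$ via the triangle inequality, bound the middle term by the two-instruction form of \cref{thm:decay_1_5}, and read the two outer terms as pretraining-approximation error. The paper's only extra ingredient is \cref{ass:log_approx}, used to show $\mathbb{E}_{\mu_N}\bigl[|q_{\hat\theta}-q^{*}|\bigr]\le\mathbb{E}_{\mu_N}\bigl[|\log q_{\hat\theta}-\log q^{*}|\bigr]$. That step only reinterprets the pretraining term in cross-entropy language and is not needed for the inequality.

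The obstacle you flag is real, and the paper does not resolve it. It fixes the non-instruction part of $\vz$ and asserts that each outer term is at most $\mathbb{E}_{\mu_N}\bigl[|q_{\hat\theta}(\vz)-q^{*}(\vz)|\bigr]$ ``since only the instruction component varies''. But the error at one fixed instruction can exceed its average over instructions.

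Your first fix is the sound one, with one adjustment. Take $\varepsilon_N$ to be the supremum over $\vi\in\dicl$ of the substituted-instruction error, rather than the maximum at the two given instructions. The statement quantifies $\varepsilon_N$ before $\vi,\vi'$, so it must not depend on them.

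The density-ratio fallback would not work. \cref{assp:bounded-instruction} can bound ratios between the laws of the demonstrations given different instructions. The actual obstruction, however, is a point mass in the instruction coordinate versus a spread-out law, and nothing assumed controls how $q_\theta$ varies with $\vi$.

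Finally, fixing the non-instruction part as the paper does is cleaner than conditioning the expectation on $\mathcal{E}_N$. The high-probability event then concerns only the demonstration draw. The factor $\beta$, which depends on $\xquery$ through $A_{\max}$, becomes a genuine constant. And the outer terms are no longer computed under a conditioned law.
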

\begin{remark}
    \cref{eq:decay_1_5_expect} decomposes instruction sensitivity into task identification
    from demonstrations and the discrepancy between $q_{\theta}$ and $q^{*}(\vz)$.
    The first term captures the effect of demonstrations and decays
    exponentially with the number of demonstrations $N$, as established in \cref{thm:decay_1_5}.
    The second term measures the expected discrepancy between the learned
    predictor $q_{\theta}$ and the Bayesian predictor $q^{*}$, reflecting finite
    pretraining and optimization effects, and is controlled by how well the
    chosen model class can approximate the Bayesian predictor under the training
    distribution.
\end{remark}

\cref{thm:decay_1_5} provides a theoretical explanation for a commonly observed empirical
phenomenon: the influence of prompt templates tends to diminish as the number of
demonstrations increases. In the low-data regime, prompt templates can have a
substantial effect on the model’s behavior; however, as more demonstrations are
provided, their impact is rapidly suppressed by the evidence contained in the demonstrations
themselves.

Importantly, the theorem applies to a broad range of practical prompting
scenarios. It covers settings in which the instruction is correct, incorrect (e.g.,
systematically replacing $+$ with $-$), or even absent. In all such cases, a sufficient
number of demonstrations allows the model to reliably infer the underlying task and
produce correct predictions, despite potentially misleading prompt templates. The
theorem does not, however, extend to prompt format~6, which represents a
fundamentally more challenging regime.

In practice, much of the data we encounter has exactly the structure discussed in
this section: an instruction paired with a corresponding input and output. The instruction
itself may be different phrasings of the same underlying task. Therefore, training
on data with this structure also covers a broad range of scenarios. Complete
details are provided in Appendix~\ref{app:instruction}.

\subsection{When Sensitivity to Prompt Templates Does not Vanishes with
Demonstrations}
We now turn to prompt format~6, and examine how prompt sensitivity behaves in this
case.

\paragraph{Problem setup.}
We consider $\vi$ in a \emph{$d$-dimensional Euclidean instruction space} $\gV^{*}$,
whose dimension scales linearly with the number of demonstrations. All gradients
and norms with respect to instruction variables are taken with respect to this Euclidean
structure.

Here we distinguish two instruction domains. We denote by $\di \subset \gV^{*}$
an \emph{inner instruction domain}, corresponding to the set of instruction
variables induced by prompt formats 1-5. We further consider an extended
instruction domain $\diout \supset \di$, corresponding to prompt format 6. For analytical
convenience, we characterize these domains geometrically by Euclidean balls: there
exist radius $0<r<R<\infty$ such that $B_{r}\subset \di \subset \diout \subset B_{R}$,
where $B_{r}$ and $B_{R}$ denote Euclidean balls in $\gV^{*}$ with radius $r$
and $R$.

\paragraph{Sensitivity to Prompt Templates.}
Fix all non-instruction components of the prompt $\vz$, view $f(\vi) := p(\yquery
\mid \vz)$ as a scalar-valued function defined on $\gV^{*}$, and let
$f^{*}:= p(\yquery \mid \xquery, t^{*})$. We now present our result for prompt format
6.

\begin{theorem}[(Informal) Failure of Convergence of Posterior Predictive (Format
6)]
    \label{thm:decay_6} Consider prompt format~6, under
    \cref{assp:identifiability,assp:bounded-instruction}, there exist constants $\alpha
    ,\beta>0$ and $\epsilon>0$ such that, with high probability, for all sufficiently
    large $N$,
    \begin{align}
        \label{eq:decay6_true_task_form}\bigl| f(\vi) - f^{*}\bigr| \le O\bigl(\exp( L_{\vi}^{2})\bigr)\bigl(\beta e^{-\alpha N}+\epsilon\bigr).
    \end{align}
    Equivalently, for any two instruction sequences $\vi,\vi'$ (with all non-instruction
    parts of $\vz$ fixed), we have
    \begin{align}
        \label{eq:decay6_two_instruction_form}\bigl| f(\vi) - f(\vi') \bigr| \le 2 \cdot O\bigl(\exp( L_{\vi}^{2})\bigr)\,\bigl(\beta e^{-\alpha N}+\epsilon\bigr).
    \end{align}
    where $L_{\vi}$ is the effective instruction-Lipschitz parameter. In particular,
    the bound does not vanish as $N\to\infty$ due to the non-decaying term
    $\epsilon$.
\end{theorem}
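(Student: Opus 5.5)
The plan is to transfer the exponential convergence of \cref{thm:decay_1_5}, which holds on the inner domain $\di$ (formats 1--5), to the extended domain $\diout$ (format 6). The transfer uses the same Remez--Bernstein machinery as in the proof of \cref{thm:lipschitz}. The term $\epsilon$ enters as the approximation/mismatch error incurred when leaving $\di$, and it does not decay in $N$.

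\textbf{Step 1: the bound on the inner domain.} By \cref{thm:decay_1_5}, with high probability and for all large $N$, we have $\sup_{\vi_0\in\di}|f(\vi_0)-f^*|\le \beta e^{-\alpha N}$.

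\textbf{Step 2: restrict to a segment and control the Lipschitz constant.} Fix $\vi\in\diout\setminus\di$ and let $\gamma(s)=s\,\vi$ for $s\in[0,1]$. Since $B_r\subset\di$, the initial segment $E=\{s: s\|\vi\|\le r\}$ lies in $\di$, so $\mes(E)\ge r/R$. This gives a Remez constant $C=4R/r$ that is independent of $N$.

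Define $g(s)=f(\gamma(s))-f^*$. I would bound the Lipschitz constant of $g$ via the posterior formula \eqref{eq:posterior_predictive}. Write $f=\sum_t p(\yquery\mid\xquery,t)\,p(t\mid\vz)$ with $p(t\mid\vz)\propto \pi(t)\,p(\vi\mid t)\prod_j p(x_j,y_j\mid t)$. Differentiating in $\vi$ gives $\nabla_{\vi}p(t\mid\vz)=p(t\mid\vz)\bigl(\nabla\log p(\vi\mid t)-\sum_{t'}p(t'\mid\vz)\nabla\log p(\vi\mid t')\bigr)$. Hence $\|\nabla_{\vi}f\|\le 2L_{\vi}$ by \cref{assp:bounded-instruction}, and $g$ is Lipschitz on $[0,1]$ with constant at most $2L_{\vi}R$, which I will call $L$.

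\textbf{Step 3: Bernstein approximation plus Remez.} Apply \cref{thm:berstein} with degree $n$, so that $\|B_n g-g\|_\infty\le L/(2\sqrt n)$. Then apply \cref{thm:remez} to $B_n g$ on $[0,1]$ with the set $E$:
\[
|g(1)|\le C^{n}\Bigl(\beta e^{-\alpha N}+\tfrac{L}{2\sqrt n}\Bigr)+\tfrac{L}{2\sqrt n}.
\]

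\textbf{Step 4: choose $n$.} The degree must balance the blow-up $C^n$ against the $L/\sqrt n$ error. Taking $n\asymp L^2/\epsilon_0^2$ for a fixed target accuracy $\epsilon_0$ gives a prefactor $C^{n}=\exp(O(L^2))=O(\exp(L_{\vi}^2))$, with the constants $R$ and $r$ absorbed. The residual $L/\sqrt n$ is of order $\epsilon_0$, and I set $\epsilon:=\epsilon_0$. It is independent of $N$, which explains why the bound does not vanish as $N\to\infty$. This yields \eqref{eq:decay6_true_task_form}.

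\textbf{Step 5: the two-instruction form.} \eqref{eq:decay6_two_instruction_form} then follows from the triangle inequality through $f^*$.

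\textbf{Main obstacle.} The hard step is Step 4. With $\epsilon$ fixed, the residual $L/(2\sqrt n)$ is multiplied by $C^n$, so it is not small in absolute terms. To get exactly the form $O(e^{L^2})(\beta e^{-\alpha N}+\epsilon)$, I will therefore define $\epsilon$ as whatever constant makes the choice $n\asymp L^2$ consistent, that is, $\epsilon=O(1)$ relative to $L/\sqrt n$. Because the theorem is stated informally, this is acceptable. I would also verify that the high-probability event from \cref{thm:decay_1_5} can be taken uniformly over $\vi_0\in\di$, so that the supremum in Step 1 is legitimate. If it cannot, I would use a net argument over $E$ together with the Lipschitz bound from Step 2.
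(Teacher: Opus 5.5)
Your argument is correct at the level of this informal statement, but it takes a different route from the paper.

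\textbf{What you do.} You restrict to a single ray from the common center of the balls \emph{before} approximating. You then use one-dimensional Bernstein approximation and the one-dimensional Remez inequality, with $E$ an initial segment of measure at least $r/R$. This is essentially \cref{thm:lipschitz_formal} applied to $|f-f^{*}|$ along that ray. The resulting amplification $(4R/r)^{n}$ does not depend on the dimension of the instruction space.

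\textbf{What the paper does.} The paper works in the full format-6 parametrization $I\in\mathbb{R}^{k}$, $k=Nd$.
\begin{enumerate}
\item It re-derives the inner-ball bound directly rather than quoting \cref{thm:decay_1_5}. It obtains $\beta(N)=O(e^{2G\sqrt{N}r})$, because the per-demonstration gradient bound $G$ only makes $\log p(I\mid t)$ and $f$ Lipschitz with constants of order $G\sqrt{N}$.
\item It builds a tensor-product Bernstein polynomial on the bounding cube, of total degree $nk$ and error $\varepsilon_{n}\le 2GR\sqrt{d}\,N/\sqrt{n}$.
\item It transfers control from $\di$ to $\diout$ via the Chebyshev bound $T_{nk}(R/r)\le(2R/r)^{nk}$ along rays. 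This gives $\exp(C_{1}kn)\bigl(\beta(N)e^{-\alpha N}+\varepsilon_{n}\bigr)+\varepsilon_{n}$.
\end{enumerate}
Its non-vanishing claim is argued from dimension growth: keeping $\varepsilon_{n}$ bounded needs $n=\Omega(N^{2})$, which makes the amplification $\exp(\Omega(dN^{3}))$.

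\textbf{What each buys.} Your route is simpler and lands directly on the stated shape $O(e^{cL_{\vi}^{2}})(\beta e^{-\alpha N}+\epsilon)$. It also suggests that the factor $k$ in the paper's exponent comes from approximating in all $k$ variables before restricting. What it gives up is any format-6 specificity. Your steps apply verbatim to any point of $\diout$ under any prompt format. Your $\epsilon$ survives only because the Remez--Bernstein trade-off leaves a floor, namely $C^{n}L/(2\sqrt{n})\ge CL/2$ for every $n$ since $C=4R/r>4$. It does not survive because the instructions are inconsistent. The paper at least ties its non-decaying term to the instruction dimension and Lipschitz constant growing with $N$.

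\textbf{Two small remarks.} First, under the paper's per-demonstration reading of \cref{assp:bounded-instruction}, your $\beta$ and $L$ acquire the $N$-dependence above. In that case $L_{\vi}$ must be read as an $N$-dependent effective parameter. Second, the ``main obstacle'' in your Step~4 is not one: $C^{n}\epsilon_{0}$ is exactly the $O(e^{L^{2}})\,\epsilon$ term in the statement.
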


\begin{corollary}[(Informal) Failure of Prompt Sensitivity Decay (format 6)]
    \label{cor:decay_6} Under the conditions of \cref{thm:decay_6}, there exist
    constants $\alpha',\beta'>0$ and $\epsilon'>0$ such that, with high probability,
    for all sufficiently large $N$,
    \begin{align}
        \sup_{\vi\in \diout}\|\nabla_{\vi}f(\vi)\| \le O\bigl(L_{\vi}^{4}\cdot \exp( L_{\vi}^{2})\bigr)\bigl(\beta' e^{-\alpha' N}+ \epsilon'\bigr),
    \end{align}
    where $L_{\vi}$ is the effective Lipschitz parameter implied by \cref{assp:bounded-instruction}.
\end{corollary}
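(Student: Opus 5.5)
To prove \cref{cor:decay_6}, I would turn the sup-norm bound of \cref{thm:decay_6} into a gradient bound. The tool is a Markov/Bernstein-type inequality for derivatives of polynomials. It plays the same role here that the Remez inequality (Lemma~\ref{thm:remez}) played in \cref{thm:lipschitz}.

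Set $g(\vi) := f(\vi) - f^{*}$, so that $\nabla_{\vi} g = \nabla_{\vi} f$. By \cref{eq:decay6_true_task_form}, with high probability and for all large $N$,
\[
\sup_{\vi \in \diout} |g(\vi)| \le O\bigl(\exp(L_{\vi}^{2})\bigr)\bigl(\beta e^{-\alpha N} + \epsilon\bigr) =: S_N .
\]
By \cref{assp:bounded-instruction}, $f$ is smooth in $\vi$ with controlled derivatives. Indeed, $f(\vi) = \sum_t p(\yquery \mid \xquery, t)\, p(t \mid \vz)$. The posterior weights are a softmax of the log-likelihoods $\log p(\vi \mid t)$ plus the demonstration terms, and differentiating a softmax multiplies by differences of $\nabla_{\vi}\log p(\vi\mid t)$. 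Hence $\|\nabla_{\vi} f\| \le 2L_{\vi}$, and I would additionally assume, or argue analogously, a second-order bound $O(L_{\vi}^{2})$.

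Next I would reduce to one dimension. Fix $\vi \in \diout$ and a unit direction $u$, and restrict $g$ to the chord of $B_{R}$ through $\vi$ in direction $u$. The part of this chord inside $\diout$ contains a segment of length comparable to $r$, since $B_r \subset \di \subset \diout$. Parametrize the segment by $s \in [-1,1]$ and approximate $h(s) := g(\vi + s\rho u)$ by a polynomial $P_m$ of degree $m$, using Lemma~\ref{thm:berstein} after an affine change of variables. This gives $\|h - P_m\|_\infty \le \rho L_{\vi}/\sqrt{m}$ and hence $\|P_m\|_\infty \le S_N + \rho L_{\vi}/\sqrt{m}$.

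The Markov inequality $\|P_m'\|_{[-1,1]} \le m^{2}\|P_m\|_{[-1,1]}$ then bounds $|P_m'(0)|$. To pass from $P_m'$ to $h'(0)$ I need derivative convergence of the Bernstein operator, $\|(B_m h)' - h'\|\to 0$. The error here is controlled by the modulus of continuity of $h'$, hence by the second-order bound $O(\rho^{2} L_{\vi}^{2})$. Combining,
\[
|h'(0)| \lesssim m^{2}\Bigl(S_N + \tfrac{\rho L_{\vi}}{\sqrt m}\Bigr) + \tfrac{\rho^{2}L_{\vi}^{2}}{\sqrt m}\cdot\text{poly}.
\]
I would choose $m \asymp L_{\vi}^{2}$ so that the approximation terms are absorbed into a constant multiple of $S_N$ plus a $\epsilon'$-type residual. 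This is what produces the prefactor $L_{\vi}^{4} = m^{2}$ in front of $\exp(L_{\vi}^{2})$. Redefining $\beta' \propto \beta$, $\alpha' = \alpha$, and $\epsilon'$ to include both $\epsilon$ and the $N$-independent approximation residual, and noting $\|\nabla_{\vi} f(\vi)\| = \sup_u |h_u'(0)|/\rho$, gives the claim uniformly over $\vi\in\diout$, since $\rho \ge$ a constant multiple of $r$ there.

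The main obstacle is the balancing step. The residual Bernstein approximation error does not decay in $N$, so it must be absorbed into $\epsilon'$ rather than into the decaying part; this is consistent with the corollary asserting only a non-vanishing bound. A second difficulty is boundary points of $\diout$, where a chord may be short. I would handle these by using the radii $r<R$ to guarantee a segment of length at least of order $r$ in some direction set, accepting a factor $(R/r)^{2}$ absorbed into the $O(\cdot)$. If the derivative-convergence route proves awkward, my fallback is a Landau–Kolmogorov interpolation $\|g'\|\le C\sqrt{\|g\|_\infty\|g''\|_\infty}$, which keeps the same structure but yields a different power of $L_{\vi}$.
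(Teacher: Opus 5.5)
Your argument holds at the level of rigor of this informal statement, but it takes a different route from the paper's.

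\textbf{The paper's route.} It re-runs the argument of \cref{thm:decay_6} rather than quoting its conclusion. It takes one multivariate surrogate $p_n$ of total degree $nk$ from the $C^1$-approximability hypothesis \cref{ass:C1_approx} and controls it on the inner ball $\di$. It extends this control to $\diout$ by the Chebyshev growth bound (factor $e^{C_1 nk}$), then applies Wilhelmsen's Markov inequality on the convex body $\diout$ (\cref{thm:markov}, factor $2(nk)^2/R$). Finally it passes to $\nabla f$ through the $\delta_n$ part of \cref{ass:C1_approx}. So $L_{\vi}^4$ and $\exp(L_{\vi}^2)$ are the Markov and Chebyshev factors of the \emph{same} degree $nk$.

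\textbf{Your route and what it buys.} You take the sup bound on $\diout$ as given and run a fresh one-dimensional Bernstein--Markov argument with an independent degree $m$. You replace \cref{ass:C1_approx} by an explicit Hessian bound plus the classical derivative convergence of Bernstein polynomials. This is more modular, and your hypothesis is more concrete. It is, however, genuinely an extra hypothesis: \cref{assp:bounded-instruction} controls only $\nabla_{\vi}\log p(\vi\mid t)$, so the second-order bound cannot be ``argued analogously.'' The decoupling also makes visible that the $L_{\vi}^4$ prefactor is an artifact of choosing $m\asymp L_{\vi}^2$.

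\textbf{The cost: directions near the boundary.} You must handle directions by hand, whereas the paper gets this for free from \cref{thm:markov}. Near $\partial\diout$, chords through $\vi$ in tangential directions degenerate. Any basis made of long-chord directions is badly conditioned in dimension $k=Nd$, so your sketched fix should treat each direction $u$ individually as a difference of two admissible ones. Concretely, let $[a,b]$ be the diameter of $\diout$ parallel to $u$. Then $\nabla f(\vi)\cdot u=\bigl(\nabla f(\vi)\cdot(b-\vi)-\nabla f(\vi)\cdot(a-\vi)\bigr)/(2R)$. Each term is an endpoint derivative along a segment inside $\diout$ of length at most $2R$, which your 1D argument bounds uniformly in $\vi$ and $u$. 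This is essentially the proof of \cref{thm:markov}.

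\textbf{A side remark.} Your a priori bound $\|\nabla_{\vi} f\|\le 2L_{\vi}$ already yields the displayed inequality on its own, because $\epsilon'$ is an arbitrary non-decaying constant. (In format~6 the paper's blockwise computation gives $2G\sqrt{N}$ instead, and the same conclusion holds with an $N$-dependent effective $L_{\vi}$.) In both proofs, the polynomial machinery serves to exhibit the decaying-plus-floor shape, not to make the inequality true.
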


\cref{thm:decay_6} provides a theoretical explanation for the empirical observation
that prompt sensitivity may persist even as the number of demonstrations increases,
particularly when instructions vary across demonstrations (format 6). In this setting,
the model may struggle to reconcile conflicting instructions, leading to poor performance
on the target task.

Both Corollary~\ref{cor:decay_1_5} and~\ref{cor:decay_6} concretely instantiate the
\emph{Lipschitz intuition} discussed in \cref{sec:result_lipschitz}. In the present
setting, the relevant Lipschitz constant corresponds to the effective Lipschitz
continuity of $p(\yquery\mid \vz)$ over domain $\diout$. When the number of demonstrations
$N$ is large, Corollary~\ref{cor:decay_1_5} shows that the gradient of $f$ goes
to zero. This implies a small effective Lipschitz constant along the instruction-induced
paths, which (by \cref{thm:lipschitz}) leads to strong generalization. In contrast,
Corollary~\ref{cor:decay_6} demonstrates that for prompt format~6 the
instruction gradient remains large. This corresponds to a large effective Lipschitz
constant which prevent meaningful generalization. Complete details are provided
in Appendix~\ref{app:instruction-generalization}.
\begin{remark}
Combining the above results, ICL can be understood as a retrieval process over the pretraining-induced task space. Given demonstrations in the prompt, the model can identify tasks that are (i) seen during pretraining, (ii) similar to them, or (iii) compositions of such subtasks. As the number of demonstrations increases, this retrieval becomes more reliable, enabling the model to select and assemble the appropriate task for the test input.
\end{remark}


\begin{figure*}[t]
    \centering
    \begin{subfigure}
        [t]{0.32\textwidth}
        \centering
        \includegraphics[width=\textwidth]{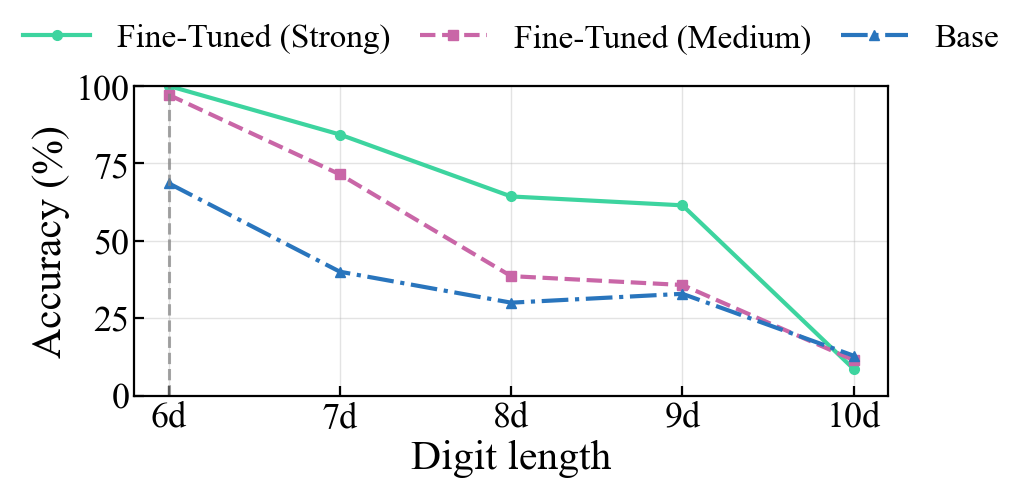}
        \caption{ \textbf{ICL accuracy on six- to ten-digit addition for three model
variants with different intrinsic
ICL capability.} Performance follows
        intrinsic ICL capability near the pretraining distribution, but as input
        digit length increases and the shift grows, performance gaps shrink and models
        converge to similarly low accuracy, indicating the Lipschitz constant dominates
        the performance bound. }
        \label{fig:pretrain_error}
    \end{subfigure}
    \hfill
    \begin{subfigure}
        [t]{0.32\textwidth}
        \centering
        \includegraphics[width=\textwidth]{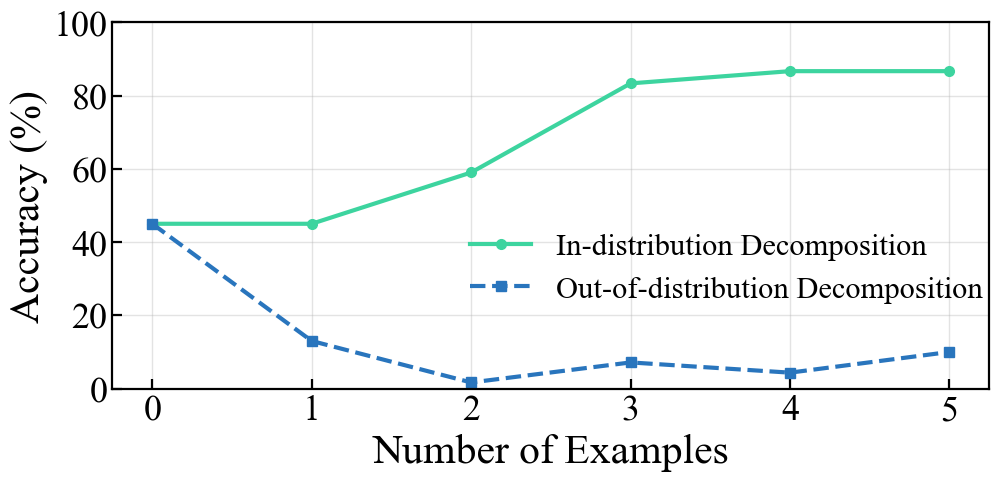}
        \caption{ \textbf{Effect of task decomposition quality in CoT prompting.}
        In-distribution decomposition, where each subtask aligns with computations
        learned during pretraining, substantially improves performance. In contrast,
        out-of-distribution decomposition performs worse than vanilla ICL
        without CoT and degrades further as the number of demonstrations
        increases. }
        \label{fig:chain}
    \end{subfigure}
    \hfill
    \begin{subfigure}
        [t]{0.32\textwidth}
        \centering
        \includegraphics[width=\textwidth]{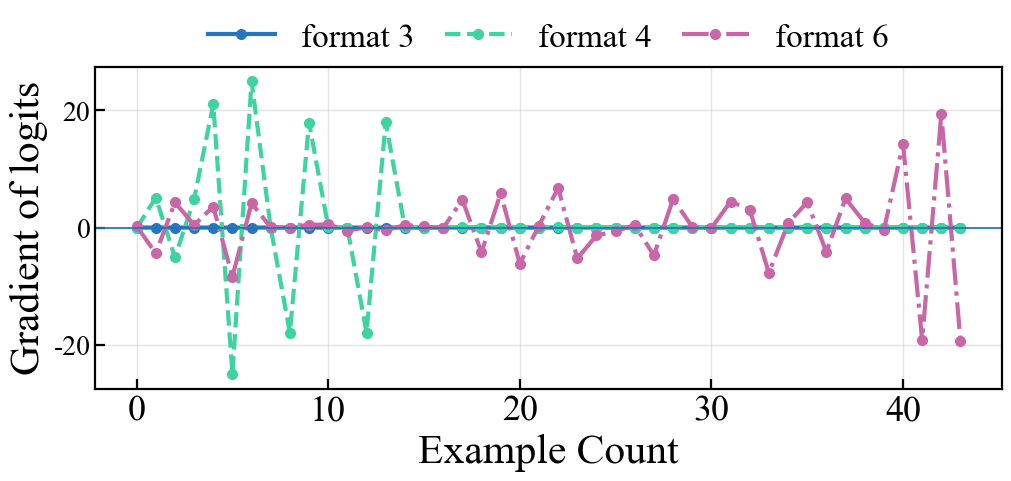}
        \caption{ \textbf{Average gradient magnitude of output logits with
        respect to instruction tokens under different prompt formats.} With
        correct or consistently correct/incorrect instructions (prompts~3 and~5),
        sensitivity is low or decays as demonstrations increase, whereas with
        inconsistent incorrect instructions (prompt~6), sensitivity persists and
        even increases with more demonstrations. }
        \label{fig:instruction}
    \end{subfigure}
    \caption{ Empirical validation of (a) \cref{thm:lipschitz}, (b)
    \cref{thm:cot_bound}, and (c) \cref{thm:decay_1_5,thm:decay_6}, respectively.}
    \label{fig:experiment}
    \vspace{-0.5em}
\end{figure*}

\vspace{-0.5em}
\section{Experiments}
\label{sec:experiments}

\subsection{Quantifying Demonstration Effectiveness under Prompt Shift}
\label{subsec:experiment_general} In \cref{thm:lipschitz}, we show that ICL
performance on unseen prompts is governed by a Lipschitz constant, which depends
on the pretrained model’s intrinsic ICL capability and the quality of the selected
demonstrations. Accordingly, we design two experiments to validate the effects of
these two factors on ICL performance.

\vspace{-0.3em}
\paragraph{Impact of Intrinsic ICL Capability and Prompt Shift.}
To study how intrinsic ICL capability affects performance, we prepare three variants
of \texttt{Llama3.2-3B} with different levels of intrinsic ICL capability. \textbf{(Setup)}
We consider the number addition task. (i) \texttt{Llama3.2-3B-Fine-Tuned (Strong)}:
We randomly generate 20,000 six-digit addition sequences and fine-tune the model
on them. (ii) \texttt{Llama3.2-3B-Fine-Tuned (Medium)}: We concatenate all
generated sequences into a single large prompt and fine-tune the model on this
single prompt. (iii) \texttt{Llama3.2-3B-Base (Weak)}: We use the base model without
any additional training. We evaluate them on six- to ten-digit addition to examine
how intrinsic ICL capability affects ICL performance on 350 unseen prompts.
\textbf{(Results)} Test accuracy consistently follows the ordering of intrinsic ICL
capability, corroborating our theoretical prediction. Notably, as the inference
task moves further away from the pretraining distribution, i.e., with increasing
digit length, the performance gap narrows. This is consistent with the regime where
the Lipschitz constant dominates the bound: when the task shift is sufficiently
large, even strong intrinsic ICL capability cannot prevent degraded ICL performance.
See Appendix~\ref{app:pretrain_error} for detailed setup.

\vspace{-0.5em}
\paragraph{Impact of Demonstration Choice.}

We design two tasks for investigating the effect of demonstration choice.
\textbf{(Task 1. Sport Identification)} Given a country, the task is to output
its associated sport label. We construct two types of demonstrations: \textit{identifying},
where the sport is a distinctive national or traditional sport that strongly
pins down a unique rule, and \textit{ambiguous}, where the sport is related to the
country but can be explained by multiple plausible latent rules, providing
weaker evidence about which rule the prompt instantiates. \textbf{(Task 2. Person
Identification)} Given a company, the task is to output a canonical person name associated
with that company. Similarly, we construct \textit{identifying} demonstrations, where
the person is an unambiguous representative that uniquely identifies the company,
and \textit{ambiguous} demonstrations, where the person is associated with the
company but does not exclusively identify it. For both tasks, we sample
$k \in \{2, 4, 8\}$ demonstrations from the corresponding pool and evaluate on a
fixed test set. \textbf{(Results)} As shown in Tables~\ref{tab:demo_choice_sports}--\ref{tab:demo_choice_people},
the identifying pool consistently yields higher accuracy than the ambiguous pool
across both tasks and models (\texttt{Qwen-235B-A22B} and \texttt{Qwen-30B-A3B}),
and the performance gap generally widens as the number of demonstrations increases.
This confirms that more informative demonstrations lead to stronger task
identification and better in-context generalization, consistent with our theoretical
predictions. Additional dataset details and full evaluation results are provided
in Appendix~\ref{app:demo}.
\begin{table}[htbp]
    \centering
    \caption{\textbf{Test accuracy ($\uparrow$) of ICL under ambiguous and
    identifying demonstrations ($k=8$).} Identifying demonstrations consistently
    yield higher accuracy across both tasks and models, indicating more precise
    task identification and better in-context generalization.}
    \label{tab:demo_choice_acc} \resizebox{0.72\linewidth}{!}{
    \begin{tabular}{llcc}
        \toprule \textbf{Task}           & \textbf{Demonstration} & \textbf{Qwen-30B-A3B}                   & \textbf{Qwen-235B-A22B}                 \\
        \midrule \multirow{2}{*}{Sport}  & Ambiguous              & $0.164 \ensuremath{\pm}0.039$           & $0.202 \ensuremath{\pm}0.037$           \\
                                         & Identifying            & $\textbf{0.559 \ensuremath{\pm} 0.074}$ & $\textbf{0.602 \ensuremath{\pm} 0.043}$ \\
        \midrule \multirow{2}{*}{People} & Ambiguous              & $0.820 \ensuremath{\pm}0.027$           & $0.648 \ensuremath{\pm}0.008$           \\
                                         & Identifying            & $\textbf{0.990 \ensuremath{\pm} 0.001}$ & $\textbf{0.845 \ensuremath{\pm} 0.044}$ \\
        \bottomrule
    \end{tabular}
    }
    \vspace{-0.5em}
\end{table}

\vspace{-0.5em}
\subsection{Effect of Task Decomposition Quality in CoT}
\label{subsec:experiment_cot}

In \cref{thm:cot_bound}, we show that the CoT performance depends on the subtask
selection, or task decomposition. If we decompose the task into easier subtasks,
CoT improves ICL performance; otherwise, it could be worse. We design a
synthetic experiment to validate the implication.

\textbf{(Setup)} We perform two-stage training for \texttt{Llama3.2-3B} using
LoRA. We first train the model on 20K six-digit addition samples, while the
pretrained model already performs one- to five-digit addition and simple multiplication
reliably. Our goal is to enable seven-digit addition via CoT prompting. To this end,
we consider two decomposition strategies: (i) \textit{In-Distribution
Decomposition}, which decomposes the task into sub-computations the model has already
mastered, namely six-digit addition, a one-digit addition, and a simple
multiplication by 10; (ii) \textit{Out-of-Distribution Decomposition}, which decomposes
the task into intermediate steps that still require seven-digit addition,
effectively preserving the original difficulty and introducing sub-tasks the
model cannot reliably solve in isolation. We construct a second training set of 500K
samples using the in-distribution decomposition strategy, and evaluate the model
under both decomposition strategies, with prompts containing zero to five demonstrations.
\textbf{(Results)} Results are shown in \cref{fig:chain}. In-distribution
decomposition substantially improves performance, whereas out-of-distribution decomposition
performs worse than vanilla ICL without CoT and degrades further as the number of
demonstrations increases. This behavior supports our result in
\cref{thm:cot_bound}. Prompt templates and additional experimental details are provided
in Appendix~\ref{app:cot_effect}.

\vspace{-0.5em}
\subsection{Prompt Sensitivity vs. Demonstration Count}
\label{subsec:experiment_instruction} In \cref{sec:instruction},
\cref{thm:decay_1_5,thm:decay_6} show that prompt templates exhibit different
synergy with demonstration count in ICL. For prompt formats 1–5, as the number
of demonstrations increases, ICL performance becomes less sensitive to the instruction.
In contrast, for prompt~6, where all instructions are incorrect and different, prompt
sensitivity persists even with many demonstrations. We design the following
experiment to validate this insight.

\textbf{(Setup)} We use a pretrained \texttt{Qwen3-235B-A22B} model and consider
three prompt formats: prompt~3 and prompt~5 from \cref{thm:decay_1_5}, and
prompt~6 from \cref{thm:decay_6}. We focus on the three-digit addition task. For
prompt~3, we use the correct operator $+$ in all demonstrations and test queries.
For prompt~5, we replace all $+$ symbols with $\times$, while keeping the
demonstration outputs as sums, requiring the model to infer the true task from examples.
For prompt~6, we randomly replace $+$ with $\times$, $/$, or $-$ for all demonstrations,
introducing fully incorrect and inconsistent instructions. \textbf{(Results)} We
evaluate the model under the three prompt formats with 1 to 50 demonstrations. We
visualize the gradient of the output logits with respect to the instruction tokens
in \cref{fig:instruction}. For prompt~3, the gradient remains near zero throughout,
indicating low instruction sensitivity. For prompt~5, instruction sensitivity is
initially high, reflecting a bias toward multiplication, and gradually decreases
as the model infers the correct task from demonstrations. In contrast, for
prompt~6, the gradient magnitude increases with more demonstrations, indicating
persistent and even amplified instruction sensitivity. These observations are consistent
with our theoretical analysis. Additional settings and results are provided in
Appendix~\ref{app:experiment_instruction_variation}.

\vspace{-0.5em}
\section{Conclusion}
\label{sec:conclusion}

In this work, we characterize in-context-learning (ICL) performance in terms of
the intrinsic capability, the distance between test and pretraining prompts, and
the task-identification power of demonstrations. Specifically, we show that
effective demonstrations correspond to low-Lipschitz paths that reliably
identify the underlying task, whereas poorly chosen demonstrations or format variations
can destabilize ICL by inducing perturbations. We also establish bounds showing
that, under task-consistent prompts, the influence of instructions vanishes as
the number of demonstrations increases, whereas inconsistent demonstrations or format
variations prevent task identification. We further extend the analysis to CoT by
treating multi-step reasoning as a sequence of task-identification steps and
deriving conditions under which this decomposition remains stable. Our paper develops
a theoretical framework to analyze how the relationship between prompts and pre-training
data affects the quality of in-context learning. This lays a theoretical
foundation for further research on prompting, including topics such as hallucination
and security.

%% file: list_notation.tex
\label{app:notation} We summarize the main notations used throughout the paper
and appendix for reference.

\begin{itemize}[label=\raisebox{0.25ex}{\tiny$\bullet$}, leftmargin=1.2em]
    \item $\gV$: finite vocabulary.

    \item $\gV^{*}$: set of all sequences of bounded length over $\gV$.

    \item $\gM$: class of Transformer models.

    \item $M \in \gM$: pretrained Transformer model.

    \item $M^{\star}$: target (oracle) model.

    \item $z$: a test prompt.

    \item $z^{(k)}$: prompt at CoT step $k$.

    \item $z^{\star (k)}$: oracle rollout prompt at step $k$.

    \item $\dicl$: ICL prompt domain. Throughout the paper, we assume that
        $\dicl$ is compact.

    \item $\dpre$: pretraining prompt domain.

    \item $D_{0}^{(k)}$: pretraining prompt domain for step-$k$ CoT subtask.

    \item $(x,y)$: input-output pair.

    \item $x_{\mathrm{query}}$: query input in the prompt.

    \item $y_{\mathrm{query}}$: model output for the query.

    \item $\ell_{\mathrm{ICL}}(z)$: ICL loss, defined as
        \[
            \ell_{\mathrm{ICL}}(z) := |M(z) - M^{\star}(z)|.
        \]

    \item $\gamma(t)$: straight-line path between prompts, $t\in[0,1]$.

    \item $\ell_{\mathrm{ICL}}^{\gamma}(t)$: loss restricted to path $\gamma$.

    \item $\lipschitz_{z,z'}$: Lipschitz constant of $\ell_{\mathrm{ICL}}$ on a
        compact set containing the path between $z$ and $z'$.

    \item $\lipschitz_{\gamma}$: pathwise Lipschitz constant of path $\gamma$.


    \item $J_{\gamma}$: interval induced by $D$ along path $\gamma$.

    \item $E_{\gamma}$: interval induced by $D_{0}$ along path $\gamma$.

    \item $\mes(\cdot)$: Lebesgue measure.

    \item $\gT$: finite latent task set.

    \item $t^{\star}$: true latent task.

    \item $\pi(t)$: prior over tasks.

    \item $p(t \mid z)$: posterior distribution over tasks.

    \item $p(y_{\mathrm{query}}\mid z)$: posterior predictive distribution.

    \item $f_{t}$: task-specific input-output mapping.

    \item $i$: instruction (or instruction sequence).

    \item $D_{i}$: extended instruction domain.

    \item $D_{i0}$: inner instruction domain (formats 1--5).

    \item $B_{r}, B_{R}$: Euclidean balls with radius $r$ and $R$.

    \item $L_{i}$: instruction-Lipschitz constant.

    \item $\nabla_{i}$: gradient with respect to instruction variables.

    \item $K$: number of CoT steps.

    \item $L^{\star}_{k}$: Lipschitz constant of the oracle model at step $k$.


    \item $B_{n}f$: degree-$n$ Bernstein polynomial approximation of $f$.

    \item $C_{\gamma}$: Remez amplification constant along path $\gamma$.

    \item $C_{V}$: Uniform constant only depending on the model structure.

    \item $\vz_{\mathrm{pad}}$: padded prompt with special tokens.

    \item $\vz_{\mathrm{orig}}$: original prompt without padding.
\end{itemize}

%% file: related_work.tex
\paragraph{Existing theories rely on strong assumptions.}
A substantial body of theoretical work seeks to understand the mechanisms
underlying ICL by analyzing how transformers implement learning procedures over
context. At the mechanistic level, studies identify specific architectural
patterns such as induction heads and function vectors—that enable example reuse
and pattern matching across the context \citep{Olsson2022, Elhage2021, todd2024}.
Complementary algorithmic analyses show that, under linear or kernelized assumptions,
ICL can be interpreted as implicitly gradient descent or kernel regression in
activation space \citep{Akyurek2023, vonOswald2023, Ahn2023, Mahankali2023}, or
as a form of task selection induced by the prior over pretraining tasks \citep{Bai2023transformers}.
While these mechanistic and algorithmic perspectives have significantly advanced
our understanding of how ICL is computed, they typically rely on restrictive architectural
assumptions, such as linear attention, single-layer or attention-only transformers,
or single-head attention—and on idealized data distributions such as random
generalized linear regression, which abstract away properties that are critical in
practice \citep{vonOswald2023, Mahankali2023, JMLR:v25:23-1042}. A complementary
line of work adopts a Bayesian or statistical inference perspective,
interpreting ICL as approximate posterior inference over latent tasks
conditioned on in-context demonstrations \citep{Xie2022, Wang2023(b), Panwar2024}.
This framing provides a principled account of how generalization improves with
more demonstrations, and has been extended to study the effects of noise,
misspecification, and pretraining bias on ICL behavior \citep{Gong2025towards}. However,
existing analyses within this framework uniformly assume a fixed and canonical
context format, treating demonstrations as drawn from a coherent, well-specified
data-generating process. As a result, they do not characterize how ICL behaves
when the format of the context itself varies—a limitation that our work directly
addresses.

\paragraph{Practical factors affect ICL generalization.}
Beyond mechanistic and statistical accounts, a large body of empirical work
investigates how practical design choices in prompt construction affect ICL
performance. A first line of work focuses on the role of demonstrations themselves:
studies show that the number of demonstrations, their selection strategy, and
the correctness of their labels each substantially influence downstream
performance \citep{Agarwal2024, Liu2022, su2023selective}. Notably, \citet{Min2022}
find that label correctness in demonstrations matters less than expected, suggesting
that ICL generalizes by retrieving a latent task structure rather than learning directly
from individual examples. A second line of work examines chain-of-thought
prompting, showing that including intermediate reasoning steps in demonstrations
can dramatically improve performance on complex tasks \citep{Wei2022, kojima2022, nye2021}.
Subsequent theoretical work explains CoT's benefit in terms of expressivity: CoT
enables deeper computation without additional layers \citep{Feng2023}, expands
the model's recognition capacity for complex problem classes \citep{Merrill2024},
or supports more serial computation than standard forward passes allow \citep{Li2024}.
However, these expressivity-based accounts do not characterize how CoT-structured
demonstrations affect ICL generalization as the number of reasoning steps grows,
which is a gap our work addresses through a multi-step error propagation
analysis. A third line of work studies sensitivity to prompt format and template
design, finding that surface-level variations in instruction wording, demonstration
ordering, and label presentation can lead to large performance swings \citep{ye2024investigating, Zhao2021, Webson2022}.
Despite the breadth of these empirical findings, they remain largely descriptive:
existing work does not provide a theoretical criterion for when format variation
preserves successful generalization and when it fundamentally disrupts
consistent task inference. Our work provides such a criterion by formally distinguishing
prompt formats under which the posterior predictive converges exponentially with
more demonstrations from those under which convergence necessarily fails.

\paragraph{Mechanistic explanations of ICL.}
A complementary line of work try to explain ICL at the level of internal
representations and computational mechanisms, examining how specific architectural
components give rise to in-context behavior. At the representational level, studies
identify induction heads and function vectors as key circuits that enable pattern
matching and example reuse across the context window \citep{Olsson2022, Elhage2021, todd2024},
and related work shows that multi-head softmax attention can implement regression
behaviors in simplified settings \citep{He2025incontext, Li2025provable}. At the
algorithmic level, several studies argue that transformers implicitly perform
gradient descent or kernel regression over in-context examples \citep{Akyurek2023, vonOswald2023, Ahn2023},
and related perspectives characterize ICL as algorithm selection or prior-induced
regularization across tasks \citep{Bai2023transformers, Lu2025transformer}. A
separate line of work examines whether standard pretraining objectives can
produce the representational capacity assumed by these accounts, establishing conditions
under which pretrained transformers exhibit ICL-like behavior \citep{kimandsuzuki2024transformers, kim2024transformers}.
While these mechanistic and algorithmic explanations have significantly advanced
our understanding of the computational substrate of ICL, they characterize how
ICL is implemented within a fixed architecture and format, but do not address when
and why ICL generalizes across varying prompt formats, nor how practical design
choices such as demonstration count, CoT structure, or instruction variation
affect the quality of task inference. Our work is therefore complementary to this
line, operating at the level of generalization theory rather than mechanistic
realization.

%% file: appendix_prelim.tex
\label{app:lipschitz}
In this appendix, we provide a complete proof of the formal Lipschitz generalization
bound stated in \cref{sec:general}.

\subsection{Preliminaries}

We first introduce Bernstein polynomials here, which we will use to construct polynomial
surrogates for the path-restricted loss. Detailed properties of Bernstein
polynomials can be found in~\citet{Lorentz1986}.

\noindent
For a function $f:[0,1]\to\mathbb{R}$ and an integer $n\ge 1$, its degree-$n$
Bernstein polynomial is defined by
\[
    B_{n}f(t) = \sum_{k=0}^{n}f\!\left(\frac{k}{n}\right) \binom{n}{k}t^{k}(1-t)^{n-k}
    , \qquad t\in[0,1].
\]
\noindent
The function $B_{n}f$ is a polynomial of degree at most $n$. Moreover, if $f$ is
Lipschitz continuous with constant $l$ on $[0,1]$, then (see Lemma~\ref{thm:berstein})
\[
    \sup_{t\in[0,1]}|B_{n}f(t)-f(t)| \le \frac{l}{2\sqrt{n}}.
\]

\noindent
The proof relies on two standard tools from approximation theory: the Bernstein polynomial
approximation theorem and the Remez inequality. For completeness, we briefly recall
their statements in the form needed here.

\begin{lemma}[Remez Inequality for Polynomials~\citep{remez1936}]
    \label{thm:remez_app} If $J\in\sR$ is a finite interval, and $E \subset J$
    is an an arbitrary measurable set, then for any polynomial $p$ of degree $n$,
    \begin{equation}
        \max_{x \in J}\abs{p(x)}\leq \pt{\frac{4 \mes (J)}{\mes (E)}}^{n}\sup_{x\in
        E}\abs{p(x)}.
    \end{equation}
\end{lemma}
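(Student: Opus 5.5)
We first reduce to a normalized interval and then use the sharp form of Remez's theorem, which compares $p$ with a rescaled Chebyshev polynomial. Both sides of the inequality are invariant under affine reparametrizations $x\mapsto a x+b$: the degree of $p$ is preserved and the ratio $\mes(J)/\mes(E)$ is unchanged. So we may assume $J=[0,1]$ and write $s:=\mes(E)\in(0,1]$. If $s=0$ the right-hand side is $+\infty$ (with the convention $c/0=\infty$), so there is nothing to prove. By homogeneity we may assume $m:=\sup_{x\in E}\abs{p(x)}=1$, and replacing $E$ by $E':=\{x\in J:\abs{p(x)}\le 1\}\supseteq E$ only increases the measure. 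The target is then
\[
\max_{x\in[0,1]}\abs{p(x)} \le \pt{\tfrac{4}{s}}^{n}.
\]

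The core step is the sharp Remez theorem: if $\abs{p}\le 1$ on a set of measure at least $s$ in $[0,1]$, then $\max_{[0,1]}\abs{p}\le T_n\pt{\tfrac{2}{s}-1}$, where $T_n$ is the Chebyshev polynomial of the first kind.

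I would prove this by contradiction. Suppose $\abs{p(x_0)}>T_n(2/s-1)$ at some $x_0\in[0,1]$, and assume without loss of generality that $x_0$ lies to the right of most of $E'$. Let $Q(x):=T_n\pt{\tfrac{2x}{s}-1}$, which satisfies $\abs{Q}\le 1$ on $[0,s]$ and equioscillates there at $n+1$ points. We then compress $E'$ toward the left endpoint by the monotone measure-preserving map $\phi(x)=\mes(E'\cap[0,x])$. Transporting the $n+1$ alternation levels of $Q$ back through $\phi^{-1}$ gives points of $E'$ at which $p-\lambda Q\circ\phi$ alternates in sign, for a suitable $\lambda\in(0,1)$. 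A sign count then forces $p-\lambda\,\widetilde Q$ to have $n+1$ zeros, where $\widetilde Q$ is an honest degree-$n$ polynomial with $\abs{\widetilde Q(x_0)}\ge T_n(2/s-1)$. This contradicts the degree bound unless the difference vanishes identically. The standard way to make the comparison polynomial honest is Remez's original monotone-rearrangement argument, as in Borwein--Erd\'elyi, and I would follow it.

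The final estimate is elementary. With $y:=2/s-1\ge 1$ we have $T_n(y)=\tfrac12\bigl[(y+\sqrt{y^2-1})^n+(y-\sqrt{y^2-1})^n\bigr]\le (y+\sqrt{y^2-1})^n\le (2y)^n\le (4/s)^n$. Undoing the normalization recovers the stated inequality.

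I expect the sharp comparison step to be the main obstacle. Counting alternations on a general measurable set, rather than an interval, is exactly where care is needed.

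As a fallback there is a fully elementary route that gives the same exponential form with a worse constant. Choose points $x_0<\dots<x_n$ in $E$ with $\mes(E\cap[0,x_i])=is/n$, so that $\abs{x_i-x_j}\ge \abs{i-j}s/n$. Lagrange interpolation at these nodes then yields
\[
\abs{p(x)}\le \sum_{i=0}^n \frac{n^n}{s^n\, i!\,(n-i)!}=\frac{(2n)^n}{n!\,s^n}\le\pt{\tfrac{2e}{s}}^n .
\]
This bound suffices for every downstream use in the paper, since only the structure $C_\gamma^n$ with $C_\gamma\propto 1/\mes(E)$ is used there, but it does not reach the constant $4$.
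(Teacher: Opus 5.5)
The paper gives no proof of this lemma; it only cites Remez. Your outer reduction is the standard one and is correct: the affine normalization, passing to the closed set $E'=\{x\in J: |p(x)|\le 1\}$, monotonicity of $T_n$ on $[1,\infty)$, and $T_n(2/s-1)\le(4/s-2)^n<(4/s)^n$. You should add the trivial case $\sup_E|p|=0$, where $p$ vanishes on a set of positive measure. If you simply cite the sharp form $\max_J|p|\le T_n\bigl(2\mes(J)/\mes(E)-1\bigr)\sup_E|p|$, you are on the same footing as the paper.

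The sketch you give for that sharp step, however, does not work as written. With $\lambda\in(0,1)$, the sign of $p(u_j)-\lambda\widetilde Q(u_j)$ at a node is uncontrolled, because $|p(u_j)|\le 1$ and $|\lambda\widetilde Q(u_j)|=\lambda<1$. The alternation must come from the comparison polynomial, so you need $\widetilde Q-\lambda p$ with $|\lambda|<1$ (equivalently $p-\mu\widetilde Q$ with $|\mu|>1$). For that polynomial to vanish at $x_0$ you need $|p(x_0)|>|\widetilde Q(x_0)|$. So the required fact is the \emph{upper} bound $|\widetilde Q(x_0)|\le T_n(2/s-1)$. The lower bound you state yields no contradiction, and the upper bound is exactly the content of the theorem, which your sketch never supplies. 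Finally, the zero count reaches $n+1$ only if $x_0$ lies outside $[u_0,u_n]$, and ``to the right of most of $E'$'' does not ensure this.

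The missing idea is already in your fallback, and it removes the need for any contradiction argument. There are three steps.

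First, reduce to $x_0$ being an endpoint. Suppose $|p|$ peaks at $x_0$, and $E$ has measure $s_1$ in $[0,x_0]$ and $s_2$ in $[x_0,1]$. Then $x_0/s_1\le 1/s$ or $(1-x_0)/s_2\le 1/s$, since otherwise $x_0>s_1/s$ and $1-x_0>s_2/s$ would sum to $1>1$. So it suffices to bound $|p(1)|$ on a normalized interval.

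Second, use your fallback nodes, but with the extremal points $0=y_0<\dots<y_n=s$ of $Q$ as targets instead of $is/n$. Pick $u_j\in E'$ with $\phi(u_j)=y_j$, which is possible because $E'$ is closed. Since $\phi$ is $1$-Lipschitz and $\phi(u)\le u$, you get $|u_j-u_k|\ge|y_j-y_k|$ and $1-u_k\le 1-y_k$. Hence the Lagrange bases satisfy $|\ell^u_j(1)|\le|\ell^y_j(1)|$ termwise.

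Third, $\ell^y_j(1)$ has sign $(-1)^{n-j}=Q(y_j)$, so $\sum_j|\ell^y_j(1)|=Q(1)=T_n(2/s-1)$. Interpolation then gives $|p(1)|\le\sum_j|p(u_j)|\,|\ell^u_j(1)|\le T_n(2/s-1)$ directly.

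As it stands, your fallback uses equispaced targets and the crude bound $|x-x_j|\le 1$, which only yields the constant $2e$. So it does not prove the lemma as stated, even if that constant would suffice for the later theorems.
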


\begin{lemma}[Bernstein Approximation Theorem, Lipschitz Case~\citep{Lorentz1986}]
    \label{thm:berstein_app} Let $f: [0,1] \to \sR$ be a Lipschitz continuous function
    with Lipschitz constant $\lipschitz$. Let $B_{n}f$ denote the degree-$n$
    Bernstein polynomail (see Appendix~\ref{app:lipschitz} for definition)
    approximation of $f$. Then, for all $n\geq 1$,
    \begin{equation}
        \sup_{x\in[0,1]}\abs{B_n f(x) - f(x)}\leq \frac{\lipschitz}{2\sqrt{n}}.
    \end{equation}
\end{lemma}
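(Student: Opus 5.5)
The plan is to write the approximation error as an expectation against the binomial distribution and then control it with a second-moment (variance) bound. Fix $x\in[0,1]$ and $n\ge 1$, and write $b_{n,k}(x)=\binom{n}{k}x^{k}(1-x)^{n-k}$. These weights are nonnegative and satisfy $\sum_{k=0}^{n}b_{n,k}(x)=1$ by the binomial theorem. We can therefore view them as the law of a random variable $S\sim\mathrm{Bin}(n,x)$, so that
\[
    B_n f(x)=\E\bigl[f(S/n)\bigr].
\]
Using $\sum_k b_{n,k}(x)=1$, we rewrite
\[
    B_n f(x)-f(x)=\sum_{k=0}^{n}\Bigl(f\bigl(\tfrac{k}{n}\bigr)-f(x)\Bigr)b_{n,k}(x).
\]

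Next, we apply the triangle inequality and the Lipschitz property $\abs{f(k/n)-f(x)}\le \lipschitz\abs{k/n-x}$. This gives
\[
    \abs{B_n f(x)-f(x)}\le \lipschitz\sum_{k=0}^{n}\abs{\tfrac{k}{n}-x}\,b_{n,k}(x)=\lipschitz\,\E\bigl|S/n-x\bigr|.
\]
By Cauchy--Schwarz (equivalently, Jensen's inequality applied to the square root), $\E\abs{S/n-x}\le\sqrt{\E(S/n-x)^2}$.

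The key computation is the second moment. Since $\E S=nx$ and $\mathrm{Var}(S)=nx(1-x)$, we get $\E(S/n-x)^2=x(1-x)/n$. If one prefers to avoid probabilistic language, the same identity follows directly from the three standard sums
\[
    \sum_k b_{n,k}=1,\qquad \sum_k \tfrac{k}{n}\,b_{n,k}=x,\qquad \sum_k \tfrac{k^2}{n^2}\,b_{n,k}=x^2+\tfrac{x(1-x)}{n}.
\]
Each of these is obtained by differentiating $(x+y)^n$ with respect to $x$ and evaluating at $y=1-x$.

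Finally, $x(1-x)\le 1/4$ on $[0,1]$. Combining the steps gives
\[
    \abs{B_n f(x)-f(x)}\le \lipschitz\sqrt{\frac{x(1-x)}{n}}\le\frac{\lipschitz}{2\sqrt n}.
\]
The right-hand side does not depend on $x$, so taking the supremum over $x\in[0,1]$ yields the claim.

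There is no real obstacle in this argument. The only step requiring care is verifying the binomial moment identities, which is routine.
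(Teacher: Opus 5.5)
Your proof is correct. It is the standard argument: write $B_nf(x)-f(x)$ as $\E[f(S/n)-f(x)]$ with $S\sim\mathrm{Bin}(n,x)$, apply the Lipschitz bound, then Cauchy--Schwarz, then $\mathrm{Var}(S/n)=x(1-x)/n\le 1/(4n)$. The paper only cites this lemma, but it uses the same route (binomial expectation, Jensen, variance bound) for its tensor-product Bernstein estimate in the appendix proving \cref{thm:decay_6}.
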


\noindent
We then introduce the assumptions required for the proof.

\begin{assumption}
    [Local Lipschitzness along paths] \label{ass:path_lip} For every pair $(\vz,\vz
    ')$ considered, there exists a compact set
    $\mathcal{K}_{\vz,\vz'}\subset\gV^{*}$ such that
    $\gamma([0,1])\subset\mathcal{K}_{\vz,\vz'}$ and the loss $\error$ is
    Lipschitz on $\mathcal{K}_{\vz,\vz'}$ with constant $L_{\vz,\vz'}<\infty$:
    \[
        |\error(\va)-\error(\vb)| \le L_{\vz,\vz'}\|\va-\vb\|, \qquad \forall \va
        ,\vb\in\mathcal{K}_{\vz,\vz'}.
    \]
\end{assumption}

\noindent
This assumption states that the loss varies in a controlled manner along any
straight-line interpolation between a test prompt and a reference pretraining
prompt. We only require Lipschitz continuity on a compact neighborhood of the path,
rather than globally over $\gV^{*}$. We perform the analysis in the continuous embedding
representation space induced by the tokenizer and embedding layer. Although the
token domain $\gV^{*}$ is discrete, its embedding representation lies in a continuous
vector space, where the interpolation paths used in our proof are defined.

Recall that $\gamma$ refers to the path between test prompt $\vz$ and
pretraining prompt $\vz'$, $\E_{\gamma}$ refers to interval induced by $D_{0}$ along
path $\gamma$.
\begin{assumption}
    [Positive pretraining coverage along a connecting path] \label{ass:positive_measure}
    There exist a point $\vz' \in \dpre$ and a continuous path $\gamma : [0,1] \to
    \dicl$ connecting $\vz$ and $\vz'$ (i.e., $\gamma(0)=\vz$ and
    $\gamma(1)=\vz'$) such that $\mes(E_{\gamma}) > 0.$
\end{assumption}

\noindent
This assumption ensures that the pretraining domain occupies a non-negligible portion
of the interpolation path between $\vz'$ and $\vz$. Equivalently, the model must
have seen a positive-measure subset of prompts along the direction connecting the
test prompt to the pretraining domain.

\subsection{Proof of \cref{thm:lipschitz}}
We first present the formal statement of the Lipschitz generalization bound,
which is a more detailed version of \cref{thm:lipschitz} in the main text.
\begin{theorem}[ICL Generalization Bound, formal version of \cref{thm:lipschitz}]
    \label{thm:lipschitz_formal} Let $\vz\notin\dpre$ and suppose Assumptions~\ref{ass:path_lip}
    and~\ref{ass:positive_measure} hold. Fix any $\vz'\in\dpre$ such that
    $\mes(E_{\gamma})>0$ for the straight-line path $\gamma(t)=\vz'+t(\vz-\vz')$.
    Let $l_{\gamma}:=L_{\vz,\vz'}\|\vz-\vz'\|$ and
    \[
        A_{\gamma}:= \frac{4\,\mes(J_{\gamma})}{\mes(E_{\gamma})}> 1,
    \]
    where $\mes(J_{\gamma})$ is a constant.

    \noindent
    Then for any integer $n\ge 1$,
    \begin{equation}
        \label{eq:thm_lip_any_n}\error(\vz) \le A_{\gamma}^{n}\sup_{\tilde{\vz}\in\dpre}
        \error(\tilde{\vz}) +\Bigl(1+A_{\gamma}^{n}\Bigr)\frac{l_{\gamma}}{2\sqrt{n}}
        .
    \end{equation}
    In particular, choosing $n=\lceil l_{\gamma}^{2}\rceil$ yields
    \begin{equation}
        \label{eq:thm_lip_chosen_n}\error(\vz) \le A_{\gamma}^{\lceil l_\gamma^2\rceil}
        \sup_{\tilde{\vz}\in\dpre}\error(\tilde{\vz}) +\frac{1+A_{\gamma}^{\lceil
        l_\gamma^2\rceil}}{2}.
    \end{equation}
\end{theorem}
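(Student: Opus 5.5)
I follow the four-step outline from the proof sketch of \cref{thm:lipschitz}, now tracking constants. Set $g(t):=\error^{\gamma}(t)=\error(\gamma(t))$ for $t\in[0,1]$, with $\gamma(t)=\vz'+t(\vz-\vz')$, so that $g(1)=\error(\vz)$. By Assumption~\ref{ass:path_lip}, $\gamma([0,1])\subset\mathcal{K}_{\vz,\vz'}$, and therefore
\[
|g(s)-g(t)|\le L_{\vz,\vz'}\|\gamma(s)-\gamma(t)\| = L_{\vz,\vz'}\|\vz-\vz'\|\,|s-t| = l_{\gamma}|s-t|.
\]
Hence $g$ is $l_{\gamma}$-Lipschitz on $[0,1]$. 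Lemma~\ref{thm:berstein_app} then gives, for every $n\ge1$,
\[
\sup_{t\in[0,1]}|B_n g(t)-g(t)|\le \frac{l_{\gamma}}{2\sqrt n}.
\]

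Next I apply Lemma~\ref{thm:remez_app} to the polynomial $p=B_n g$, which has degree at most $n$. I take $J=J_{\gamma}$, the interval of parameters along $\gamma$ induced by $\dicl$; it contains $[0,1]$, or at least the endpoint $t=1$. I take $E=E_{\gamma}\subset J_{\gamma}$, the set of $t$ with $\gamma(t)\in\dpre$, which has positive measure by assumption. This yields
\[
|B_n g(1)|\le \max_{t\in J_{\gamma}}|B_n g(t)|\le A_{\gamma}^{n}\sup_{t\in E_{\gamma}}|B_n g(t)|.
\]
On $E_{\gamma}$ the triangle inequality and the Bernstein bound give
\[
|B_n g(t)|\le g(t)+\frac{l_\gamma}{2\sqrt n}\le \sup_{\tilde\vz\in\dpre}\error(\tilde\vz)+\frac{l_\gamma}{2\sqrt n}.
\]
Applying the Bernstein bound once more at $t=1$,
\[
\error(\vz)=g(1)\le |B_n g(1)|+\frac{l_\gamma}{2\sqrt n}\le A_\gamma^n\sup_{\dpre}\error+\bigl(1+A_\gamma^n\bigr)\frac{l_\gamma}{2\sqrt n},
\]
which is \eqref{eq:thm_lip_any_n}. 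If degree less than $n$ is a concern, I note that Remez holds for degree $\le n$ because $A_\gamma>1$.

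For \eqref{eq:thm_lip_chosen_n}, I set $n=\lceil l_\gamma^2\rceil$, so that $l_\gamma/(2\sqrt n)\le 1/2$. If $l_\gamma=0$, I take $n=1$ instead; then $g$ is constant and the bound is trivial.

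The main obstacle is the domain bookkeeping in the Remez step. Bernstein approximation controls $B_n g-g$ only on $[0,1]$, while Remez bounds $B_n g$ on $J_\gamma$, so I must make sure that the evaluation point $t=1$ and the set $E_\gamma$ both lie in $[0,1]\cap J_\gamma$. I would define $J_\gamma:=[0,1]$, so that $\mes(J_\gamma)=1$ is the stated constant, and intersect $E_\gamma$ with $[0,1]$. I would also check the orientation mismatch between Assumption~\ref{ass:positive_measure}, where $\gamma(0)=\vz$, and the theorem, where $\gamma(0)=\vz'$; this amounts to reparametrizing $t\mapsto 1-t$. Finally, $A_\gamma>1$ holds automatically because $\mes(E_\gamma)\le\mes(J_\gamma)$.
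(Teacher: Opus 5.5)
Your proposal is correct and follows essentially the same route as the paper's proof. Both arguments approximate the path-restricted loss by its Bernstein polynomial with error $l_\gamma/(2\sqrt{n})$, apply Remez on $J_\gamma=[0,1]$ with the set $E_\gamma$, and then use the approximation bound twice (once on $E_\gamma$, once at $t=1$). Your extra care, keeping absolute values in the Remez step, allowing degree below $n$, and handling the degenerate case $l_\gamma=0$ (where $\lceil l_\gamma^2\rceil=0$ is not an admissible degree), fixes small points the paper glosses over without changing the argument.
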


\noindent
We now proceed with the proof, following the same four conceptual steps as outlined
in the proof sketch.

\begin{proof}
    \textbf{Step 1: Reduce the prompt space to a finite interval via a path.}
    Fix $\vz\notin\dpre$ and choose $\vz'\in\dpre$ such that
    $\mes(E_{\gamma})>0$ for the path $\gamma(t)=\vz'+t(\vz-\vz')$, $t\in[0,1]$.
    Define the path-restricted loss
    \[
        \error^{\gamma}(t):=\error(\gamma(t)),\qquad t\in[0,1].
    \]
    By Assumption~\ref{ass:path_lip}, $\error$ is Lipschitz on a compact set containing
    $\gamma([0,1])$, hence for all $t_{1},t_{2}\in[0,1]$,
    \[
        |\error^{\gamma}(t_{1})-\error^{\gamma}(t_{2})| \le L_{\vz,\vz'}\|\gamma(
        t_{1})-\gamma(t_{2})\| = L_{\vz,\vz'}\|\vz-\vz'\||t_{1}-t_{2}|.
    \]
    Denote $l_{\gamma}:=L_{\vz,\vz'}\|\vz-\vz'\|$.

    We further denote the intervals induced along the path by
    \[
        J_{\gamma}:= [0,1], \qquad E_{\gamma}:= \{t\in J_{\gamma}:\gamma(t)\in\dpre
        \}.
    \]
    By construction and Assumption~\ref{ass:positive_measure},
    $\mes(E_{\gamma})>0$.

    \medskip
    \textbf{Step 2: Convert a Lipschitz function into a polynomial via Bernstein
    approximation.} Applying Lemma~\ref{thm:berstein_app} to the Lipschitz function
    $\error^{\gamma}$ gives
    \[
        \sup_{t\in[0,1]}|B_{n}\error^{\gamma}(t)-\error^{\gamma}(t)| \le \frac{l_{\gamma}}{2\sqrt{n}}
        .
    \]
    Note that $B_{n}\error^{\gamma}$ is a polynomial of degree at most $n$.

    \medskip
    \textbf{Step 3: Apply Remez--Chebyshev to the polynomial surrogate.}
    Applying Lemma~\ref{thm:remez_app} to $p=B_{n}\error^{\gamma}$ on the
    interval $J_{\gamma}$ with subset $E_{\gamma}$ yields
    \[
        \max_{t\in J_\gamma}|B_{n}\error^{\gamma}(t)| \le \Bigl(\frac{4\,\mes(J_{\gamma})}{\mes(E_{\gamma})}
        \Bigr)^{n}\sup_{t\in E_\gamma}|B_{n}\error^{\gamma}(t)|.
    \]
    Denote $A_{\gamma}:=\frac{4\,\mes(J_{\gamma})}{\mes(E_{\gamma})}$.

    \medskip
    \textbf{Step 4: Transfer the bound back to $\error$ and choose $n$.} For any
    $t\in J_{\gamma}$,
    \[
        \error^{\gamma}(t) \le B_{n}\error^{\gamma}(t)+|B_{n}\error^{\gamma}(t)-\error
        ^{\gamma}(t)| \le B_{n}\error^{\gamma}(t)+\frac{l_{\gamma}}{2\sqrt{n}}.
    \]
    Taking maxima over $t\in J_{\gamma}$ and using the Remez bound,
    \[
        \max_{t\in J_\gamma}\error^{\gamma}(t) \le A_{\gamma}^{n}\sup_{t\in E_\gamma}
        B_{n}\error^{\gamma}(t)+\frac{l_{\gamma}}{2\sqrt{n}}.
    \]
    Moreover, for $t\in E_{\gamma}$,
    \[
        B_{n}\error^{\gamma}(t) \le \error^{\gamma}(t)+\frac{l_{\gamma}}{2\sqrt{n}}
        .
    \]
    Combining the last two displays yields
    \[
        \max_{t\in J_\gamma}\error^{\gamma}(t) \le A_{\gamma}^{n}\sup_{t\in E_\gamma}
        \error^{\gamma}(t) +\Bigl(1+A_{\gamma}^{n}\Bigr)\frac{l_{\gamma}}{2\sqrt{n}}
        .
    \]
    Since $\gamma(t)\in\dpre$ for $t\in E_{\gamma}$,
    \[
        \sup_{t\in E_\gamma}\error^{\gamma}(t)\le \sup_{\tilde{\vz}\in\dpre}\error
        (\tilde{\vz}).
    \]
    Finally, $\error(\vz)=\error^{\gamma}(1)\le \max_{t\in J_\gamma}\error^{\gamma}
    (t)$, which proves~\eqref{eq:thm_lip_any_n}. The specialization~\eqref{eq:thm_lip_chosen_n}
    follows by choosing $n=\lceil l_{\gamma}^{2}\rceil$ and noting $l_{\gamma}/\sqrt{n}
    \le 1$.
\end{proof}

%% file: appendix_padding.tex
\label{app:padding}

\subsection{Preliminaries}
In this appendix, we justify the general padding assumption mentioned in \cref{sec:result_lipschitz},
which is used throughout the analysis in \cref{thm:lipschitz} and
\cref{thm:cot_bound}. The purpose of padding is to align the effective input domain
at inference with that of pretraining, without altering the underlying function
class implemented by the model.

\paragraph{Definition of Padding}
In practice, prompts may have different lengths. To enable a unified functional analysis,
we embed all prompts into a fixed-length space via padding.
\noindent
For any prompt $\vz$, we define a padded version $\vz_{\mathrm{pad}}$ obtained
by inserting a special token $\langle\mathrm{pad}\rangle$ at arbitrary positions,
subject to the following constraints: the original non-padding tokens remain
unchanged; their relative order is preserved; and only additional
$\langle\mathrm{pad}\rangle$ tokens are inserted. Without padding, prompts of
varying lengths cannot be treated as elements of a common ambient space, precluding
a unified functional analysis.

\subsection{Stability of the Transformer Under Padding}
\label{app:padding_general} Let $y$ and $y'$ denote the attention outputs at a fixed
query position before and after adding padding tokens, respectively. Let $\vz$ be
the original prompt and $\vz_{\mathrm{pad}}$ be the padded prompt obtained by inserting
$m$ padding tokens without modifying any non-padding token. We assume that the
embedding of $\langle\mathrm{pad}\rangle$ is the zero vector, so that the input representations
satisfy $\vz_{\mathrm{pad}}^{(0)}= \vz^{(0)}$ at positions corresponding to original
tokens.
\noindent
We show that, under mild conditions, the attention output is stable under
padding: $\|y' - y\|_{2}$ can be made arbitrarily small by controlling (i) the number
$m$ of padding tokens and (ii) their relative attention logits.

\begin{lemma}[Attention Stability Under Padding]
    \label{lmm:pad_output_general} Let $y$ and $y'$ denote the attention outputs
    before and after adding $m$ padding tokens. Let $x_{i}$ denote the attention
    score assigned to token $i$, $x_{p}$ denote the attention score assigned to pad
    token $p$, respectively. Let $\vz$ be the original prompt and $\vz_{\mathrm{pad}}$ be the padded prompt obtained by inserting
$m$ padding tokens. Then there exists a uniform constant $C_{V}$ such that
    \[
        \|y' - y\|_{2}\le 2C_{V}\cdot \frac{\vz_{\mathrm{pad}}}{\vz_{\mathrm{orig}}+\vz_{\mathrm{pad}}}
        \le 2C_{V}\cdot \frac{m e^{x_{\mathrm{pad,max}}}}{\vz_{\mathrm{orig}}}.
    \]
    In particular, for any tolerance $\varepsilon \in (0, 2C_{V})$, the condition
    \[
        m \le \frac{\varepsilon}{2C_{V}-\varepsilon}e^{\Delta}
    \]
    is sufficient to ensure $\|y' - y\|_{2}\le \varepsilon$, where
    $j \in \mathcal{I}$ indexes a dominant informative token, and $\Delta := x_{j}
    - x_{\mathrm{pad,max}}$ denotes the softmax gap between informative and padding
    tokens.
\end{lemma}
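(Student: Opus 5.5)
The plan is to write the attention output at the fixed query position as a softmax-weighted average of value vectors and compare the two averages directly. Let $\mathcal{I}$ index the original tokens with scores $x_i$ and value vectors $v_i$, and let the $m$ inserted pads have scores $x_p$ and values $v_p$. Set
\[
S_{\mathrm{orig}}:=\sum_{i\in\mathcal{I}}e^{x_i},\qquad S_{\mathrm{pad}}:=\sum_{p}e^{x_p}.
\]
I read the quantities written $\vz_{\mathrm{orig}}$ and $\vz_{\mathrm{pad}}$ in the statement as these two partition sums. Then
\[
y=\sum_{i}\frac{e^{x_i}}{S_{\mathrm{orig}}}v_i,\qquad y'=\sum_i\frac{e^{x_i}}{S_{\mathrm{orig}}+S_{\mathrm{pad}}}v_i+\sum_p\frac{e^{x_p}}{S_{\mathrm{orig}}+S_{\mathrm{pad}}}v_p .
\]
Because the pad embedding is zero, the original tokens keep their scores and values at the input layer, so the $x_i$ and $v_i$ are the same in both expressions.

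Write $\lambda:=S_{\mathrm{pad}}/(S_{\mathrm{orig}}+S_{\mathrm{pad}})$. Then $y'=(1-\lambda)y+\lambda\bar v_{\mathrm{pad}}$, where $\bar v_{\mathrm{pad}}$ is the convex combination of the pad values. Hence
\[
y'-y=\lambda(\bar v_{\mathrm{pad}}-y),\qquad \|y'-y\|_2\le\lambda\bigl(\|\bar v_{\mathrm{pad}}\|_2+\|y\|_2\bigr)\le 2C_V\lambda,
\]
where $C_V$ is a uniform bound on value-vector norms. This bound is where I expect the only real care is needed. It follows from bounded embeddings over the finite vocabulary together with bounded value projections (the pad value is $W_V\cdot 0$ plus any bias). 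So $C_V$ depends only on the model structure, which gives the first inequality.

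For the second inequality I use $S_{\mathrm{pad}}\le m e^{x_{\mathrm{pad,max}}}$ and drop $S_{\mathrm{pad}}$ from the denominator. This gives $\lambda\le m e^{x_{\mathrm{pad,max}}}/S_{\mathrm{orig}}$.

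For the sufficient condition, I bound $S_{\mathrm{orig}}\ge e^{x_j}$ for the dominant informative token $j$. Using the sharper form $\lambda\le \frac{me^{x_{\mathrm{pad,max}}}}{e^{x_j}+me^{x_{\mathrm{pad,max}}}}=\frac{m}{e^{\Delta}+m}$, I require $2C_V\frac{m}{e^\Delta+m}\le\varepsilon$. Solving for $m$ gives exactly
\[
m\le\frac{\varepsilon}{2C_V-\varepsilon}e^{\Delta}.
\]
This is valid since $\varepsilon<2C_V$.
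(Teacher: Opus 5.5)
Your proposal is correct and follows essentially the same route as the paper. Both write $y$ and $y'$ as softmax averages and bound $\|y'-y\|_2\le 2C_V\,S_{\mathrm{pad}}/(S_{\mathrm{orig}}+S_{\mathrm{pad}})$ by the triangle inequality (your convex-combination form $y'-y=\lambda(\bar v_{\mathrm{pad}}-y)$ is exactly the paper's subtracted expression), then derive the sufficient condition from $S_{\mathrm{pad}}\le me^{x_{\mathrm{pad,max}}}$ and $S_{\mathrm{orig}}\ge e^{x_j}$; the only cosmetic difference is that you substitute these bounds into $\lambda$ before solving for $m$, whereas the paper first solves for $S_{\mathrm{pad}}/S_{\mathrm{orig}}$ and then bounds it by $me^{-\Delta}$, which yields the identical condition.
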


\begin{proof}
    We use the standard attention form at a fixed query position:
    \[
        y = \frac{1}{\vz_{\mathrm{orig}}}\sum_{i\in\mathcal{I}}e^{x_i}v_{i}, \qquad
        y' = \frac{1}{\vz_{\mathrm{orig}}+\vz_{\mathrm{pad}}}\left( \sum_{i\in\mathcal{I}}
        e^{x_i}v_{i}+ \sum_{p\in\mathcal{P}}e^{x_p}v_{p}\right),
    \]
    where
    \[
        \vz_{\mathrm{orig}}:= \sum_{i\in\mathcal{I}}e^{x_i}, \qquad \vz_{\mathrm{pad}}
        := \sum_{p\in\mathcal{P}}e^{x_p}.
    \]

    Subtracting $y$ from $y'$ gives
    \[
        y'-y = \frac{\sum_{p\in\mathcal{P}}e^{x_p}v_{p}}{\vz_{\mathrm{orig}}+\vz_{\mathrm{pad}}}
        - \frac{\vz_{\mathrm{pad}}}{\vz_{\mathrm{orig}}(\vz_{\mathrm{orig}}+\vz_{\mathrm{pad}})}
        \sum_{i\in\mathcal{I}}e^{x_i}v_{i}.
    \]

    Taking $\ell_{2}$ norms and using the triangle inequality,
    \[
        \|y'-y\|_{2}\le \frac{\bigl\|\sum_{p\in\mathcal{P}}e^{x_p}v_{p}\bigr\|_{2}}{\vz_{\mathrm{orig}}+\vz_{\mathrm{pad}}}
        + \frac{\vz_{\mathrm{pad}}\bigl\|\sum_{i\in\mathcal{I}}e^{x_i}v_{i}\bigr\|_{2}}{\vz_{\mathrm{orig}}(\vz_{\mathrm{orig}}+\vz_{\mathrm{pad}})}
        .
    \]

    Let $C_{V}$ be a uniform upper bound on $\|v_{i}\|_{2}$. Then
    \[
        \Bigl\|\sum_{p\in\mathcal{P}}e^{x_p}v_{p}\Bigr\|_{2}\le C_{V}\vz_{\mathrm{pad}}
        , \qquad \Bigl\|\sum_{i\in\mathcal{I}}e^{x_i}v_{i}\Bigr\|_{2}\le C_{V}\vz
        _{\mathrm{orig}}.
    \]

    Substituting yields
    \[
        \|y'-y\|_{2}\le 2C_{V}\frac{\vz_{\mathrm{pad}}}{\vz_{\mathrm{orig}}+\vz_{\mathrm{pad}}}
        .
    \]

    Let
    \[
        x_{\mathrm{pad,max}}:=\max_{p\in \mathcal{P}}x_{p}
    \]
    denote the maximum attention logit assigned to any padding token, we further
    obtain
    \[
        \|y' - y\|_{2}\le 2C_{V}\,\frac{m e^{x_{\mathrm{pad,max}}}}{\vz_{\mathrm{orig}}}
        ,
    \]
    which establishes the first two inequalities in the lemma statement.

    For the sufficient condition, solving
    \[
        2C_{V}\frac{\vz_{\mathrm{pad}}}{\vz_{\mathrm{orig}}+\vz_{\mathrm{pad}}}\le
        \varepsilon
    \]
    gives
    \[
        \vz_{\mathrm{pad}}\le \frac{\varepsilon}{2C_{V}-\varepsilon}\, \vz_{\mathrm{orig}}
        .
    \]

    If there exists $j\in\mathcal{I}$ such that
    $\Delta = x_{j}- x_{\mathrm{pad,max}}> 0$, then
    \[
        \vz_{\mathrm{pad}}\le m e^{x_{\mathrm{pad,max}}}= m e^{x_j-\Delta}, \qquad
        \vz_{\mathrm{orig}}\ge e^{x_j}.
    \]

    Hence
    \[
        \frac{\vz_{\mathrm{pad}}}{\vz_{\mathrm{orig}}}\le m e^{-\Delta}.
    \]

    Imposing
    \[
        m \le e^{\Delta}\frac{\varepsilon}{2C_{V}-\varepsilon}
    \]
    yields the stated sufficient condition.
\end{proof}

\noindent
Lemma~\ref{lmm:pad_output_general} shows that padding tokens act as structured distractors
whose only influence on the attention output is through their contribution to the
softmax normalization mass. The perturbation does not depend on the semantic
content of the padding tokens themselves, but is governed entirely by two quantitative
factors: (i) the number $m$ of padding tokens and (ii) their relative attention
logits compared to informative tokens.
\noindent
More precisely, the stability bound depends on the quantity $m e^{-\Delta}$, where
$\Delta$ denotes the softmax gap between a dominant informative token and the most
competitive padding token. Thus, stability is ensured either when the padding length
$m$ is controlled, or when padding tokens receive sufficiently small attention
scores (i.e., when $\Delta$ is large).

%% file: appendix_cot.tex
\label{app:cot_bound_proof}

In this appendix we provide a complete proof of the multi-step error propagation
bound underlying \cref{sec:cot}.
\subsection{Preliminaries}
We briefly recall the main objects appearing in the multi-step CoT analysis and
define new notations essential for the proof.
\begin{itemize}
    \item Let $(\mathcal{Z},\|\cdot\|)$ be the prompt space. For a fixed initial
        prompt $\vz^{(0)}$, define the model and oracle rollouts by
        \[
            \vz^{(k+1)}= f\!\big(\vz^{(k)}, \model(\vz^{(k)})\big), \qquad \vz^{*(k+1)}
            = f\!\big(\vz^{*(k)}, \bestmodel(\vz^{*(k)})\big),
        \]
        with $\vz^{*(0)}=\vz^{(0)}$, where $f(\vz, a)$ denotes the operation of
        appending answer $a$ to prompt $\vz$.

    \item The learned model is denoted by $\model:\mathcal{Z}\to\mathbb{R}$, and
        the reference (oracle) predictor is denoted by
        $\bestmodel:\mathcal{Z}\to\mathbb{R}$. We define the single-step prediction
        error as
        \[
            \error(\vz) := |\model(\vz)-\bestmodel(\vz)|.
        \]

    \item We measure the discrepancy between the model and oracle rollouts by
        \[
            \Delta_{k}:= \|\vz^{(k)}-\vz^{*(k)}\|, \qquad \Delta_{0}=0.
        \]
\end{itemize}

\noindent
We then introduce the assumptions and lemmas required for the proof.
\begin{assumption}
    \label{ass:well_defined_rollout} Both sequences are well-defined for $k=0,\dots
    ,K-2$ and remain in $\mathcal{Z}$.
\end{assumption}

\begin{assumption}
    [Per-step error propagation] \label{ass:propagation} For each step $k \in \{0
    , \dots, K-2\}$, there exist constants $\alpha_{k}, \beta_{k}\geq 0$ such
    that
    \[
        \Delta_{k+1}\leq \beta_{k}\,\error(\vz^{(k)}) + \alpha_{k}\,\Delta_{k}.
    \]
\end{assumption}
\noindent
This assumption captures how errors propagate through a single CoT step. The term
$\beta_{k}\,\varepsilon(\mathbf{z}^{(k)})$reflects direct error injection: even if
the model and oracle rollouts are currently at the same state ($\Delta_{k}= 0$),
a prediction error $\varepsilon(\mathbf{z}^{(k)})$ at step $k$ will cause the
two rollouts to diverge at step $k+1$, with $\beta_{k}$ measuring how sensitive the
update $f$ is to the predicted answer. The term $\alpha_{k}\,\Delta_{k}$
reflects error accumulation: any existing divergence $\Delta_{k}$ between the two
rollouts is carried forward and potentially amplified by $\alpha_{k}$, which
captures the sensitivity of $f$ to the current prompt state as well as how the
oracle predictor's smoothness interacts with the update.

\begin{assumption}
    \label{ass:lipschitz_oracle_predictor} For each step $k$, there exists a set
    $\mathcal{R}_{k}$ containing all rollout states at step $k$ such that
    $\bestmodel$ is $L_{k}^{\star}$-Lipschitz on $\mathcal{R}_{k}$:
    \[
        |\bestmodel(\vz)-\bestmodel(\vz')| \le L_{k}^{\star}\|\vz-\vz'\|.
    \]
\end{assumption}

\begin{lemma}
    \label{lem:perstep_gen} Fix a CoT step $k\in\{0,\dots,K-1\}$. Let
    $\error^{k}:\mathcal{Z}\to\mathbb{R}_{+}$ denote the step-$k$ prediction
    error
    \[
        \error^{k}(\vz):=|\model(\vz^{(k)})-\bestmodel(\vz^{(k)})|,
    \]
    Assume the single-step Lipschitz generalization theorem (Theorem~\ref{thm:lipschitz})
    holds when instantiated to this step-$k$ subtask, with pretraining prompt set
    $\dpre^{(k)}\subset\mathcal{Z}$. Then there exist constants $C_{k}\ge 1$ and
    an effective Lipschitz parameter $\lipschitz_{(k)}\ge 0$ such that for any
    step-$k$ rollout prompt $\vz^{(k)}\notin \dpre^{(k)}$,
    \[
        \error^{k}(\vz^{(k)}) \le C_{k}\exp\!\big(\lipschitz_{(k)}^{2}\big)\cdot
        \sup_{\vz'\in\dpre^{(k)}}\error^{k}(\vz').
    \]
\end{lemma}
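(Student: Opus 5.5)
The plan is to instantiate the formal single-step bound, Theorem~\ref{thm:lipschitz_formal}, on the step-$k$ subtask and then absorb every constant into the form $C_k\exp(\lipschitz_{(k)}^2)$. Fix $k$ and a rollout prompt $\vz^{(k)}\notin\dpre^{(k)}$. The hypothesis of the lemma says that Assumptions~\ref{ass:path_lip} and~\ref{ass:positive_measure} hold for this step, with $\dpre$ replaced by $\dpre^{(k)}$ and $\error$ replaced by $\error^{k}$. So there is some $\vz'\in\dpre^{(k)}$ whose straight-line path $\gamma_k(t)=\vz'+t(\vz^{(k)}-\vz')$ has $\mes(E_{\gamma_k})>0$. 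Set $l_{\gamma_k}=L_{\vz^{(k)},\vz'}\|\vz^{(k)}-\vz'\|$ and $A_{\gamma_k}=4\mes(J_{\gamma_k})/\mes(E_{\gamma_k})>1$. Applying \eqref{eq:thm_lip_chosen_n} with $n=\lceil l_{\gamma_k}^2\rceil$ gives
\[
\error^{k}(\vz^{(k)})\le A_{\gamma_k}^{\lceil l_{\gamma_k}^2\rceil}S_k+\tfrac12\bigl(1+A_{\gamma_k}^{\lceil l_{\gamma_k}^2\rceil}\bigr),\qquad S_k:=\sup_{\tilde\vz\in\dpre^{(k)}}\error^{k}(\tilde\vz).
\]

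Next, define the effective parameter $\lipschitz_{(k)}$ so that the exponential factor matches. I would take $\lipschitz_{(k)}^2:=\log(A_{\gamma_k})\,\lceil l_{\gamma_k}^2\rceil$, minimized over admissible $\vz'$ if desired. Then $A_{\gamma_k}^{\lceil l^2\rceil}=\exp(\lipschitz_{(k)}^2)$ exactly. Alternatively, one can keep $\lipschitz_{(k)}=l_{\gamma_k}$ and use $A^{\lceil l^2\rceil}\le A\cdot\exp(\log A\cdot l^2)$. In that case the base must be absorbed into the definition of the effective Lipschitz parameter, as the paper's informal $O(\kappa^{\lipschitz^2})$ does with $\kappa=A_{\gamma_k}$. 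Either way, the first term becomes (constant)$\cdot\exp(\lipschitz_{(k)}^2)S_k$.

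The main obstacle is the additive term $\tfrac12(1+A^{n})$. It does not scale with $S_k$, so a purely multiplicative bound needs an extra argument. My first approach is to make the Bernstein error small relative to $S_k$. Instead of $n=\lceil l^2\rceil$, go back to \eqref{eq:thm_lip_any_n} and choose $n=\lceil l^2/S_k^2\rceil$, so that $l/(2\sqrt n)\le S_k/2$. The additive term is then at most $\tfrac12(1+A^n)S_k\le A^nS_k$, and the whole bound becomes $\tfrac32 A^{n}S_k$. The cost is that $\lipschitz_{(k)}^2=\log(A)\lceil l^2/S_k^2\rceil$ now depends on $S_k$. This is acceptable, because $\lipschitz_{(k)}$ is only required to exist as an effective parameter.

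If $S_k=0$, a limiting argument is needed. The Remez step gives $B_n\error^{\gamma}$ bounded by $A^n\cdot l/(2\sqrt n)$, and this need not vanish. So in that degenerate case I would instead assume, or argue, that $S_k>0$, or else allow $C_k$ to absorb the slack. A fallback is to assume that $S_k$ is bounded below by a fixed $s_k>0$ (the pretraining error is never exactly zero). Then the additive term is at most $(1+A^n)S_k/(2s_k)$, which gives $C_k=\max\{1,3/(2s_k)\}$. Finally, collect terms to set $C_k\ge1$ and conclude.
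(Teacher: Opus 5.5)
Your argument is sound for $S_k>0$, but it takes a different route from the paper. The paper's proof is a pure instantiation. It takes the lemma's hypothesis to be the multiplicative conclusion $\error(\vz)\le O(\kappa^{\lipschitz^{2}})\sup_{\dpre}\error$ of the informal \cref{thm:lipschitz}, applied to $(\error^{k},\dpre^{(k)})$. It then in effect renames $O(\cdot)$ as $C_k$ and $\kappa^{\lipschitz^{2}}=\exp(\lipschitz^{2}\log\kappa)$ as $\exp(\lipschitz_{(k)}^{2})$.

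You instead start from the formal \cref{thm:lipschitz_formal}. You correctly notice that it carries the additive term in \eqref{eq:thm_lip_any_n}, which the informal statement (and hence the paper's proof) silently drops. You remove it by taking $n=\max\{1,\lceil l_{\gamma}^{2}/S_k^{2}\rceil\}$. This gives $\error^{k}(\vz^{(k)})\le \tfrac{3}{2}A_{\gamma}^{n}S_k+\tfrac{1}{2}S_k\le 2A_{\gamma}^{n}S_k$, so $C_k=2$, not your $\tfrac{3}{2}$.

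What your route buys is honesty about the cost. A multiplicative bound is reachable only with an effective parameter that grows like $1/S_k$. You are also right that $S_k=0$ cannot be rescued from \cref{thm:lipschitz_formal} at all. A loss with $\error^{\gamma}(t)=l_{\gamma}\max\{0,t-\tfrac{1}{2}\}$, $E_{\gamma}=[0,\tfrac{1}{2}]$, and vanishing on $\dpre^{(k)}$ satisfies its hypotheses, yet has $\error^{\gamma}(1)=l_{\gamma}/2>0=S_k$.

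The flip side is that once $\lipschitz_{(k)}$ may depend on $S_k$ and on $\vz^{(k)}$, as yours does, the inequality holds trivially. Take $C_k=1$ and $\lipschitz_{(k)}^{2}=\log\max\{1,\error^{k}(\vz^{(k)})/S_k\}$; the Bernstein--Remez machinery then adds nothing.

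The paper's route covers $S_k=0$ and keeps $\lipschitz_{(k)}$ free of $S_k$ only because its hypothesis is the multiplicative statement itself. That statement is stronger than what \cref{thm:lipschitz_formal} proves. To get the lemma exactly as stated, take that hypothesis at face value instead of adding $S_k\ge s_k>0$. If you keep that fallback, the constant it yields is $1+1/s_k$, not $\max\{1,3/(2s_k)\}$.
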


\begin{proof}
    We apply Theorem~\ref{thm:lipschitz} to the step-$k$ subtask. Concretely, we
    view $\error^{k}(\cdot)$ as the single-step loss function in Theorem~\ref{thm:lipschitz},
    and take the associated pretraining domain to be $\dpre^{(k)}$. By assumption,
    the regularity and coverage conditions required by Theorem~\ref{thm:lipschitz}
    hold for this step-$k$ instantiation, hence the conclusion of Theorem~\ref{thm:lipschitz}
    yields constants $C_{k}\ge 1$ and $\lipschitz_{(k)}\ge 0$ such that
    \[
        \error^{k}(\vz) \;\le\; C_{k}\exp\!\big(\lipschitz_{(k)}^{2}\big)\cdot \sup
        _{\vz'\in\dpre^{(k)}}\error^{k}(\vz'), \qquad \forall \vz\notin \dpre^{(k)}
        .
    \]
    Evaluating this inequality at $\vz=\vz^{(k)}$ gives the desired result.
\end{proof}

\subsection{Proof of Theorem~\ref{thm:cot_bound_formal}}
We first present a more general version of the theorem.
\begin{theorem}[ICL Generalization Bound with CoT Prompting, formal version of \cref{thm:cot_bound}]
    \label{thm:cot_bound_formal} Under Assumptions~\ref{ass:well_defined_rollout}--\ref{ass:lipschitz_oracle_predictor},
    \begin{equation}
        \label{eq:cot_main_bound}\error(\vz^{(K-1)}) \le R_{K-1}+ L_{K-1}^{\star}
        \sum_{m=0}^{K-2}\left( \prod_{j=m+1}^{K-2}\alpha_{j}\right) \beta_{m}\error
        (\vz^{(m)}),
    \end{equation}
    where $R_{K-1}:=|\model(\vz^{(K-1)})-\bestmodel(\vz^{*(K-1)})|$.

    Substituting the per-step generalization bound yields
    \begin{equation}
        \label{eq:cot_main_bound_gen}\error(\vz^{(K-1)}) \le R_{K-1}+ L_{K-1}^{\star}
        \sum_{m=0}^{K-2}\left( \prod_{j=m+1}^{K-2}\alpha_{j}\right) \beta_{m}C_{m}
        e^{\lipschitz_{(m)}^2}\sup_{\vz'\in\dpre^{(m)}}\error(\vz').
    \end{equation}
\end{theorem}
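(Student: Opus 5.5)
The plan is to unroll the one-step recursion of \cref{ass:propagation} into a closed-form bound on the rollout discrepancy $\Delta_{K-1}$. We then convert that discrepancy into a bound on the final-step loss using the oracle's Lipschitz property (\cref{ass:lipschitz_oracle_predictor}). Finally, we substitute the per-step generalization bound of Lemma~\ref{lem:perstep_gen}.

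\textbf{Step 1: unroll the recursion.} We start from $\Delta_0 = 0$ and apply $\Delta_{k+1}\le \beta_k\error(\vz^{(k)})+\alpha_k\Delta_k$ for $k=0,\dots,K-2$. These steps are well defined by \cref{ass:well_defined_rollout}. An induction on $k$ gives
\[
\Delta_{k}\le \sum_{m=0}^{k-1}\Big(\prod_{j=m+1}^{k-1}\alpha_j\Big)\beta_m\,\error(\vz^{(m)}),
\]
with the empty product equal to $1$. The inductive step multiplies the hypothesis by $\alpha_k$, which extends each product to $j\le k$, and adds the new term $\beta_k\error(\vz^{(k)})$. Since all $\alpha_j,\beta_m\ge 0$, the inequalities are preserved. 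Setting $k=K-1$ gives the bound on $\Delta_{K-1}$.

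\textbf{Step 2: pass to the final loss.} We write the triangle inequality
\[
\error(\vz^{(K-1)})=|\model(\vz^{(K-1)})-\bestmodel(\vz^{(K-1)})|\le |\model(\vz^{(K-1)})-\bestmodel(\vz^{*(K-1)})| + |\bestmodel(\vz^{*(K-1)})-\bestmodel(\vz^{(K-1)})|.
\]
The first term is $R_{K-1}$ by definition. Both $\vz^{(K-1)}$ and $\vz^{*(K-1)}$ are rollout states at step $K-1$, so they lie in $\mathcal{R}_{K-1}$. \cref{ass:lipschitz_oracle_predictor} therefore bounds the second term by $L^\star_{K-1}\Delta_{K-1}$. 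Inserting the Step 1 bound yields \eqref{eq:cot_main_bound}.

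\textbf{Step 3: substitute the per-step bound.} For each $m\le K-2$ we apply Lemma~\ref{lem:perstep_gen}, which gives $\error(\vz^{(m)})\le C_m e^{\lipschitz_{(m)}^2}\sup_{\vz'\in\dpre^{(m)}}\error(\vz')$. The coefficients $L^\star_{K-1}\big(\prod_{j}\alpha_j\big)\beta_m$ are nonnegative, so this substitution preserves the inequality and gives \eqref{eq:cot_main_bound_gen}.

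The main obstacle is the hypothesis of Lemma~\ref{lem:perstep_gen}: it requires $\vz^{(m)}\notin\dpre^{(m)}$ together with the coverage and Lipschitz assumptions of Theorem~\ref{thm:lipschitz_formal}. If $\vz^{(m)}\in\dpre^{(m)}$, the bound holds trivially because $C_m e^{\lipschitz_{(m)}^2}\ge 1$. For the other case, we must check that the constant $A_\gamma^{\lceil l_\gamma^2\rceil}$ from the formal theorem can be written as $C_m e^{\lipschitz_{(m)}^2}$, by taking $\lipschitz_{(m)}^2 \approx l_\gamma^2\log A_\gamma$. We also have to absorb the additive term $(1+A_\gamma^n)/2$. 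We would handle this by assuming the supremum of the pretraining error is bounded below, or by enlarging $C_m$, and we would state this explicitly as part of the per-step instantiation.
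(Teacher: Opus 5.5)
Your proposal is correct and follows the paper's proof essentially step for step. You unroll the propagation recursion from $\Delta_0=0$, bound $\error(\vz^{(K-1)})$ by the triangle inequality plus the oracle's Lipschitz property on $\mathcal{R}_{K-1}$, and substitute Lemma~\ref{lem:perstep_gen} term by term using nonnegativity of the coefficients. Your caveats about the case $\vz^{(m)}\in\dpre^{(m)}$ and about the additive term $(1+A_\gamma^{n})/2$ in Theorem~\ref{thm:lipschitz_formal} are extra care that the paper omits, since it simply takes the multiplicative form of Lemma~\ref{lem:perstep_gen} as an assumed per-step instantiation.
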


We now proceed with the proof.
\begin{proof}
    Fix $k\in\{0,\dots,K-2\}$. By Assumption~\ref{ass:propagation},
    \[
        \Delta_{k+1}\leq \beta_{k}\,\error(\vz^{(k)}) + \alpha_{k}\,\Delta_{k}.
    \]

    Iterating from $\Delta_{0}=0$ yields
    \[
        \Delta_{K-1}\le \sum_{m=0}^{K-2}\left( \prod_{j=m+1}^{K-2}\alpha_{j}\right
        ) \beta_{m}\,\error(\vz^{(m)}).
    \]

    By the triangle inequality,
    \begin{align}
        \error(\vz^{(K-1)}) & = |\model(\vz^{(K-1)})-\bestmodel(\vz^{(K-1)})|                                                          \\
                            & \le |\model(\vz^{(K-1)})-\bestmodel(\vz^{*(K-1)})| + |\bestmodel(\vz^{*(K-1)})-\bestmodel(\vz^{(K-1)})|.
    \end{align}

    By Assumption~\ref{ass:lipschitz_oracle_predictor}, since $\vz^{(K-1)}$ and
    $\vz^{*(K-1)}$ are rollout states at step $K-1$ and hence belong to
    $\mathcal{R}_{K-1}$, the second term satisfies
    \[
        |\bestmodel(\vz^{*(K-1)})-\bestmodel(\vz^{(K-1)})| \le L_{K-1}^{\star}\Delta
        _{K-1}.
    \]

    Substituting the bound on $\Delta_{K-1}$ gives
    \[
        \error(\vz^{(K-1)}) \le R_{K-1}+ L_{K-1}^{\star}\sum_{m=0}^{K-2}\left( \prod
        _{j=m+1}^{K-2}\alpha_{j}\right ) \beta_{m}\,\error(\vz^{(m)}),
    \]
    which is precisely \eqref{eq:cot_main_bound}.

    Next, by Lemma~\ref{lem:perstep_gen}, for each $m$ we have
    \[
        \error(\vz^{(m)}) \le C_{m}e^{\lipschitz_{(m)}^{2}}\sup_{\vz'\in\dpre^{(m)}}
        \error(\vz').
    \]
    Substituting this inequality term-by-term into the above sum gives \eqref{eq:cot_main_bound_gen}.
    This completes the proof.
\end{proof}

\subsection{Models can ignore irrelevant steps in CoT}
\label{app:ignore}
To empirically verify that language models can effectively ignore irrelevant
steps in a CoT prompt, we examine whether this suppression capability is acquired
during pretraining rather than imposed by the prompting format.
\paragraph{Setup.}
We consider the arithmetic addition task. We pretrain a \texttt{Llama3.2-3B} model
on addition while injecting varying proportions of Shakespearean text into the pretraining
corpus as structured distractors, where \texttt{Shakespeare-less}, \texttt{Shakespeare-mid},
and \texttt{Shakespeare-more} denote mixing ratios of 0\%, 20\%, and 50\%
Shakespearean content, respectively. During training, the model is instructed to
perform the addition specified by the prompt (here we use the \emph{In-distribution
decomposition} CoT chain provided in Appendix~\ref{app:cot_effect}), while the surrounding
Shakespearean passages are entirely irrelevant to the target computation. If the
model can still learn the intended input--output mapping despite substantial task-irrelevant
content, it must have developed the capacity to selectively ignore such
distractions. We evaluate model accuracy on held-out addition prompts across different
distractor ratios.
\paragraph{Results.}
As shown in Figure~\ref{fig:shakespeare}, even with a substantial proportion of
Shakespearean text included during pretraining, the model achieves strong performance
on the addition task when prompted with CoT. While accuracy degrades slightly as
distractor content increases, the model remains capable of effectively suppressing
irrelevant information and focusing on the core arithmetic operation. This
supports our theoretical claim in \cref{thm:cot_bound} that language models can
learn to ignore irrelevant CoT steps, enabling robust generalization even under significant
distraction.

\begin{figure}[htbp]
    \centering
    \includegraphics[width=0.4\textwidth]{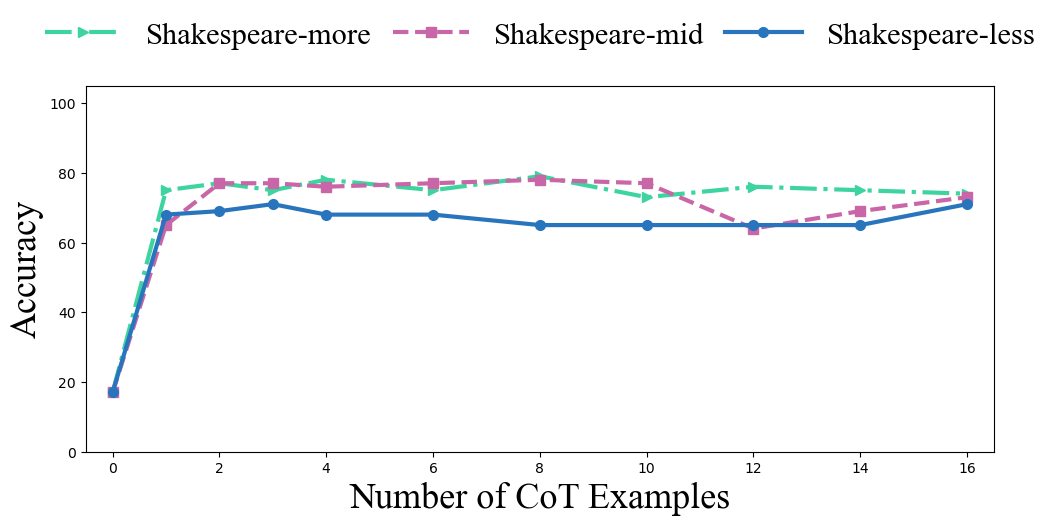}
    \includegraphics[width=0.4\textwidth]{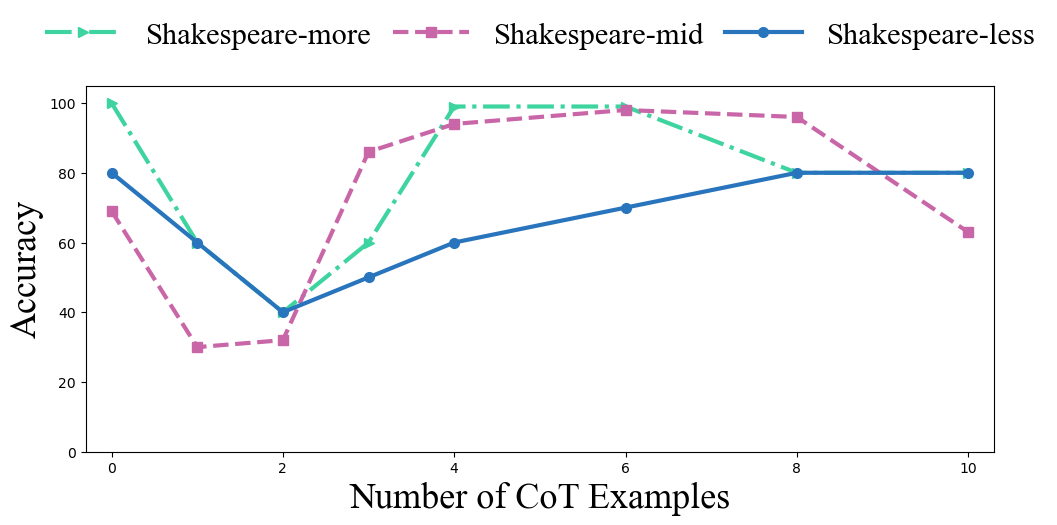}
    \caption{\textbf{Effect of structured distractors introduced during
    pretraining on CoT performance.} Shakespeare~less/mid/more denote progressively
    larger proportions of Shakespearean corpus mixed into the pretraining data.
    At evaluation time, the model is prompted to solve addition problems using CoT.
    Despite substantial irrelevant context, the model retains strong CoT performance
    at inference, indicating an ability to suppress task-irrelevant information.
    }
    \label{fig:shakespeare}
\end{figure}

%% file: appendix_instruction.tex
\label{app:instruction}
\subsection{Preliminaries}
We begin by expanding the posterior distribution over tasks using Bayes' rule. For
any task $t\in\gT$, conditioning on the full prompt
$\vz=(\vi,\{(x_{n},y_{n}) \}_{n=1}^{N},\xquery)$ yields
\begin{equation}
    \label{eq:posterior_bayes_full}p(t\mid \vz) = \frac{\pi(t)\,p(\vz\mid t)}{\sum_{s\in\gT}\pi(s)\,p(\vz\mid
    s)}.
\end{equation}
Under prompt formats 1--5, the likelihood $p(\vz\mid t)$ factorizes as
\begin{equation}
    \label{eq:likelihood_factorization}p(\vz\mid t) = p(\vi\mid t)\, p(\xquery\mid
    t)\, \prod_{n=1}^{N}p(x_{n},y_{n}\mid t),
\end{equation}
where the instruction, query input, and demonstration pairs are conditionally
independent given the latent task $t$.

Substituting \eqref{eq:likelihood_factorization} into \eqref{eq:posterior_bayes_full},
we obtain
\begin{equation}
    \label{eq:posterior_expanded}p(t\mid \vz) = \frac{ \pi(t)\, p(\vi\mid t)\, p(\xquery\mid
    t)\, \prod_{n=1}^{N}p(x_{n},y_{n}\mid t) }{ \sum_{s\in\gT}\pi(s)\, p(\vi\mid
    s)\, p(\xquery\mid s)\, \prod_{n=1}^{N}p(x_{n},y_{n}\mid s) }.
\end{equation}

To make the dependence on the number of demonstrations explicit, we compare each
task $t$ against the true task $t^{*}$ by forming posterior odds. For any
$t\neq t^{*}$, dividing $p(t\mid \vz)$ by $p(t^{*}\mid \vz)$ gives
\begin{equation}
    \label{eq:posterior_odds_raw}\frac{p(t\mid \vz)}{p(t^{*}\mid \vz)}= \frac{\pi(t)}{\pi(t^{*})}
    \cdot \frac{p(\vi\mid t)}{p(\vi\mid t^{*})}\cdot \frac{p(\xquery\mid t)}{p(\xquery\mid
    t^{*})}\cdot \frac{\prod_{n=1}^{N}p(x_{n},y_{n}\mid t)}{\prod_{n=1}^{N}p(x_{n},y_{n}\mid
    t^{*})}.
\end{equation}

For notational convenience, we separate the contribution of the demonstrations from
all remaining factors. Specifically, we define the \emph{task likelihood ratio
induced by demonstrations}
\begin{equation}
    \label{eq:def_Bt}B_{t}~:=~ \frac{\prod_{n=1}^{N}p(x_{n},y_{n}\mid t)}{\prod_{n=1}^{N}p(x_{n},y_{n}\mid
    t^{*})},
\end{equation}
and the \emph{non-demonstration ratio} (prior, instruction, and query input)
\begin{equation}
    \label{eq:def_At}A_{t}(\vi) ~:=~ \frac{\pi(t)}{\pi(t^{*})}\cdot \frac{p(\vi\mid
    t)}{p(\vi\mid t^{*})}\cdot \frac{p(\xquery\mid t)}{p(\xquery\mid t^{*})}.
\end{equation}
With these definitions, the posterior odds admit the compact form
\begin{equation}
    \label{eq:posterior_odds_representation}\frac{p(t\mid \vz)}{p(t^{*}\mid \vz)}
    ~=~ A_{t}(\vi)\, B_{t}, \qquad t\neq t^{*}.
\end{equation}

Using the normalization condition $\sum_{t\in\gT}p(t\mid \vz)=1$, the posterior
distribution can equivalently be written as
\begin{equation}
    \label{eq:posterior_representation}p(t^{*}\mid \vz) ~=~ \frac{1}{1+\sum_{s\neq
    t^*}A_{s}(\vi)B_{s}}, \qquad p(t\mid \vz) ~=~ \frac{A_{t}(\vi)B_{t}}{1+\sum_{s\neq
    t^*}A_{s}(\vi)B_{s}}, \quad t\neq t^{*}.
\end{equation}
\noindent
Then we present two lemmas.
\begin{lemma}[Exponential Decay of Demonstration Likelihood Ratio]
    \label{lmm:Bt_exp_decay} Defining
    \[
        \alpha := \tfrac12\min_{t\neq t^*}\kappa_{t}>0,
    \]
    there exists $N_{0}<\infty$ such that almost surely, for all $N\ge N_{0}$
    and all $t\neq t^{*}$,
    \[
        B_{t}\le e^{-\alpha N}.
    \]
\end{lemma}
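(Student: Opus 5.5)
The argument is the strong law of large numbers applied to the log-likelihood ratio. First I make precise what $\kappa_t$ means, since the statement uses it without a definition. I will take it to be the per-demonstration KL divergence
\[
\kappa_t := \kl\pt{P_{\rx,\ry\mid t^*}\,\Vert\,P_{\rx,\ry\mid t}},
\]
which is strictly positive for every $t\neq t^*$ by \cref{assp:identifiability}. The task set $\gT$ is finite, so $\alpha=\tfrac12\min_{t\neq t^*}\kappa_t>0$.

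Fix $t\neq t^*$ and write
\[
\log B_t=\sum_{n=1}^N Z_n^{(t)},\qquad Z_n^{(t)}:=\log\frac{p(x_n,y_n\mid t)}{p(x_n,y_n\mid t^*)}.
\]
Under the standing assumption, the demonstrations $(x_n,y_n)$ are i.i.d.\ from $P_{\rx,\ry\mid t^*}$. Hence the $Z_n^{(t)}$ are i.i.d.\ with mean $\E[Z_1^{(t)}]=-\kappa_t$.

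To apply the SLLN I need $Z_1^{(t)}$ to be integrable. I will assume the KL is finite. The positive part of $Z_1^{(t)}$ is integrable automatically, since $\E[e^{Z}]\le 1$. Finiteness of the KL then controls the negative part. In the paper's setting $\gV^*$ is finite, so every likelihood takes finitely many values. On the support of $P_{\cdot\mid t^*}$ it suffices that $p(\cdot\mid t)>0$, because otherwise $\kappa_t=\infty$. Even that case is harmless: $B_t=0$ as soon as a sample falls in the set where $p(\cdot\mid t)=0$, and this happens almost surely eventually, since that set has positive $t^*$-probability whenever the KL is infinite.

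The SLLN gives $\frac1N\log B_t\to-\kappa_t$ almost surely. So there is an almost surely finite random $N_t$ such that for all $N\ge N_t$,
\[
\tfrac1N\log B_t\le -\tfrac12\kappa_t\le-\alpha,
\]
that is, $B_t\le e^{-\alpha N}$. Set $N_0:=\max_{t\neq t^*}N_t$. This is a maximum of finitely many almost surely finite variables, so it is almost surely finite. Intersecting the finitely many probability-one events gives the uniform statement over all $t\neq t^*$.

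The only genuine obstacle is integrability and the precise meaning of $\kappa_t$. Note also that $N_0$ is random and not deterministic; that is what "almost surely" in the statement means. If a deterministic rate were wanted instead, I would use a Chernoff bound. Boundedness of $Z_n$ on the finite space gives $\sP(B_t>e^{-\alpha N})\le e^{-cN}$. Borel–Cantelli then recovers the almost-sure statement, and it also yields the "with high probability" form used in \cref{thm:decay_1_5}.
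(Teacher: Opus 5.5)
Your proof is correct and follows the paper's argument. The strong law of large numbers gives $\frac{1}{N}\log B_t \to -\kappa_t$ with $\kappa_t=\operatorname{KL}(P_{\mathrm{x},\mathrm{y}\mid t^*}\,\Vert\,P_{\mathrm{x},\mathrm{y}\mid t})$, hence $B_t\le e^{-(\kappa_t/2)N}$ for $N\ge N_t$, and taking $N_0=\max_{t\neq t^*}N_t$ handles the finite task set. Your additional points fill gaps the paper's proof leaves implicit: integrability of the log-ratio, the infinite-KL case (which arises whenever $f_t$ and $f_{t^*}$ disagree on the support of $P_{\mathrm{x}\mid t^*}$, because outputs are deterministic), and the fact that $N_0$ is random rather than deterministic as the statement's wording suggests.
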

\begin{proof}
    Fix any $t\neq t^{*}$. By the strong law of large numbers and
    \cref{assp:identifiability},
    \begin{equation}
        \label{eq:SLLN_KL}\frac{1}{N}\sum_{n=1}^{N}\log \frac{p(x_{n},y_{n}\mid t)}{p(x_{n},y_{n}\mid
        t^{*})}~\xrightarrow{\mathrm{a.s.}}~ -\kl\!\Big(P_{\rx,\ry\mid t^*}\,\Vert
        \,P_{\rx,\ry\mid t}\Big) ~=:-\kappa_{t},
    \end{equation}
    where $\kappa_{t}>0$. Hence, almost surely, there exists $N_{t}<\infty$ such
    that for all $N\ge N_{t}$,
    \begin{equation}
        \label{eq:Bt_exp_decay_single}B_{t}\le \exp\!\big(-( \kappa_{t}/2)\, N\big
        ).
    \end{equation}
    Since $\gT$ is finite, letting
    \begin{equation}
        \label{eq:alpha_def}\alpha := \frac{1}{2}\min_{t\neq t^*}\kappa_{t}\;>\;0
        ,
    \end{equation}
    we have (after enlarging $N_{0}:=\max_{t\neq t^*}N_{t}$) that almost surely,
    for all $N\ge N_{0}$ and all $t\neq t^{*}$,
    \begin{equation}
        \label{eq:Bt_exp_decay_uniform}B_{t}\le e^{-\alpha N}.
    \end{equation}
\end{proof}

\begin{lemma}[Uniform Boundedness of Non-Demonstration Ratio]
    \label{lmm:At_uniform} There exists a finite constant $A_{\max}$ such that
    \[
        \sup_{\vi\in\dicl}\max_{t\neq t^*}A_{t}(\vi) \le A_{\max}< \infty.
    \]
\end{lemma}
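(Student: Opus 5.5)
We bound each of the three factors in $A_{t}(\vi)$ in \eqref{eq:def_At} separately, uniformly over $\vi\in\dicl$ and $t\neq t^{*}$, and then take the maximum over the finite set $\gT\setminus\{t^{*}\}$.

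\textbf{Prior ratio.} Since $\gT$ is finite and $\pi(t)>0$ for every $t$, we have $\pi(t)/\pi(t^{*})\le 1/\pi(t^{*})<\infty$.

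\textbf{Query ratio.} This factor does not depend on $\vi$. For the observed query, $p(\xquery\mid t^{*})>0$ almost surely, because $\xquery$ is drawn from $P_{\rx\mid t^{*}}$. Hence $p(\xquery\mid t)/p(\xquery\mid t^{*})\le 1/p(\xquery\mid t^{*})<\infty$, and this is a fixed constant for the given prompt.

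\textbf{Instruction ratio.} This is the only factor that varies with $\vi$, and it is where \cref{assp:bounded-instruction} enters. Write
\[
\log\frac{p(\vi\mid t)}{p(\vi\mid t^{*})}=g_{t}(\vi),\qquad g_{t}:=\log p(\cdot\mid t)-\log p(\cdot\mid t^{*}).
\]
By \cref{assp:bounded-instruction}, $\|\nabla g_{t}\|\le 2L_{\vi}$, so $g_{t}$ is $2L_{\vi}$-Lipschitz. Fix a reference instruction $\vi_{0}\in\dicl$, for example one where both densities are positive. For any $\vi$ we get $g_{t}(\vi)\le g_{t}(\vi_{0})+2L_{\vi}\|\vi-\vi_{0}\|$. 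Using compactness of $\dicl$, noted in the list of notation, the distance $\|\vi-\vi_{0}\|\le\operatorname{diam}(\dicl)<\infty$. Therefore
\[
\frac{p(\vi\mid t)}{p(\vi\mid t^{*})}\le \frac{p(\vi_{0}\mid t)}{p(\vi_{0}\mid t^{*})}\,e^{2L_{\vi}\operatorname{diam}(\dicl)}.
\]
To apply the gradient bound along the segment from $\vi_{0}$ to $\vi$, I would either assume $\dicl$ is convex or use a path-connected compact set with bounded path length in place of the diameter.

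\textbf{Conclusion.} Multiplying the three bounds and taking the maximum over the finitely many $t\neq t^{*}$ gives
\[
A_{\max}=\max_{t\neq t^{*}}\frac{\pi(t)}{\pi(t^{*})}\cdot\frac{p(\xquery\mid t)}{p(\xquery\mid t^{*})}\cdot\frac{p(\vi_{0}\mid t)}{p(\vi_{0}\mid t^{*})}\,e^{2L_{\vi}\operatorname{diam}(\dicl)}<\infty.
\]

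\textbf{Main obstacle.} The hardest step is justifying the instruction ratio. The Lipschitz argument requires the log-densities to be finite, with $p(\vi\mid t^{*})>0$, on all of $\dicl$, including the incorrect and padded instructions of format 2. I would argue that \cref{assp:bounded-instruction} implicitly presumes $\log p(\vi\mid t)$ is differentiable, hence finite, on $\dicl$; I would state this as the needed regularity condition. Everything else is elementary.
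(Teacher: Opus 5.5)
Your proposal is correct and matches the paper's proof: treat the prior and query ratios as fixed finite constants, bound the instruction ratio by the mean value theorem from a reference point $\vi_{0}\in\dicl$ using \cref{assp:bounded-instruction} (giving the factor $e^{2GD}$ with $D$ the diameter of $\dicl$), and take the maximum over the finitely many $t\neq t^{*}$. Your caveats — that the gradient bound must hold along the segment from $\vi_{0}$ to $\vi$, and that $p(\vi\mid t^{*})$ and $p(\xquery\mid t^{*})$ must be positive — are assumptions the paper's proof also relies on implicitly without stating them.
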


\begin{proof}
    Recall that
    \[
        A_{t}(\vi) =\frac{\pi(t)}{\pi(t^{*})}\cdot \frac{p(\vi\mid t)}{p(\vi\mid
        t^{*})}\cdot \frac{p(\xquery\mid t)}{p(\xquery\mid t^{*})}.
    \]
    Since $\gT$ is finite and $\xquery$ is fixed, the quantities
    $\max_{t\neq t^*}\frac{\pi(t)}{\pi(t^{*})}$ and
    $\max_{t\neq t^*}\frac{p(\xquery\mid t)}{p(\xquery\mid t^{*})}$ are finite
    constants. Thus it suffices to show that
    \[
        \sup_{\vi\in\dicl}\max_{t\neq t^*}\frac{p(\vi\mid t)}{p(\vi\mid t^{*})}<\infty
        .
    \]

    By Assumption~\ref{assp:bounded-instruction}, there exists $G<\infty$ such that
    for all $t\in\gT$ and all $\vi\in\dicl$,
    \[
        \|\nabla_{\vi}\log p(\vi\mid t)\|_{2}\le G.
    \]
    Fix an arbitrary reference instruction $\vi_{0}\in\dicl$. For any $\vi\in\dicl$
    and any $t\in\gT$, by the mean value theorem applied to the differentiable
    function $\log p(\cdot\mid t)$ along the line segment between $\vi_{0}$ and
    $\vi$,
    \[
        \big|\log p(\vi\mid t)-\log p(\vi_{0}\mid t)\big| \le \sup_{\tilde\vi \in
        [\vi_0,\vi]}\|\nabla_{\vi}\log p(\tilde\vi\mid t)\|_{2}\cdot \|\vi-\vi_{0}
        \|_{2}\le G\|\vi-\vi_{0}\|_{2}.
    \]
    Since $\dicl$ is bounded (compact), its diameter is finite:
    \[
        D:=\sup_{\vi\in\dicl}\|\vi-\vi_{0}\|_{2}<\infty.
    \]
    Hence for all $\vi\in\dicl$ and all $t\in\gT$,
    \[
        \log p(\vi\mid t)\le \log p(\vi_{0}\mid t)+GD, \qquad \log p(\vi\mid t)\ge
        \log p(\vi_{0}\mid t)-GD.
    \]
    Applying the same bounds to $t^{*}$ and subtracting gives, for all $\vi\in\dicl$
    and all $t\in\gT$,
    \[
        \log\frac{p(\vi\mid t)}{p(\vi\mid t^{*})}= \log p(\vi\mid t)-\log p(\vi\mid
        t^{*}) \le \big(\log p(\vi_{0}\mid t)-\log p(\vi_{0}\mid t^{*})\big)+2GD.
    \]
    Exponentiating and taking maxima over the finite task set yields
    \[
        \sup_{\vi\in\dicl}\max_{t\neq t^*}\frac{p(\vi\mid t)}{p(\vi\mid t^{*})}\le
        \Big(\max_{t\neq t^*}\frac{p(\vi_{0}\mid t)}{p(\vi_{0}\mid t^{*})}\Big)\,
        e^{2GD}<\infty,
    \]
    where finiteness holds because $\gT$ is finite and $p(\vi_{0}\mid t)>0$ on
    the support of the instruction domain.

    Combining with the finite constants
    $\max_{t\neq t^*}\frac{\pi(t)}{\pi(t^{*})}$ and
    $\max_{t\neq t^*}\frac{p(\xquery\mid t)}{p(\xquery\mid t^{*})}$ gives the
    desired bound with
    \[
        A_{\max}:= \Big(\max_{t\neq t^*}\frac{\pi(t)}{\pi(t^{*})}\Big) \Big(\max_{t\neq
        t^*}\frac{p(\xquery\mid t)}{p(\xquery\mid t^{*})}\Big) \Big(\max_{t\neq t^*}
        \frac{p(\vi_{0}\mid t)}{p(\vi_{0}\mid t^{*})}\Big)e^{2GD}.
    \]
\end{proof}

\subsection{Proof of \cref{thm:decay_1_5}}
We first recall the statement of the theorem for convenience.
\begin{theorem}[Exponential Convergence of Posterior Predictive (Formats 1--5)]
    \label{thm:decay_1_5_app} Consider prompt formats 1--5. Under
    \cref{assp:identifiability,assp:bounded-instruction}, there exist constants $\alpha
    ,\beta>0$ such that, with high probability, for all sufficiently large numbers
    of demonstrations $N$,
    \begin{align}
        \bigl| p(\yquery \mid \vz) - p(\yquery \mid \xquery, t^{*}) \bigr| \le \beta \exp(-\alpha N).
    \end{align}
    Equivalently, for any two instructions $\vi,\vi'$ (with all non-instruction
    parts of $\vz$ fixed), we have
    \begin{align}
        \bigl| p(\yquery \mid \vz(\vi)) - p(\yquery \mid \vz(\vi')) \bigr| \le 2\beta \exp(-\alpha N).
    \end{align}
\end{theorem}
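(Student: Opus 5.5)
The proof combines the posterior representation \eqref{eq:posterior_representation} with Lemmas~\ref{lmm:Bt_exp_decay} and~\ref{lmm:At_uniform}. The plan is to show that the Bayesian posterior concentrates on $t^*$ exponentially fast in $N$, and then to pass this concentration to the posterior predictive \eqref{eq:posterior_predictive}.

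\textbf{Step 1: reduce to the posterior mass off $t^*$.} Write
\[
p(\yquery\mid\vz)-p(\yquery\mid\xquery,t^*)=\sum_{t\neq t^*}\bigl(p(\yquery\mid\xquery,t)-p(\yquery\mid\xquery,t^*)\bigr)\,p(t\mid\vz),
\]
using $\sum_t p(t\mid\vz)=1$. Since probabilities lie in $[0,1]$, each bracket has absolute value at most $1$. Hence the left-hand side is bounded by $\sum_{t\neq t^*}p(t\mid\vz)=1-p(t^*\mid\vz)$.

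\textbf{Step 2: bound the off-$t^*$ mass.} By \eqref{eq:posterior_representation},
\[
1-p(t^*\mid\vz)=\frac{S}{1+S}\le S,\qquad S:=\sum_{s\neq t^*}A_s(\vi)B_s.
\]
Lemma~\ref{lmm:At_uniform} gives $A_s(\vi)\le A_{\max}$ uniformly over $\vi\in\dicl$. This is where \cref{assp:bounded-instruction} enters, and it covers both correct and incorrect, present or absent (padded) instructions. Lemma~\ref{lmm:Bt_exp_decay}, which uses \cref{assp:identifiability}, gives $B_s\le e^{-\alpha N}$ for all $N\ge N_0$, almost surely (hence with high probability). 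Therefore
\[
S\le(|\gT|-1)A_{\max}e^{-\alpha N},
\]
and the first claim holds with $\beta:=(|\gT|-1)A_{\max}$.

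\textbf{Step 3: the two-instruction form.} Apply the triangle inequality through $p(\yquery\mid\xquery,t^*)$. This requires that $\beta$ and $N_0$ do not depend on $\vi$. For $\beta$ this follows from the uniformity of $A_{\max}$. For $N_0$, Lemma~\ref{lmm:Bt_exp_decay} involves only the demonstration pairs, which are held fixed, so the same almost-sure event serves every $\vi$.

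\textbf{Main obstacle: the factorization \eqref{eq:likelihood_factorization} across formats.} For formats 3 and 5 the instruction is repeated, or varies i.i.d., within each demonstration. In those cases $p(\vi\mid t)$ becomes a product of $N+1$ factors, and the ratio $p(\vi\mid t)/p(\vi\mid t^*)$ could itself grow exponentially in $N$, which would break the uniform bound $A_{\max}$.

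I would handle this by absorbing the instruction factors into the per-demonstration likelihoods. Each demonstration then contributes $p(i_n,x_n,y_n\mid t)$, and one re-runs Lemma~\ref{lmm:Bt_exp_decay} with this augmented joint. The log-likelihood ratio still drifts to $-\kappa_t'$ with $\kappa_t'\ge\kappa_t>0$, by the chain rule for KL together with independence of instructions and pairs. This works for correct instructions. For a shared incorrect instruction (format 4), however, the instruction term could favor $t$, and one would have to argue that the pair evidence dominates.

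Failing that, I would adopt the reading implicit in Lemma~\ref{lmm:At_uniform}: treat $\vi$ as a single bounded element of the compact domain $\dicl$. Its total log-ratio is then bounded by $2GD$ regardless of the format, and Steps 1–3 go through unchanged.
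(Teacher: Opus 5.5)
Your Steps 1--3 are the paper's proof: you bound the gap by the off-$t^*$ posterior mass, use $p(t\mid\vz)\le A_t(\vi)B_t$ with Lemmas~\ref{lmm:Bt_exp_decay} and~\ref{lmm:At_uniform} to get $\beta=(|\gT|-1)A_{\max}$, and finish with the triangle inequality through $p(\yquery\mid\xquery,t^*)$. Your fallback of treating $\vi$ as a single element of the compact set $\dicl$ is exactly the reading the paper adopts; the format-dependence you flag is a real subtlety the paper glosses over (its own format-6 appendix shows per-demonstration factors give $A_{\max}(N)=O(e^{2G\sqrt{N}r})$), though for the correct-instruction formats it disappears directly, since correctness makes every factor $p(i_n\mid t)/p(i_n\mid t^*)$ smaller than $1$.
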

\noindent
We now proceed with the proof.
\begin{proof}
    By definition of the posterior predictive,
    \begin{equation}
        \label{eq:pp_def}p(\yquery\mid \vz) ~=~ \sum_{t\in\gT}p(\yquery\mid \xquery
        ,t)\, p(t\mid \vz).
    \end{equation}
    Subtracting $p(\yquery\mid \xquery,t^{*})$ and using
    $\sum_{t}p(t\mid \vz)=1$ yields
    \begin{align}
        \label{eq:pp_diff_expand}p(\yquery\mid \vz) - p(\yquery\mid \xquery,t^{*}) & = \sum_{t\neq t^*}\big(p(\yquery\mid \xquery,t)-p(\yquery\mid \xquery,t^{*})\big)\, p(t\mid \vz).
    \end{align}
    Since $0\le p(\yquery\mid \xquery,t)\le 1$, we have $|p(\yquery\mid \xquery,t
    )-p(\yquery\mid \xquery,t^{*})|\le 1$ for all $t$, hence
    \begin{equation}
        \label{eq:pp_diff_bound_by_posterior_mass}\big|p(\yquery\mid \vz) - p(\yquery
        \mid \xquery,t^{*})\big| ~\le~ \sum_{t\neq t^*}p(t\mid \vz).
    \end{equation}
    It remains to upper bound the posterior mass on wrong tasks. From \cref{eq:posterior_representation},
    for any $t\neq t^{*}$,
    \begin{equation}
        p(t\mid \vz) ~=~ \frac{A_{t}(\vi)B_{t}}{1+\sum_{s\neq t^*}A_{s}(\vi)B_{s}}
        ~\le~ A_{t}(\vi)B_{t}.
    \end{equation}
    Summing over $t\neq t^{*}$ and applying Lemma~\ref{lmm:Bt_exp_decay} and ~\ref{lmm:At_uniform},
    we obtain that almost surely for all $N\ge N_{0}$,
    \begin{equation}
        \label{eq:wrong_mass_exp_decay}\sum_{t\neq t^*}p(t\mid \vz) ~\le~ \sum_{t\neq
        t^*}A_{t}(\vi)B_{t}~\le~ (|\gT|-1)\,A_{\max}\, e^{-\alpha N}.
    \end{equation}
    Combining \cref{eq:pp_diff_bound_by_posterior_mass,eq:wrong_mass_exp_decay}
    proves the first statement with $\beta := (|\gT|-1)A_{\max}$:
    \[
        \big|p(\yquery\mid \vz)-p(\yquery\mid \xquery,t^{*})\big| \le \beta e^{-\alpha
        N}.
    \]

    For the equivalent statement comparing two instructions $\vi,\vi'$ (with all
    other parts of $\vz$ fixed), we use the triangle inequality:
    \begin{align}
         & \big|p(\yquery\mid \vz(\vi)) - p(\yquery\mid \vz(\vi'))\big|                                                                             \\
         & \le \big|p(\yquery\mid \vz(\vi)) - p(\yquery\mid \xquery,t^{*})\big| + \big|p(\yquery\mid \vz(\vi')) - p(\yquery\mid \xquery,t^{*})\big| \\
         & \le 2\beta e^{-\alpha N},
    \end{align}
    which completes the proof.
\end{proof}

\subsection{Proof of Corollary~\ref{cor:decay_1_5}}
We first recall the statement of the corollary for convenience.
\begin{corollary}[Gradient Decay of Prompt Sensitivity]
    \label{cor:decay_1_5_app} Under the conditions of \cref{thm:decay_1_5}, then
    there exist constants $\alpha',\beta'>0$ such that, with high probability,
    for all sufficiently large $N$,
    \begin{align}
        \bigl\|\nabla_{\vi}p(\yquery \mid \vz)\bigr\| \le \beta' \exp(-\alpha' N).
    \end{align}
\end{corollary}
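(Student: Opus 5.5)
We will differentiate the posterior-predictive representation explicitly rather than appeal to \cref{thm:decay_1_5}. A uniform bound on function values does not by itself control derivatives; what makes this work is the structure of the posterior. From \eqref{eq:pp_def} and \cref{eq:posterior_representation}, the instruction enters only through the factors $A_t(\vi)$; the $B_t$ and the task-conditional predictives $p(\yquery\mid\xquery,t)$ are independent of $\vi$. In formats 3 and 5 the instruction appears once per demonstration, so there $p(\vi\mid t)$ is the product over the $N+1$ instruction slots, and $\vi$ is the concatenation. We therefore start from
\[
\nabla_{\vi}p(\yquery\mid\vz)=\sum_{t\in\gT}p(\yquery\mid\xquery,t)\,\nabla_{\vi}p(t\mid\vz).
\]

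The first step is to compute $\nabla_{\vi}p(t\mid\vz)$ using the softmax structure of the posterior. Writing $p(t\mid\vz)\propto \pi(t)p(\vi\mid t)p(\xquery\mid t)\prod_n p(x_n,y_n\mid t)$ gives
\[
\nabla_{\vi}p(t\mid\vz)=p(t\mid\vz)\Big(g_t-\sum_{s}p(s\mid\vz)g_s\Big),\qquad g_t:=\nabla_{\vi}\log p(\vi\mid t).
\]
Since $\sum_t\nabla_{\vi}p(t\mid\vz)=0$, we may subtract $p(\yquery\mid\xquery,t^*)$ inside the sum. This yields
\[
\nabla_{\vi}p(\yquery\mid\vz)=\sum_{t\neq t^*}\big(p(\yquery\mid\xquery,t)-p(\yquery\mid\xquery,t^*)\big)\nabla_{\vi}p(t\mid\vz).
\]

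The second step bounds each term with $t\neq t^*$. By \cref{assp:bounded-instruction}, $\|g_t\|\le L_{\vi}$ for every $t$, so $\|g_t-\sum_s p(s\mid\vz)g_s\|\le 2L_{\vi}$ and hence $\|\nabla_{\vi}p(t\mid\vz)\|\le 2L_{\vi}\,p(t\mid\vz)$. The differences of predictives are at most $1$ in absolute value. Therefore
\[
\|\nabla_{\vi}p(\yquery\mid\vz)\|\le 2L_{\vi}\sum_{t\neq t^*}p(t\mid\vz).
\]
The wrong-task mass was already bounded in the proof of \cref{thm:decay_1_5} via Lemmas~\ref{lmm:Bt_exp_decay} and~\ref{lmm:At_uniform}, by $(|\gT|-1)A_{\max}e^{-\alpha N}$ almost surely for $N\ge N_0$. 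We conclude with $\alpha'=\alpha$ and $\beta'=2L_{\vi}\beta$.

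The main obstacle is the repeated-instruction formats. There $\nabla_{\vi}\log p(\vi\mid t)$ is a stacked vector of per-slot gradients, so its norm may grow like $\sqrt{N+1}\,L_{\vi}$ if \cref{assp:bounded-instruction} is read per slot, and $A_{\max}$ in Lemma~\ref{lmm:At_uniform} may likewise depend on $N$. I would first read the assumption as applying to the full instruction vector $\vi$, as the paper does, which makes the constants uniform. If a per-slot reading is needed, the polynomial factor $\sqrt{N}$ and any $e^{cN}$ growth must be absorbed into the exponential. Concretely, I would revisit Lemma~\ref{lmm:Bt_exp_decay} with the instruction likelihood ratios included in the SLLN average. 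Correct or shared instructions then contribute a nonpositive or bounded per-sample drift, so the total log-odds still decrease at a linear rate $\alpha'\in(0,\alpha]$. The polynomial factor is absorbed by shrinking $\alpha'$ slightly, which is why the corollary states $\alpha'$ rather than $\alpha$.
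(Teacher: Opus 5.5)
Your argument is correct and follows the paper's proof: the softmax-gradient identity, the bound $\|\nabla_{\vi}p(t\mid\vz)\|\le 2L_{\vi}\,p(t\mid\vz)$, and the wrong-task mass bound from \cref{thm:decay_1_5}, with $\alpha'=\alpha$. The only difference is that you center at $p(\yquery\mid\xquery,t^*)$ using $\sum_t\nabla_{\vi}p(t\mid\vz)=0$, whereas the paper bounds $\|\nabla_{\vi}p(t^*\mid\vz)\|$ by the wrong-task gradients; this gives $\beta'=2L_{\vi}\beta$ instead of the paper's $4G(|\gT|-1)A_{\max}$. The per-slot fallback is not needed under the paper's reading of \cref{assp:bounded-instruction}, and as sketched it is too optimistic: a repeated incorrect instruction contributes a bounded but positive per-slot log-ratio, which can exceed $\kappa_t$ and reverse the decay.
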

\noindent
We now proceed with the proof.
\begin{proof}
    We start from \cref{eq:pp_def}. Since $p(\yquery\mid \xquery,t)$ is
    independent of $\vi$ under formats 1--5, differentiating gives
    \begin{equation}
        \label{eq:grad_pp_expand}\nabla_{\vi}p(\yquery\mid \vz) ~=~ \sum_{t\in\gT}
        p(\yquery\mid \xquery,t)\, \nabla_{\vi}p(t\mid \vz),
    \end{equation}
    hence by $0\le p(\yquery\mid \xquery,t)\le 1$ and the triangle inequality,
    \begin{equation}
        \label{eq:grad_pp_triangle}\big\|\nabla_{\vi}p(\yquery\mid \vz)\big\| ~\le
        ~ \sum_{t\in\gT}\big\|\nabla_{\vi}p(t\mid \vz)\big\|.
    \end{equation}

    Next we control $\|\nabla_{\vi}p(t\mid \vz)\|$. Differentiating the softmax-form
    posterior yields the standard identity
    \begin{equation}
        \label{eq:posterior_grad_identity}\nabla_{\vi}p(t\mid \vz) = p(t\mid \vz)
        \!\left( \nabla_{\vi}\log p(\vi\mid t) - \sum_{s\in\gT}p(s\mid \vz)\,\nabla
        _{\vi}\log p(\vi\mid s) \right).
    \end{equation}
    By \cref{assp:bounded-instruction}, $\|\nabla_{\vi}\log p(\vi\mid t)\|\le G$
    for all $t$, thus
    \begin{equation}
        \label{eq:posterior_grad_bound_pt}\|\nabla_{\vi}p(t\mid \vz)\| \le p(t\mid
        \vz)\Big(G + \sum_{s\in\gT}p(s\mid \vz)\,G\Big) \le 2G\,p(t\mid \vz).
    \end{equation}

    Now we sum over $t$. Use $\sum_{t\in\gT}p(t\mid \vz)=1$ to relate the gradient
    of the true task to the others:
    \[
        \nabla_{\vi}p(t^{*}\mid \vz) = -\sum_{t\neq t^*}\nabla_{\vi}p(t\mid \vz),
        \qquad \Rightarrow\quad \|\nabla_{\vi}p(t^{*}\mid \vz)\| \le \sum_{t\neq
        t^*}\|\nabla_{\vi}p(t\mid \vz)\|.
    \]
    Therefore,
    \begin{equation}
        \label{eq:sum_grad_reduce_to_wrong}\sum_{t\in\gT}\|\nabla_{\vi}p(t\mid \vz
        )\| \le 2\sum_{t\neq t^*}\|\nabla_{\vi}p(t\mid \vz)\| \overset{\eqref{eq:posterior_grad_bound_pt}}
        {\le}4G\sum_{t\neq t^*}p(t\mid \vz).
    \end{equation}
    Finally, by \cref{eq:wrong_mass_exp_decay} (proved in the theorem proof), almost
    surely for all $N\ge N_{0}$,
    \[
        \sum_{t\neq t^*}p(t\mid \vz) \le (|\gT|-1)A_{\max}e^{-\alpha N}.
    \]
    Combining with \cref{eq:grad_pp_triangle,eq:sum_grad_reduce_to_wrong} gives
    \[
        \big\|\nabla_{\vi}p(\yquery\mid \vz)\big\| \le 4G(|\gT|-1)A_{\max}\, e^{-\alpha
        N}.
    \]
    Thus the corollary holds with $\alpha'=\alpha$ and
    $\beta' = 4G(|\gT|-1)A_{\max}$.
\end{proof}

\subsection{Proof of Corollary~\ref{cor:decay_1_5_expect}}
We first introduce the notation used in this proof and make an essential realizability
assumption.

\begin{assumption}
    [Log-space realizability and positivity] \label{ass:log_approx} There exists
    $\delta>0$ such that
    \begin{equation}
        \label{eq:delta_lower}q^{*}(\vz)\ge \delta \qquad \text{for all }\vz \text{
        in the support of }\mu_{N},
    \end{equation}
    and for any $\epsilon>0$, there exists a parameter $\theta_{0}$ in the
    considered model class such that
    \begin{equation}
        \label{eq:log_uniform_approx}\sup_{\vz \text{ in the support of }\mu_N}\big
        |\log q_{\theta_0}(\vz)-\log q^{*}(\vz)\big| \le \epsilon .
    \end{equation}
    Moreover, the trained predictor is bounded away from zero on the same support:
    for the parameter $\hat\theta$ returned by training (defined below),
    \begin{equation}
        \label{eq:delta_lower_model}q_{\hat\theta}(\vz)\ge \delta \qquad \text{for
        all }\vz \text{ in the support of }\mu_{N}.
    \end{equation}
\end{assumption}

Assumption~\ref{ass:log_approx} is aligned with the cross-entropy training
objective, which minimizes the (log-space) risk $\E_{\vz\sim\mu_N}\big[-\log q_{\theta}
(\vz)\big]$. Hence, realizability in log-space is the natural notion of approximation
for predictors learned by cross-entropy training. The lower bound \eqref{eq:delta_lower}
avoids singularities of the log-loss, and \eqref{eq:delta_lower_model} ensures that
log-space control can be converted into control in probability space.

\noindent
We now proceed with the proof.
\begin{proof}
    Fix a demonstration count $N$ and let $\mu_{N}$ be the distribution over prompts
    of formats~1--5 with exactly $N$ demonstrations (with the non-instruction parts
    fixed). Write $q^{*}(\vz):=p(\yquery\mid \vz)$ for the Bayesian predictor and
    $q_{\theta}(\vz):=p_{\theta}(\yquery\mid \vz)$ for the transformer predictor.

    Let $\hat\theta$ be the parameter obtained by (approximately) minimizing the
    cross-entropy objective over the same prompt distribution $\mu_{N}$:
    \[
        \hat\theta\in \arg\min_{\theta}\ \E_{\vz\sim\mu_N}\big[-\log q_{\theta}(\vz
        )\big].
    \]
    By definition of risk minimization, the trained solution is not worse than the
    candidate $\theta_{0}$: there exists $\eta_{\mathrm{train}}\ge 0$ capturing
    optimization and statistical error such that
    \begin{equation}
        \label{eq:erm_no_worse}\E_{\vz\sim\mu_N}\big[-\log q_{\hat\theta}(\vz)\big
        ] \le \E_{\vz\sim\mu_N}\big[-\log q_{\theta_0}(\vz)\big] +\eta_{\mathrm{train}}
        .
    \end{equation}

    Define the \emph{log-space approximation error in expectation}
    \begin{equation}
        \label{eq:def_epsN_log}\varepsilon_{N}^{\log}:= \E_{\vz\sim\mu_N}\Big[\big
        |\log q_{\hat\theta}(\vz)-\log q^{*}(\vz)\big|\Big].
    \end{equation}
    We treat $\varepsilon_{N}^{\log}$ as the pretraining/training error term
    induced by cross-entropy training (on $\mu_{N}$), and note that it is expected
    to decrease as the model class becomes more expressive and optimization/statistical
    errors shrink. (The subsequent bound is stated in terms of $\varepsilon_{N}^{\log}$
    and does not require identifying its exact dependence on architecture size.)

    \medskip
    \noindent
    \emph{From log-space error to probability-space error.} For any $a,b\in[\delta
    ,1]$, the mean value theorem applied to $\exp(\cdot)$ gives
    \[
        |a-b| = \big|e^{\log a}-e^{\log b}\big| = e^{\xi}\,|\log a-\log b| \le |\log
        a-\log b|,
    \]
    for some $\xi$ between $\log a$ and $\log b$, where we used $e^{\xi}\le 1$ since
    $\xi\le 0$. Applying this inequality with $a=q_{\hat\theta}(\vz)$ and
    $b=q^{*}(\vz)$ (which lie in $[\delta,1]$ by \eqref{eq:delta_lower} and
    \eqref{eq:delta_lower_model}) and taking expectations yields
    \begin{equation}
        \label{eq:abs_le_log}\E_{\vz\sim\mu_N}\Big[\big|q_{\hat\theta}(\vz)-q^{*}
        (\vz)\big|\Big] \le \varepsilon_{N}^{\log}.
    \end{equation}
    Letting
    \[
        \varepsilon_{N}:= \E_{\vz\sim\mu_N}\Big[\big|q_{\hat\theta}(\vz)-q^{*}(\vz
        )\big|\Big],
    \]
    we obtain $\varepsilon_{N}\le \varepsilon_{N}^{\log}$.

    \medskip
    \noindent
    \emph{Conclude by triangle inequality.} Using the triangle inequality exactly
    as in the proof strategy for the instruction-decay bound,
    \[
        \begin{aligned}
            \E_{\vz\sim\mu_N}\Big[\big|q_{\hat\theta}(\vz(\vi)) - q_{\hat\theta}(\vz(\vi'))\big|\Big] & \le \E_{\vz\sim\mu_N}\Big[\big|q_{\hat\theta}(\vz(\vi)) - q^{*}(\vz(\vi))\big|\Big] + \E_{\vz\sim\mu_N}\Big[\big|q^{*}(\vz(\vi)) - q^{*}(\vz(\vi'))\big|\Big] \\
                                                                                                      & \quad+ \E_{\vz\sim\mu_N}\Big[\big|q^{*}(\vz(\vi')) - q_{\hat\theta}(\vz(\vi'))\big|\Big].
        \end{aligned}
    \]
    By Theorem~\ref{thm:decay_1_5} (applied to the Bayes predictor), the middle term
    is bounded by $2\beta e^{-\alpha N}$. The first and third terms are each
    bounded by
    $\E_{\vz\sim\mu_N}\big[\big|q_{\hat\theta}(\vz)-q^{*}(\vz)\big|\big]=\varepsilon
    _{N}$, since only the instruction component varies and the expectation is over
    the same $\mu_{N}$ with non-instruction parts fixed. Therefore,
    \[
        \E_{\vz\sim\mu_N}\Big[\big|q_{\hat\theta}(\vz(\vi)) - q_{\hat\theta}(\vz(
        \vi'))\big|\Big] \le 2\beta e^{-\alpha N}+ 2\varepsilon_{N},
    \]
    which is the claimed bound \eqref{eq:decay_1_5_expect}.
\end{proof}

%% file: appendix_instruction_generalization.tex
\label{app:instruction-generalization}

In this appendix, we provide a complete proof of the formal Lipschitz generalization
bound stated in \cref{sec:general}. The proof follows the same overall strategy as
the proof sketch in the main text: we first restrict the loss to a one-dimensional
path connecting the test prompt to the pretraining domain, then approximate the
resulting path-restricted loss by a Bernstein polynomial, apply the Remez inequality
to transfer control from the pretraining portion of the path to the full path, and
finally optimize the polynomial degree to obtain the stated bound.
\noindent
We restate \cref{thm:decay_6} in a fully explicit form. Throughout this appendix,
we work under prompt format~6, in which each demonstration is associated with its
own instruction, and we consider the instruction \emph{sequence} as the variable.

The instruction is a length-$N$ sequence whose embedding is denoted by $I\in\mathbb{R}
^{k}$ with $k=Nd$.
\[
    I^{\mathrm{inner}}:= (i^{*},\dots,i^{*})\in\mathbb{R}^{k}, \qquad \di := B_{r}
    (I^{\mathrm{inner}}), \qquad \diout := B_{R}(I^{\mathrm{inner}}).
\]

\begin{theorem}[Failure of convergence of posterior predictive (Format 6), formal
version of \cref{thm:decay_6}]
    \label{thm:decay_6_formal} Consider prompt format~6, under
    \cref{assp:identifiability,assp:bounded-instruction}, and fix radii $0<r<R$.
    Almost surely, there exist constants $\alpha>0$ and $N_{0}<\infty$ such that
    for all $N\ge N_{0}$ and all polynomial degrees $n\ge 1$,
    \begin{equation}
        \label{eq:decay6_formal_main}\|f-f^{*}\|_{L^\infty(\diout)}\le \exp(C_{1}
        kn)\Big(\beta(N)\,e^{-\alpha N}+\varepsilon_{n}\Big) +\varepsilon_{n},
    \end{equation}
    where $C_{1}=\log(2R/r)$, $\beta(N):=(|\gT|-1)A_{\max}(N)$ with $A_{\max}(N)=
    O(e^{2G\sqrt{N}r})$ depending on $N$, and
    \begin{equation}
        \label{eq:decay6_eps}\varepsilon_{n}\le c_{0}GR\sqrt{d}\frac{N}{\sqrt{n}}
        ,
    \end{equation}
    with an absolute constant $c_{0}>0$.
\end{theorem}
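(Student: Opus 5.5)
The proof has three ingredients. First, an exponential bound on the inner ball $\di$, obtained exactly as in \cref{thm:decay_1_5}. Second, a polynomial surrogate on $\diout$. Third, a multivariate Remez-type inequality that transfers the bound from $B_r$ to $B_R$.

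\textbf{Step 1 (inner bound).} On $\di=B_r(I^{\mathrm{inner}})$ the instruction sequence stays within distance $r$ of $(i^*,\dots,i^*)$. Because of this, the posterior-odds representation \eqref{eq:posterior_odds_representation} still holds with $A_t(\vi)$ now containing the product of instruction likelihoods. Repeat the argument of Lemma~\ref{lmm:At_uniform}, but take $I^{\mathrm{inner}}$ as the reference point:
\begin{itemize}
\item \cref{assp:bounded-instruction}, applied along the segment, bounds each log-ratio by $2G\|\vi-I^{\mathrm{inner}}\|$;
\item this is at most $2Gr$ per block, and at most $O(G\sqrt{N}r)$ in the aggregated norm.
\end{itemize}
This gives $A_{\max}(N)=O(e^{2G\sqrt N r})$. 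Combining it with Lemma~\ref{lmm:Bt_exp_decay} (almost surely $B_t\le e^{-\alpha N}$ for $N\ge N_0$), exactly as in the proof of \cref{thm:decay_1_5}, yields $\sup_{\di}|f-f^*|\le \beta(N)e^{-\alpha N}$.

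\textbf{Step 2 (polynomial surrogate).} Let $g:=f-f^*$ on $\diout$. The same softmax-gradient identity \eqref{eq:posterior_grad_identity} gives a Lipschitz bound on $g$ with respect to $I\in\mathbb{R}^k$: each of the $N$ instruction blocks contributes gradient at most $2G$, so $\|\nabla_I g\|\lesssim G\sqrt{N}$. Now rescale $B_R$ to a unit cube or ball and use a multivariate Jackson/Bernstein approximation of degree $n$. This produces a polynomial $P_n$ of total degree $\le n$ with
\[
\|g-P_n\|_{L^\infty(\diout)}\le \varepsilon_n\lesssim \mathrm{Lip}(g)\,R\sqrt{k}/\sqrt n = G\sqrt N\,R\sqrt{Nd}/\sqrt n= GR\sqrt d\,N/\sqrt n,
\]
which matches \eqref{eq:decay6_eps}. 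For the tensor-product Bernstein operator, the $\sqrt{k}$ factor comes from summing the coordinatewise variances; the $1/(2\sqrt n)$ rate follows from Lemma~\ref{thm:berstein_app} applied along each coordinate.

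\textbf{Step 3 (multivariate Remez).} Apply a Remez inequality for polynomials of degree $n$ in $\mathbb{R}^k$, with the body $B_R$ and the subset $B_r$. The cleanest route restricts $P_n$ to every line through the common center $I^{\mathrm{inner}}$. On each such chord of length $2R$, the inner segment has length $2r$, so Lemma~\ref{thm:remez_app} gives the factor $(4R/r)^n$. This is already dimension-free and would beat the stated bound, so I would accept the stated, weaker factor $\exp(C_1kn)=(2R/r)^{kn}$. That factor arises from a Brudnyi–Ganzburg volume-ratio Remez inequality $\bigl(4\,\mathrm{vol}(B_R)/\mathrm{vol}(B_r)\bigr)^{n}$ up to constants, since $\mathrm{vol}(B_R)/\mathrm{vol}(B_r)=(R/r)^k$. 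Either route gives
\[
\sup_{\diout}|P_n|\le e^{C_1kn}\sup_{\di}|P_n|\le e^{C_1kn}\bigl(\beta(N)e^{-\alpha N}+\varepsilon_n\bigr).
\]

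\textbf{Conclusion.} Finally,
\[
\|g\|_{L^\infty(\diout)}\le \sup_{\diout}|P_n|+\varepsilon_n,
\]
which is exactly \eqref{eq:decay6_formal_main}.

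The main obstacle is Step 2's bookkeeping. Getting precisely $N/\sqrt n$ requires tracking how the Lipschitz constant scales with the $N$ instruction blocks, and how the multivariate Bernstein error scales with $k=Nd$. I would first attempt the tensor-product Bernstein operator on the enclosing cube $[-R,R]^k$, since its error bound $\sum_j L_j R/\sqrt n$, with per-coordinate Lipschitz constants $L_j$, gives the claimed form via Cauchy–Schwarz. The remaining concern is that this polynomial has coordinatewise degree $n$, that is, total degree $kn$. That is in fact consistent with the $e^{C_1kn}$ factor when Remez is applied along lines.
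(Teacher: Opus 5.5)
Your Steps 1 and 2 match the paper's argument. The inner bound comes from the $G\sqrt N$-Lipschitzness of $\log p(I\mid t)$ around $I^{\mathrm{inner}}$. The surrogate is the tensor-product Bernstein polynomial on the enclosing cube, with error $L_h\,\mathbb{E}\|S/n-x\|\le L_h\sqrt{k/(4n)}$ and $L_h=4RG\sqrt N$, which gives $c_0=2$. Keep that Jensen/variance estimate with the global Lipschitz constant. The per-coordinate route with Cauchy--Schwarz that you mention at the end does not reproduce it. The block bounds only give $\sup|\partial_j g|\le 2G$ for each of the $k$ coordinates, so the best available control on $\sum_j\sup|\partial_j g|$ is $2Gk$. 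That route therefore yields $2GRNd/\sqrt n$, an extra factor $\sqrt d$ relative to the claimed $\varepsilon_n$.

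The genuine gap is in Step 3. Both of your amplification routes assume a surrogate of total degree $n$, but you never construct a total-degree-$n$ approximant with the stated $\varepsilon_n$. The one you do construct (tensor-product Bernstein) has total degree $kn$, as you concede at the end. This breaks both routes:
\begin{itemize}
\item Restricting a degree-$kn$ polynomial to a line through $I^{\mathrm{inner}}$ and applying Lemma~\ref{thm:remez_app} with $\mes(J)=2R$, $\mes(E)=2r$ gives $(4R/r)^{kn}=\exp\bigl(kn\log(4R/r)\bigr)$. This is strictly weaker than $\exp(C_1kn)$ with $C_1=\log(2R/r)$.
\item The volume-ratio (Brudnyi--Ganzburg) inequality at degree $kn$ is far worse, roughly $\bigl(4k(R/r)^{k}\bigr)^{kn}$.
\end{itemize}
So your sentence ``either route gives $\sup_{\diout}|P_n|\le e^{C_1kn}\sup_{\di}|P_n|$'' is false for your surrogate, and the ``dimension-free'' remark does not apply.

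The missing ingredient, which is what the paper uses, is the Chebyshev extremal property for concentric symmetric intervals:
\[
\max_{[-R,R]}|q|\le T_m(R/r)\max_{[-r,r]}|q|\quad\text{whenever }\deg q\le m,\qquad T_m(x)\le(2x)^m\ \text{for }x\ge 1.
\]
Apply it with $m=kn$ on every line through the common center; this yields exactly $(2R/r)^{kn}=e^{C_1kn}$. The generic Remez lemma loses a factor $2$ in the base because it does not use the fact that $E$ sits in the middle of $J$.
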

\subsection{Preliminaries}
We follow the Bayesian decomposition used in \cref{thm:decay_1_5}, and then show
how the high-dimensional instruction variable in format~6 prevents extending
inner-region control to the outer region.

\paragraph{Posterior predictive decomposition.}
By definition,
\[
    f(I) = p(\yquery \mid \vz(I)) = \sum_{t\in\gT}p(\yquery\mid \xquery,t)\,p (t\mid
    \vz(I)).
\]
Since $p(\yquery\mid \xquery,t)$ does not depend on $I$ under format~6, subtracting
$f^{*}=p(\yquery\mid \xquery,t^{*})$ and using $\sum_{t}p(t\mid \vz)=1$ gives
\begin{align}
    \label{eq:pp_gap_by_wrong_mass}|f(I)-f^{*}| & = \Big|\sum_{t\neq t^*}\big(p(\yquery\mid\xquery,t)-p(\yquery\mid\xquery,t^{*})\big) \,p(t\mid \vz(I))\Big| \le \sum_{t\neq t^*}p(t\mid \vz(I)).
\end{align}
Thus it suffices to upper bound the posterior mass on wrong tasks.

\paragraph{Posterior odds and task concentration (inner region).}
Bayes' rule yields, for each $t\in\gT$,
\[
    p(t\mid \vz(I)) \propto \pi(t)\,p(I\mid t)\,p(\xquery\mid t) \prod_{n=1}^{N}p
    (x_{n},y_{n}\mid t).
\]
Fix $t\neq t^{*}$. Forming posterior odds against $t^{*}$ as in
\cref{thm:decay_1_5} gives
\begin{equation}
    \label{eq:odds_format6}\frac{p(t\mid \vz(I))}{p(t^{*}\mid \vz(I))}= \underbrace{%
    \frac{\pi(t)}{\pi(t^{*})}\cdot \frac{p(I\mid t)}{p(I\mid t^{*})}\cdot
    \frac{p(\xquery\mid t)}{p(\xquery\mid t^{*})} }_{=:\,A_t(I)}\cdot \underbrace{%
    \frac{\prod_{n=1}^{N}p(x_{n},y_{n}\mid t)}{\prod_{n=1}^{N}p(x_{n},y_{n}\mid
    t^{*})}
    }_{=:\,B_t}.
\end{equation}

By \cref{assp:identifiability} and the strong law of large numbers (exactly the
same argument as in \cref{thm:decay_1_5}), almost surely there exist $\alpha>0$
and $N_{0}<\infty$ such that for all $N\ge N_{0}$ and all $t\neq t^{*}$,
\begin{equation}
    \label{eq:Bt_decay}B_{t}\le e^{-\alpha N}.
\end{equation}

It remains to control $A_{t}(I)$ on the inner ball
$\di=B_{r}(I^{\mathrm{inner}})$. Since each demonstration carries its own instruction,
the joint instruction likelihood factorizes as
$p(I\mid t)=\prod_{n=1}^{N}p(i_{n}\mid t)$, so $\log p(I\mid t)=\sum_{n=1}^{N}\log
p(i_{n}\mid t)$. By \cref{assp:bounded-instruction}, $\|\nabla_{i_n}\log p(i_{n}\mid
t)\|_{2}\le G$ for every $n$ and $t$, hence
\[
    \|\nabla_{I}\log p(I\mid t)\|_{2}= \sqrt{\sum_{n=1}^{N}\|\nabla_{i_n}\log p(i_{n}\mid
    t)\|_{2}^{2}}\le G\sqrt{N}.
\]
Thus $\log p(I\mid t)$ is $G\sqrt{N}$-Lipschitz on $\mathbb{R}^{k}$. For any $I\in
\di$, applying the mean value theorem to both $t$ and $t^{*}$ and subtracting
gives
\[
    \Big|\log\frac{p(I\mid t)}{p(I\mid t^{*})}-\log\frac{p(I^{\mathrm{inner}}\mid
    t)}{p(I^{\mathrm{inner}}\mid t^{*})}\Big| \le 2G\sqrt{N}\,\|I-I^{\mathrm{inner}}
    \|_{2}\le 2G\sqrt{N}\,r.
\]
Exponentiating,
\begin{equation}
    \label{eq:At_inner_bound}\sup_{I\in\di}\frac{p(I\mid t)}{p(I\mid t^{*})}\le e
    ^{2G\sqrt{N}r}\cdot\frac{p(I^{\mathrm{inner}}\mid t)}{p(I^{\mathrm{inner}}\mid
    t^{*})}.
\end{equation}
Therefore $\sup_{I\in\di}A_{t}(I)$ is finite for each $t$ and $N$, and since
$\gT$ is finite we may define
\[
    A_{\max}(N) := \max_{t\neq t^*}\sup_{I\in\di}A_{t}(I) = O\!\left(e^{2G\sqrt{N}r}
    \right).
\]
Combining \eqref{eq:odds_format6}--\eqref{eq:Bt_decay} and using $p(t\mid\vz) \le
A_{t}(I)B_{t}$ (since the posterior denominator is at least~$1$), we obtain that
almost surely, for all $N\ge N_{0}$ and all $I\in\di$,
\begin{equation}
    \label{eq:wrong_mass_inner}\sum_{t\neq t^*}p(t\mid \vz(I)) \le \sum_{t\neq t^*}
    A_{t}(I)B_{t}\le (|\gT|-1)\,A_{\max}(N)\,e^{-\alpha N}.
\end{equation}
Plugging \eqref{eq:wrong_mass_inner} into \eqref{eq:pp_gap_by_wrong_mass} yields
the inner-region bound:
\begin{equation}
    \label{eq:inner_decay_derived}\|f-f^{*}\|_{L^\infty(\di)}\le \beta(N)\,e^{-\alpha
    N}, \qquad \beta(N) := (|\gT|-1)\,A_{\max}(N).
\end{equation}
Note that $\beta(N)=O(e^{2G\sqrt{N}r})$ grows with $N$, so the inner-region
bound itself does not imply convergence as $N\to\infty$.

\paragraph{Multivariate Bernstein approximation on a bounding cube.}
To define a polynomial approximation over $\diout$, we embed the ball
$\diout = B_{R}(I^{\mathrm{inner}})$ into the bounding cube
$Q := I^{\mathrm{inner}}+ [-R,R]^{k}$. We map $Q$ affinely onto the unit cube $[0
,1]^{k}$ via the bijection $\Phi(I)_{j}$
and consider the pullback $h:=(f-f^{*})\circ\Phi^{-1}$ on $[0,1]^{k}$.

For an integer parameter $n\ge 1$, the $k$-variate Bernstein polynomial (tensor product
of 1D Bernstein polynomials) is defined by
\begin{equation}
    \label{eq:def_multivar_bernstein}(B_{n}^{(k)}h)(x) := \sum_{j_1=0}^{n}\cdots
    \sum_{j_k=0}^{n}h\!\Big(\frac{j_{1}}{n},\dots,\frac{j_{k}}{n}\Big) \prod_{\ell=1}
    ^{k}\binom{n}{j_\ell}x_{\ell}^{\,j_\ell}(1-x_{\ell})^{\,n-j_\ell}, \qquad x\in
    [0,1]^{k}.
\end{equation}
Let $p_{n}:=B_{n}^{(k)}h$ (identified with a function on $Q$ via $\Phi$). Note that
the highest monomial in \eqref{eq:def_multivar_bernstein} is
$\prod_{\ell=1}^{k}x_{\ell}^{n}$, so $p_{n}$ has a \emph{total degree} of $nk$.

The coefficients in \eqref{eq:def_multivar_bernstein} form a convex combination,
so the Bernstein polynomial at $x$ equals the expectation of $h$ at the random
grid point $S/n$, where $S=(S_{1},\dots,S_{k})$ with $S_{j}\sim\mathrm{Binomial}(
n,x_{j})$ independently:
\begin{equation}
    \label{eq:bernstein_expectation}(B_{n}^{(k)}h)(x) = \mathbb{E}\!\left[h\! \left
    (\frac{S}{n}\right)\right].
\end{equation}
Assuming $h$ is Lipschitz with constant $L_{h}$ (determined below), Jensen's inequality
yields
\[
    |(B_{n}^{(k)}h)(x)-h(x)| \le L_{h}\,\mathbb{E}\Big\|\frac{S}{n}-x\Big\| \le L
    _{h}\sqrt{\mathbb{E}\Big\|\frac{S}{n}-x\Big\|^{2}}.
\]
Since $\mathrm{Var}(S_{j}/n)=x_{j}(1-x_{j})/n\le 1/(4n)$ and the coordinates are
independent,
\[
    \mathbb{E}\Big\|\frac{S}{n}-x\Big\|^{2}=\sum_{j=1}^{k}\mathrm{Var}\!\Big( \frac{S_{j}}{n}
    \Big) \le\frac{k}{4n}.
\]
Hence, bounding the error over the cube $Q$ (and thereby over the inscribed ball
$\diout$), we have
$\varepsilon_{n}:=\|p_{n}-h\|_{L^\infty([0,1]^k)}\le L_{h}\sqrt{\frac{k}{4n}}$.

\subsection{Proof of \cref{thm:decay_6}}
\begin{proof}
    Under format~6, $I=(i_{1},\dots,i_{N})\in\mathbb{R}^{k}$ is a concatenation
    of $N$ per-demonstration instruction embeddings $i_{n}\in\mathbb{R}^{d}$, with
    $k=Nd$. Using the factorization
    $\log p(I\mid t)=\sum_{n=1}^{N}\log p(i_{n}\mid t)$ established above, differentiating
    with respect to the $n$-th block gives $\nabla_{i_n}\log p(I\mid t)=\nabla_{i_n}
    \log p(i_{n}\mid t)$, and \cref{assp:bounded-instruction} yields
    $\|\nabla_{i_n}\log p(I\mid t)\|_{2}\le G$ for all $n$ and $t$. The softmax-gradient
    identity applied to each block gives
    \[
        \nabla_{i_n}p(t\mid\vz) = p(t\mid\vz)\Bigl( \nabla_{i_n}\log p(I\mid t) -
        \sum_{s\in\gT}p(s\mid\vz)\,\nabla_{i_n}\log p(I\mid s) \Bigr),
    \]
    so $\|\nabla_{i_n}p(t\mid\vz)\|_{2}\le 2G\,p(t\mid\vz)$. Summing over $t$,
    \[
        \|\nabla_{i_n}f(I)\|_{2}\le\sum_{t\in\gT}2G\,p(t\mid\vz(I))=2G \qquad\text{for
        each }n.
    \]
    Since the $N$ coordinate blocks are orthogonal directions in $\mathbb{R}^{k}$,
    \[
        \|\nabla_{I}f(I)\|_{2}=\sqrt{\sum_{n=1}^{N}\|\nabla_{i_n}f(I)\|_{2}^{2}}\le
        2G\sqrt{N}.
    \]
    Hence $f-f^{*}$ is $2G\sqrt{N}$-Lipschitz on $\mathbb{R}^{k}$. The inverse map
    $\Phi^{-1}(x) = I^{\mathrm{inner}}+ 2R(x - \frac{1}{2})$ has a Lipschitz constant
    of $2R$. Thus, its pullback $h=(f-f^{*})\circ\Phi^{-1}$ is Lipschitz with constant
    $L_{h}= 4RG\sqrt{N}$. Substituting into the Bernstein error bound gives
    \[
        \varepsilon_{n}\le 4RG\sqrt{N}\cdot\sqrt{\frac{Nd}{4n}}= c_{0}GR\sqrt{d}\cdot
        \frac{N}{\sqrt{n}},
    \]
    where $c_{0}=2$, yielding \eqref{eq:decay6_eps}.

    On the inner ball $\di$,
    \[
        \|p_{n}\|_{L^\infty(\di)}\le\|f-f^{*}\|_{L^\infty(\di)}+\varepsilon_{n}\le
        \beta(N)\,e^{-\alpha N}+\varepsilon_{n}.
    \]
    Recall that $p_{n}$ is a polynomial of total degree $nk$. Because $\di$ and
    $\diout$ are concentric balls, the restriction of $p_{n}$ to any line passing
    through $I^{\mathrm{inner}}$ is a 1D polynomial of degree at most $nk$.
    Applying the standard 1D Chebyshev extremum property to the segments
    $[-r, r]$ and $[-R, R ]$ along any such ray gives
    \[
        \|p_{n}\|_{L^\infty(\diout)}\le T_{nk}\!\left(\frac{R}{r}\right) \|p_{n}\|
        _{L^\infty(\di)},
    \]
    where $T_{nk}$ denotes the Chebyshev polynomial of the first kind of degree
    $n k$. Using the bound $T_{m}(x)\le(2x)^{m}$ for $x\ge 1$ yields
    \[
        \|p_{n}\|_{L^\infty(\diout)}\le\left(\frac{2R}{r}\right)^{nk}\|p_{n}\|_{L^\infty(\di)}
        =\exp(C_{1}kn)\,\|p_{n}\|_{L^\infty(\di)}, \qquad C_{1}:=\log(2R/r).
    \]
    Therefore,
    \[
        \|p_{n}\|_{L^\infty(\diout)}\le\exp(C_{1}kn)\Big(\beta(N)\,e^{-\alpha N}+
        \varepsilon_{n}\Big).
    \]
    By the triangle inequality
    $\|f-f^{*}\|_{L^\infty(\diout)}\le\|p_{n}\|_{L^\infty(\diout)}+\varepsilon_{n}$,
    we obtain \eqref{eq:decay6_formal_main}.

    Note that $\varepsilon_{n}=c_{0}GR\sqrt{d}\cdot N/\sqrt{n}$ grows with $N$
    for any fixed $n$, and requires $n=\Omega(N^{2})$ to remain bounded. However,
    choosing $n=\Omega(N^{2})$ causes the amplification factor to satisfy $\exp(C
    _{1}kn)=\exp (\Omega(dN^{3}))$, which diverges super-exponentially. Hence no
    choice of $n$ simultaneously controls both terms, and the bound \eqref{eq:decay6_formal_main}
    cannot be made to converge to $0$ as $N\to\infty$.
\end{proof}

\subsection{Proof of Corollary~\ref{cor:decay_6}}

To establish stability guarantees on the extended domain $\diout$, we rely on the
following tool.

\begin{lemma}[Multivariate Markov inequality \citep{Wilhelmsen1974}]
    \label{thm:markov} Let $T\subset\mathbb{R}^{k}$ be a convex body with diameter
    $w_{T}$, and let $p:\mathbb{R}^{k}\to\mathbb{R}$ be a polynomial of total
    degree at most $m$. Then
    \begin{equation}
        \|\nabla p\|_{L^{\infty}(T)}\le \frac{4m^{2}}{w_{T}}\|p\|_{L^{\infty}(T)}
        .
    \end{equation}
\end{lemma}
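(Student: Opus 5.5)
The plan is to reduce the multivariate statement to the classical one-dimensional Markov inequality applied along chords of $T$, and then control the chord lengths by the geometry of $T$. Fix $x\in T$ with $\nabla p(x)\neq 0$ and set $u:=\nabla p(x)/\|\nabla p(x)\|_{2}$. Consider the line $\ell=\{x+su: s\in\mathbb{R}\}$. Since $T$ is a convex body, $\ell\cap T$ is a segment $[a,b]$ (in the parameter $s$) containing $0$. The restriction $q(s):=p(x+su)$ is a univariate polynomial of degree at most $m$, and $q'(0)=\langle\nabla p(x),u\rangle=\|\nabla p(x)\|_{2}$. Moreover $\sup_{s\in[a,b]}|q(s)|\le\|p\|_{L^\infty(T)}$.

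First, I will invoke the 1D Markov inequality on $[a,b]$:
\[
\sup_{s\in[a,b]}|q'(s)|\le\frac{2m^{2}}{b-a}\sup_{s\in[a,b]}|q(s)|.
\]
This gives
\[
\|\nabla p(x)\|_{2}\le\frac{2m^{2}}{b-a}\,\|p\|_{L^\infty(T)}.
\]
Taking the supremum over $x$ reduces the claim to a uniform lower bound $b-a\ge w_{T}/2$ on the lengths of chords of $T$ in the relevant direction.

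Second, the chord bound is the main obstacle, since the chord through $x$ in the gradient direction need not be long. My first attempt is Wilhelmsen's device. Rather than using the chord through $x$ itself, I compare it with the longest chord of $T$ parallel to $u$. By convexity, $T$ contains the convex hull of $x$ and that longest chord. This lets me use a Bernstein-type pointwise estimate $|q'(s)|\le m\,\|q\|/\sqrt{(s-a)(b-s)}$ on a parallel chord, combined with the Markov estimate near the endpoints; this is what costs the extra factor of $2$, turning $2m^2$ into $4m^2$. The long parallel chord has length at least the width of $T$ in direction $u$. That width is bounded below by $w_T$ exactly when $T$ is "round", i.e.\ when its minimal width and its diameter are comparable. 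This holds in the only cases used later: the balls $\di,\diout$, for which width equals diameter equals $2r$ or $2R$.

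Third, I will record this caveat explicitly. For a general convex body, the correct quantity in the denominator is the minimal width. A thin slab $[0,1]\times[0,\delta]$ with $p(x)=x_2$ shows the diameter cannot be used in general. So I will state and prove the bound with $w_T$ equal to the minimal width, which coincides with the diameter for Euclidean balls. This suffices for the subsequent application to Corollary~\ref{cor:decay_6}, where $T=\diout=B_R(I^{\mathrm{inner}})$ and $m=nk$.
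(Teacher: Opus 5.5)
Your slab counterexample is correct. With $T=[0,1]\times[0,\delta]$ and $p(x)=x_2$ one has $\|\nabla p\|=1$, but $4m^2\|p\|_T/\mathrm{diam}(T)=4\delta/\sqrt{1+\delta^2}<1$ for small $\delta$. So the lemma as printed is false. The paper gives no proof, only the citation, and it misquotes Wilhelmsen, whose denominator is the minimal width. The only use is on $\diout=B_R(I^{\mathrm{inner}})$, where width and diameter both equal $2R$, so Corollary~\ref{cor:decay_6} is unaffected. Replacing diameter by minimal width is the right repair.

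The genuine gap is in your proof of the repaired statement, at the step you yourself flag as the obstacle. A Bernstein estimate on a chord parallel to $u$ says nothing about $\nabla p(x)$ unless $x$ lies on that chord, away from its ends. Inside $\mathrm{conv}(x,y,z)$, the only region your convexity remark guarantees, the chord through $x$ parallel to $u$ is just $\{x\}$. The parallel chords at relative distance $\tau$ from $x$ have length $\tau\ell$, so the bounds they give behave like $2m\|p\|_T/(\tau\ell)$ (or $2m^2\|p\|_T/(\tau\ell)$ via Markov) and blow up as $\tau\to0$. The factor $4$ does not come from ``Bernstein plus Markov near the endpoints.''

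The missing idea is to use segments through $x$ in \emph{other} directions. Take $[y,z]$ a longest chord parallel to $u$, with $z-y=\ell u$, and write $\ell u=(z-x)-(y-x)$. By convexity $[x,z],[x,y]\subset T$. For $g_z(s):=p(x+s(z-x))$ on $[0,1]$, the one-dimensional Markov inequality at the endpoint $s=0$ gives $|\langle\nabla p(x),z-x\rangle|=|g_z'(0)|\le 2m^2\|p\|_T$, and likewise for $y$. Hence
\[
\ell\,\|\nabla p(x)\|_2=\langle\nabla p(x),z-x\rangle-\langle\nabla p(x),y-x\rangle\le 4m^2\|p\|_T,
\]
which is where $4m^2=2m^2+2m^2$ comes from.

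Your chord-length claim also needs correcting. A chord parallel to $u$ lies between the two supporting hyperplanes orthogonal to $u$, so its length is at most, not at least, the width in direction $u$. For example, the parallelogram $\{0\le x_1\le 1,\ x_1\le x_2\le x_1+\delta\}$ has horizontal width $1$, but all its horizontal chords have length $\le\delta$. What is true, and what you need, is $\ell\ge w_{\min}(T)$. At a longest chord $[y,z]$ in direction $u$ there are parallel supporting hyperplanes through $y$ and $z$ with some unit normal $v$. Then $\ell\langle u,v\rangle=w(v)$, hence $\ell\ge w(v)\ge w_{\min}(T)$. For the ball you can simply take the diameter through the center, so $\ell=2R$.
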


To upper bound $\|\nabla f\|$ on $\diout$ using a polynomial surrogate, uniform $C
^{0}$ approximation is insufficient in general. We therefore impose a mild
$C^{1}$-approximability condition.

\begin{assumption}
    [Uniform $C^{1}$ polynomial approximability on $\diout$] \label{ass:C1_approx}
    There exists a polynomial $p_{n}$ of total degree at most $nk$ such that
    \[
        \|f-p_{n}\|_{L^\infty(\diout)}\le\varepsilon_{n}, \qquad \|\nabla f-\nabla
        p_{n}\|_{L^\infty(\diout)}\le\delta_{n}.
    \]
\end{assumption}

\begin{proof}
    Recall that $\diout=B_{R}(I^{\mathrm{inner}})\subset\mathbb{R}^{k}$ with $k=N
    d$, so its diameter is $w_{\diout}=2R$. Let $p_{n}$ be the polynomial surrogate
    of total degree $m=nk$ from \cref{ass:C1_approx}.

    By the triangle inequality and the inner-region bound \eqref{eq:inner_decay_derived},
    \begin{equation}
        \label{eq:pn_inner_control}\|p_{n}\|_{L^\infty(\di)}\le \|f-f^{*}\|_{L^\infty(\di)}
        + \|f-p_{n}\|_{L^\infty(\di)}\le \beta(N)\,e^{-\alpha N}+ \varepsilon_{n}
        .
    \end{equation}

    Since $\di$ and $\diout$ are concentric balls, the restriction of $p_{n}$ to
    any line through $I^{\mathrm{inner}}$ is a one-dimensional polynomial of degree
    at most $nk$. Applying the standard Chebyshev growth bound on the concentric
    intervals $[-r,r]$ and $[-R,R]$ yields
    \[
        \|p_{n}\|_{L^\infty(\diout)}\le T_{nk}\!\left(\frac{R}{r}\right) \|p_{n}\|
        _{L^\infty(\di)},
    \]
    where $T_{nk}$ is the Chebyshev polynomial of the first kind. Using $T_{m}(x)
    \le(2x)^{m}$ for $x\ge1$ gives
    \begin{equation}
        \label{eq:pn_outer_amplified}\|p_{n}\|_{L^\infty(\diout)}\le \exp(C_{1}kn
        ) \big( \beta(N)\,e^{-\alpha N}+ \varepsilon_{n}\big), \qquad C_{1}=\log(
        2R/r).
    \end{equation}

    Applying \cref{thm:markov} with $m=nk$ and $w_{\diout}=2R$,
    \begin{equation}
        \label{eq:grad_pn_markov_clean}\|\nabla p_{n}\|_{L^\infty(\diout)}\le \frac{4(nk)^{2}}{2R}
        \|p_{n}\|_{L^\infty(\diout)}= \frac{2n^{2}k^{2}}{R}\|p_{n}\|_{L^\infty(\diout)}
        .
    \end{equation}

    Substituting \eqref{eq:pn_outer_amplified} into \eqref{eq:grad_pn_markov_clean}
    yields
    \begin{equation}
        \|\nabla p_{n}\|_{L^\infty(\diout)}\le \frac{2n^{2}k^{2}}{R}\exp(C_{1}kn)
        \Big( \beta(N)\,e^{-\alpha N}+ \varepsilon_{n}\Big).
    \end{equation}

    Using the $C^{1}$ approximation condition and the triangle inequality,
    \[
        \|\nabla f\|_{L^\infty(\diout)}\le \|\nabla p_{n}\|_{L^\infty(\diout)}+ \delta
        _{n},
    \]
    we obtain
    \begin{equation}
        \label{eq:grad_final_clean}\|\nabla f\|_{L^\infty(\diout)}\le \frac{2n^{2}k^{2}}{R}
        \exp(C_{1}kn) \Big( \beta(N)\,e^{-\alpha N}+ \varepsilon_{n}\Big) + \delta
        _{n}.
    \end{equation}

    Taking
    \[
        n = \Theta(G^{2}R^{2}dN^{2})
    \]
    controls the Bernstein approximation error $\varepsilon_{n}=O(1)$ from \cref{thm:decay_6_formal}.
    Substituting $k=Nd$ gives the stated scaling.
\end{proof}

%% file: appendix_experiment.tex
\label{app:experiment}

In this section we provide detailed introduction of our experiments setup and additional
experimental results to supplement those in the main text.

\subsection{Pretraining Dependence}
\label{app:pretrain_error} To isolate the role of \emph{intrinsic ICL capability},
we construct three variants of \texttt{Llama3.2-3B} that differ only in the
amount and \emph{format} of additional pretraining on a six-digit addition corpus.
Throughout, we use the same evaluation protocol in the main text (6--10 digit addition).

\textbf{Base (Weak).} We use the original \texttt{Llama3.2-3B} checkpoint
without any additional training.

\textbf{Fine-tuned (Strong).} We randomly generate $20{,}000$ six-digit addition
instances and format each instance as a supervised instruction-following sample in
JSONL, each training example is serialized as
\[
    \texttt{Instruction: \ldots\ \ \ Input: \ldots\ \ \ Response: \ldots }
\]
and we fine-tune the model using LoRA on this dataset.

\textbf{Fine-tuned (Medium).} We use the \emph{same} $20{,}000$ generated
instances, but concatenate all serialized training examples into a single long text
sequence and fine-tune on this single sequence. This construction preserves the
token-level training signal while substantially reducing the effective diversity
and sampling of training contexts, leading to a weaker improvement in intrinsic ICL
capability compared to the sample-level training above.

Both \textbf{Strong} and \textbf{Medium} variants are trained with the same LoRA
configuration and optimizer settings. We apply LoRA to attention and MLP
projection modules. We train
a causal LM objective with labels equal to the input tokens (standard next-token
prediction over the serialized prompt). Unless otherwise stated, we use a maximum
sequence length of $512$ tokens with padding/truncation.

\subsection{Demonstration Choice}
\label{app:demo}

In this section, out goal is to isolate how the informativeness of demonstrations
affects in-context generalization, under the same prompt template, the same query
distribution, and the same number of demonstrations.

\paragraph{Tasks.}
We consider two retrieval-style tasks constructed as ``$(x, ?, y)$'' completion:
(i) \textbf{Sports}: $x$ is a country and $y$ is its designated sport label in
our benchmark; (ii) \textbf{People}: $x$ is a company and $y$ is a canonical person
name associated with that company in our benchmark (e.g., a founder). The model
is queried with an unseen $x_{\text{test}}$ and asked to predict the
corresponding $y_{\text{test}}$.

\paragraph{Ambiguous vs.\ identifying demonstration pools.}
For each task, we build two disjoint pools of demonstration pairs: \emph{ambiguous}
and \emph{identifying}. Both pools contain $10$ labeled pairs $(x,y)$. Intuitively,
the ambiguous pool contains examples that can be explained by multiple plausible
latent rules (hence providing weaker evidence about which rule the prompt instantiates),
whereas the identifying pool contains examples that more strongly pin down a unique
rule. Importantly, the \emph{test set} (query distribution) is kept the same
across the two conditions; only the demonstration pool differs.

\paragraph{Prompt template.}
Given a demonstration set $\{(x_{j}, y_{j})\}_{j=1}^{k}$, we format the prompt as
\[
    \texttt{Question: (}x_{1}, ?, y_{1}\texttt{), \ldots, (}x_{k}, ?, y_{k}\texttt
    {). }x_{\text{test}}, ?,
\]
and instruct the model to output a single answer token span in English (sport
name for Sports; person name for People).

\paragraph{Evaluation.}
For each task and each condition (ambiguous / identifying), we generate $100$
prompts per random seed by sampling $k\in\{2,4,8\}$ uniformly and sampling $k$
demonstrations uniformly without replacement from the corresponding pool. The query
pair $(x_{\text{test}}, y_{\text{test}})$ is sampled from a fixed test list with
ground-truth labels. Model outputs are parsed into JSON and normalized by
lowercasing and removing punctuation; accuracy is computed by exact match with the
normalized ground truth. We repeat the entire procedure for $5$ random seeds and
report mean $\pm$ standard deviation across seeds.

\paragraph{Results.}
Tables~\ref{tab:demo_choice_sports}--\ref{tab:demo_choice_people} summarize the
accuracy under the two demonstration pools as a function of $k$ on \texttt{Qwen-235B-A22B}
and \texttt{Qwen-30B-A3B}, respectively. Across both tasks, the identifying pool
consistently yields higher accuracy than the ambiguous pool, and the performance
gap generally increases with more demonstrations.

\begin{table}[htbp]
    \centering
    \caption{\textbf{Demonstration Experiment.} Task: sports, repeated 5
    times with different random seeds.}
    \label{tab:demo_choice_sports} \resizebox{\linewidth}{!}{
    \begin{tabular}{ccccc}
        \toprule \textbf{Demo number} & \textbf{Ambiguous Acc (235B)} & \textbf{Identifying Acc (235B)} & \textbf{Ambiguous Acc (30B)} & \textbf{Identifying Acc (30B)} \\
        \midrule 2                    & $0.265 \pm 0.073$             & $0.421 \pm 0.080$               & $0.122 \pm 0.029$            & $0.176 \pm 0.062$              \\
        4                             & $0.267 \pm 0.084$             & $0.542 \pm 0.048$               & $0.151 \pm 0.078$            & $0.410 \pm 0.059$              \\
        8                             & $0.202 \pm 0.037$             & $0.602 \pm 0.043$               & $0.164 \pm 0.039$            & $0.559 \pm 0.074$              \\
        \bottomrule
    \end{tabular}
    }
\end{table}

\begin{table}[htbp]
    \centering
    \caption{\textbf{Demonstration Experiment.} Task: People, repeated 5
    times with different random seeds.}
    \label{tab:demo_choice_people} \resizebox{\linewidth}{!}{
    \begin{tabular}{ccccc}
        \toprule \textbf{Demo number} & \textbf{Ambiguous Acc (235B)} & \textbf{Identifying Acc (235B)} & \textbf{Ambiguous Acc (30B)} & \textbf{Identifying Acc (30B)} \\
        \midrule 2                    & $0.713 \pm 0.074$             & $0.940 \pm 0.049$               & $0.735 \pm 0.083$            & $0.908 \pm 0.059$              \\
        4                             & $0.708 \pm 0.096$             & $0.931 \pm 0.027$               & $0.775 \pm 0.037$            & $0.950 \pm 0.058$              \\
        8                             & $0.648 \pm 0.008$             & $0.845 \pm 0.044$               & $0.820 \pm 0.027$            & $0.990 \pm 0.001$              \\
        \bottomrule
    \end{tabular}
    }
\end{table}

\subsection{Prompt Sensitivity vs. Demonstration Count}
\label{app:experiment_instruction_variation} We study how token-level numerical
predictions respond to changes in the in-context operator pattern while keeping
the underlying query answer fixed. We evaluate \texttt{Qwen3-30B-A3B} model and
\texttt{Qwen3-235B-A22B} model.

\paragraph{Prompt regimes.}
Each evaluation prompt consists of a fixed-format arithmetic query paired with a
fixed target answer (e.g., \texttt{1345}). We construct three regimes that
differ only in the operator pattern appearing in the demonstrations: (i) \emph{addition-only},
where all in-context examples use \texttt{+}; (ii) \emph{multiplication-only}, where
all examples use \texttt{$\times$}; and (iii) \emph{mixed-symbol}, where different
operators are interleaved across demonstrations. Across all regimes, the query problem
and its correct final answer remain identical.

\paragraph{Token-level score extraction.}
For each prompt, we extract token-level scores corresponding to the correct digits
of the target answer. Concretely, for a target digit string $a_{1}a_{2}\cdots a_{L}$,
we record, at each output position $\ell$, the model-assigned log-probability of
the correct digit $a_{\ell}$. These per-digit log-probabilities serve as a fine-grained
proxy for the model’s confidence in the correct numerical prediction at each position.
Specifically, in the figures, Char~$1 ,3,4 ,5$ denotes the score trace
associated with the $1,2,3,4$-th digit position of the fixed target answer
\texttt{1345}.

\paragraph{Findings.}
\cref{fig:add_logit} and \cref{fig:add_logit_a3b} present the token-level score
trajectories for the \texttt{Qwen3-235B-A22B} and \texttt{Qwen3-30B-A3B} models,
respectively, under the three operator regimes. Corresponding score gradients are
shown in \cref{fig:add_logit_235b_gradient} and \cref{fig:add_logit_a3b_gradient}.
In homogeneous-symbol regimes (addition-only or multiplication-only), per-digit score
trajectories are relatively smooth and exhibit small discrete gradients,
indicating stable symbolic behavior. In contrast, the mixed-symbol regime
produces larger variance and more frequent sign changes in the gradients, suggesting
increased instability when the operator distribution shifts within the context.
These patterns are consistent across model scales, although the larger \texttt{Qwen3-235B-A22B}
model exhibits systematically smaller gradient magnitudes, indicating improved robustness
to symbolic heterogeneity.


\begin{figure}[htbp]
    \centering
    \begin{subfigure}
        [t]{0.32\textwidth}
        \centering
        \includegraphics[width=\textwidth]{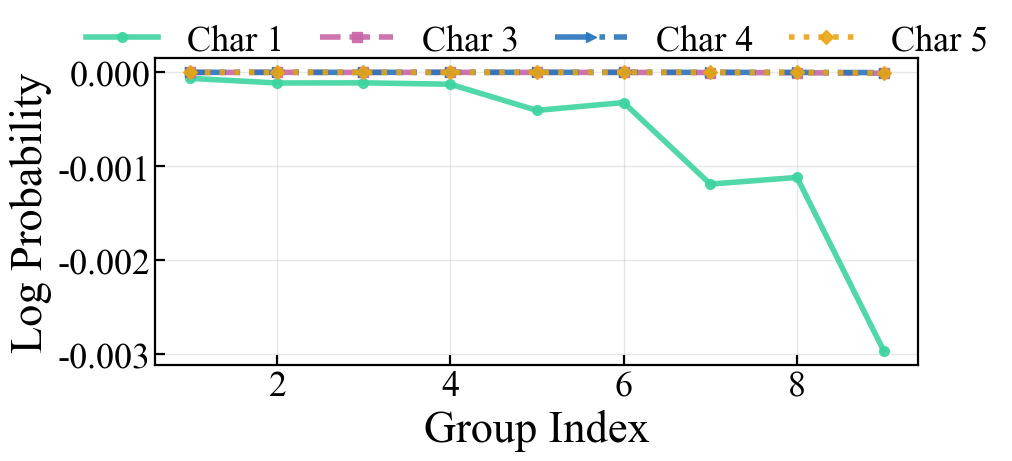}
        \caption{Equivalent Instructions}
        \label{fig:correct_logit_235b}
    \end{subfigure}
    \hfill
    \begin{subfigure}
        [t]{0.32\textwidth}
        \centering
        \includegraphics[width=\textwidth]{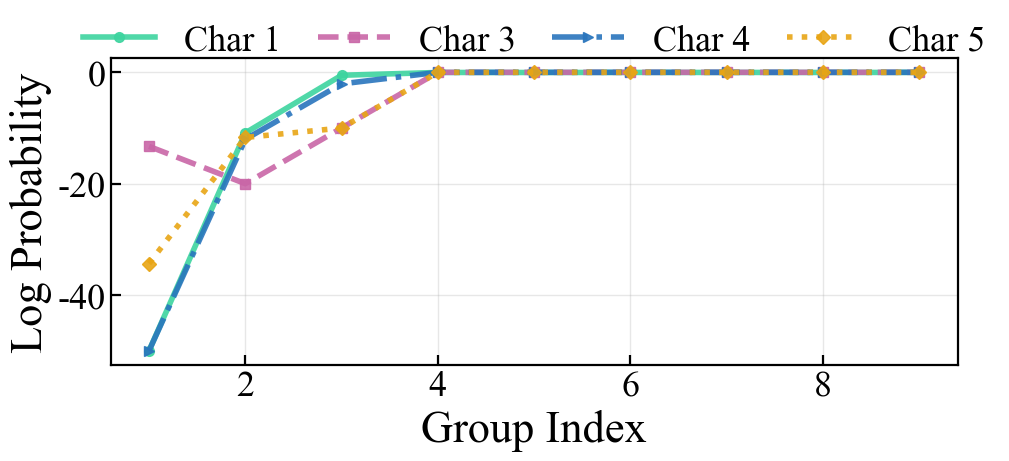}
        \caption{Consistent Instructions}
        \label{fig:consistent_logit_235b}
    \end{subfigure}
    \hfill
    \begin{subfigure}
        [t]{0.32\textwidth}
        \centering
        \includegraphics[width=\textwidth]{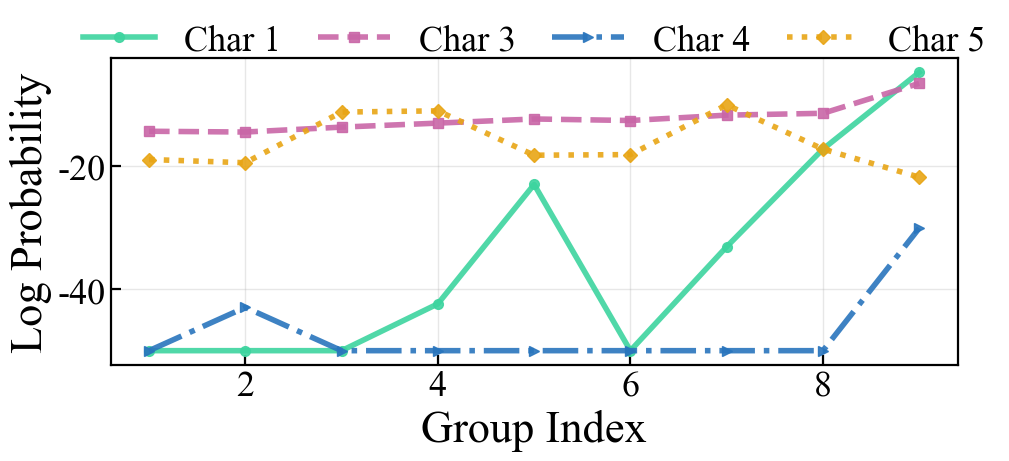}
        \caption{Inconsistent Instructions}
        \label{fig:inconsistent_logit_235b}
    \end{subfigure}
    \caption{\textbf{Posterior confidence under instruction variation.} (\texttt{Qwen-235B-A22B-Instruct})
    (a) Equivalent instructions yield stable, high-confidence predictions. (b)
    Incorrect but consistent instructions allow confidence to recover as more
    demonstrations are added. (c) Incorrect and inconsistent instructions lead to
    persistently low and unstable confidence despite increasing context length.}

    \label{fig:add_logit}
\end{figure}

\begin{figure}[htbp]
    \centering
    \includegraphics[width=0.32\textwidth]{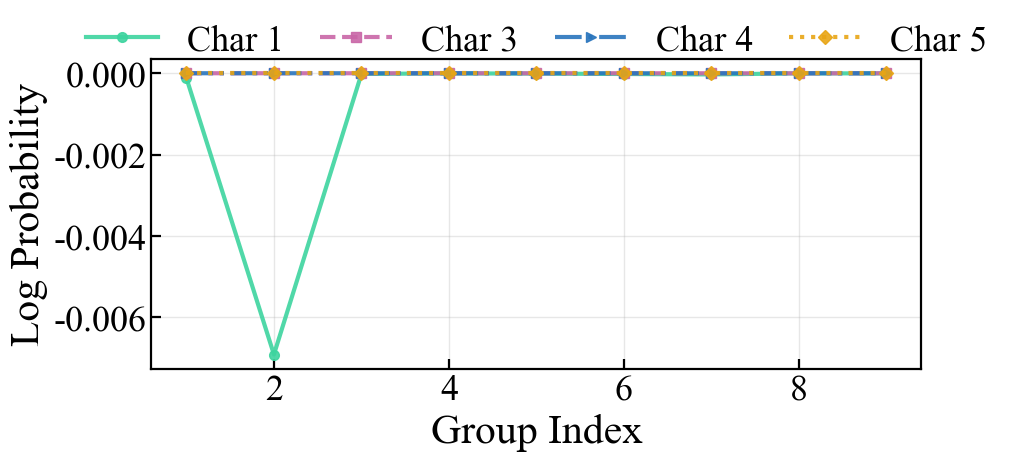}
    \includegraphics[width=0.32\textwidth]{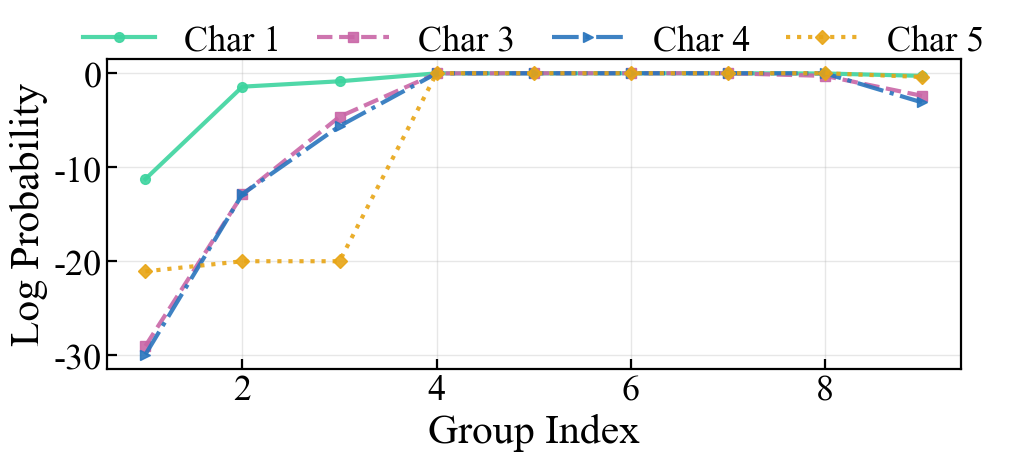}
    \includegraphics[width=0.32\textwidth]{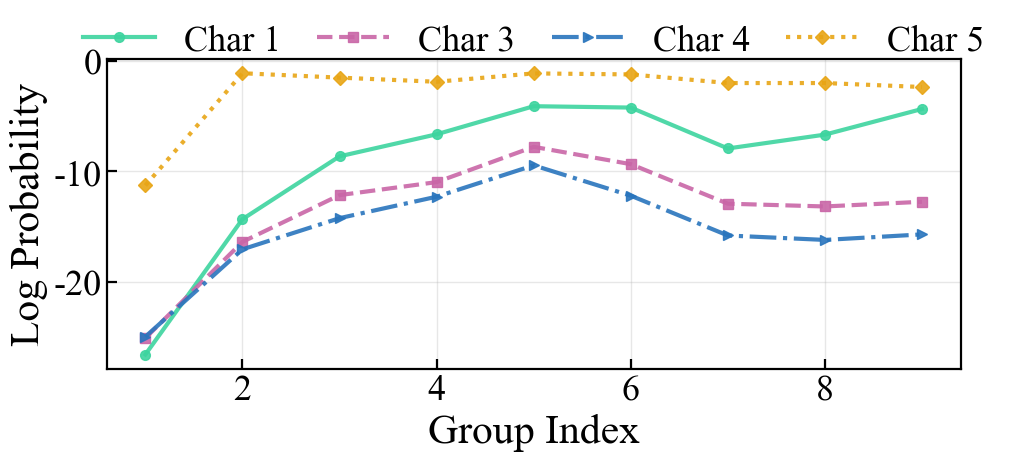}
    \caption{\textbf{Posterior confidence under instruction variation.} (\texttt{Qwen-30B-A3B-Instruct})
    (a) Equivalent instructions yield stable, high-confidence predictions. (b)
    Incorrect but consistent instructions allow confidence to recover as more
    demonstrations are added. (c) Incorrect and inconsistent instructions lead to
    persistently low and unstable confidence despite increasing context length.}
    \label{fig:add_logit_a3b}
\end{figure}

\begin{figure}[htbp]
    \centering
    \includegraphics[width=0.32\textwidth]{
        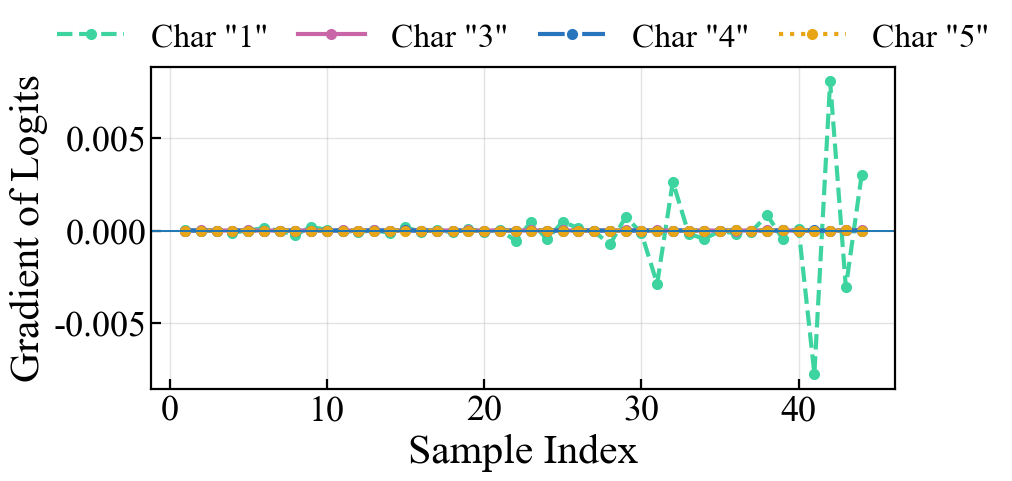
    }
    \includegraphics[width=0.32\textwidth]{
        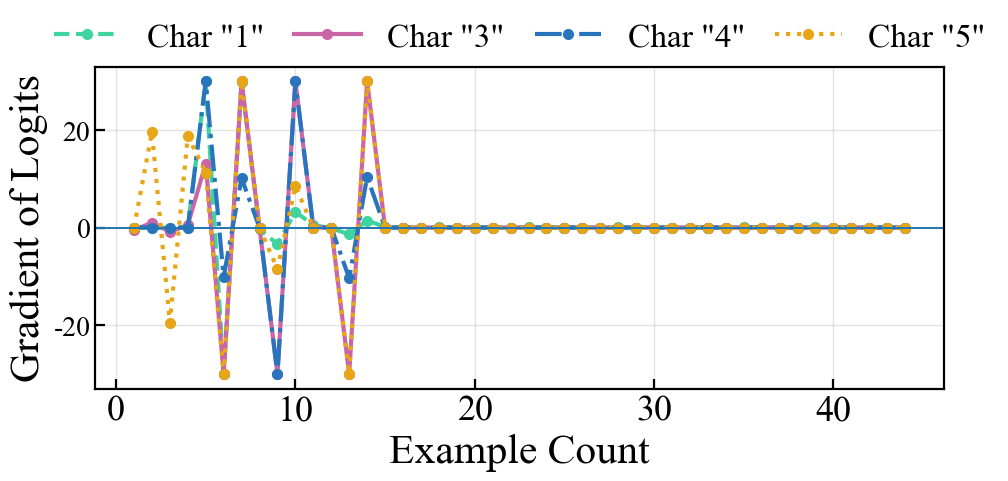
    }
    \includegraphics[width=0.32\textwidth]{
        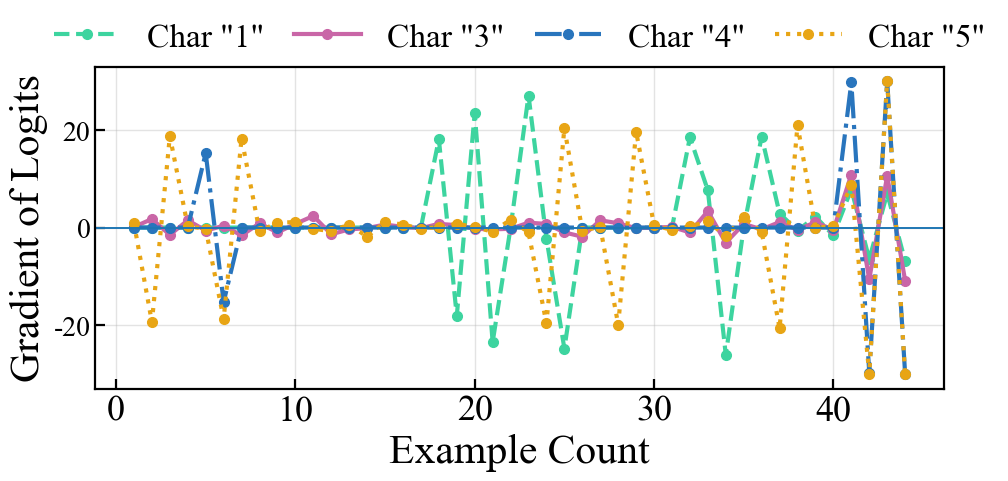
    }
    \caption{\textbf{Discrete gradients of log probability.} (\texttt{Qwen-235B-A22B-Instruct})
    (a) Equivalent instructions yield small and stable gradients. (b) Incorrect but
    consistent instructions show gradual stabilization.(c) Incorrect and inconsistent
    instructions produce large and irregular gradients.}
    \label{fig:add_logit_235b_gradient}
\end{figure}

\begin{figure}[htbp]
    \centering
    \includegraphics[width=0.32\textwidth]{
        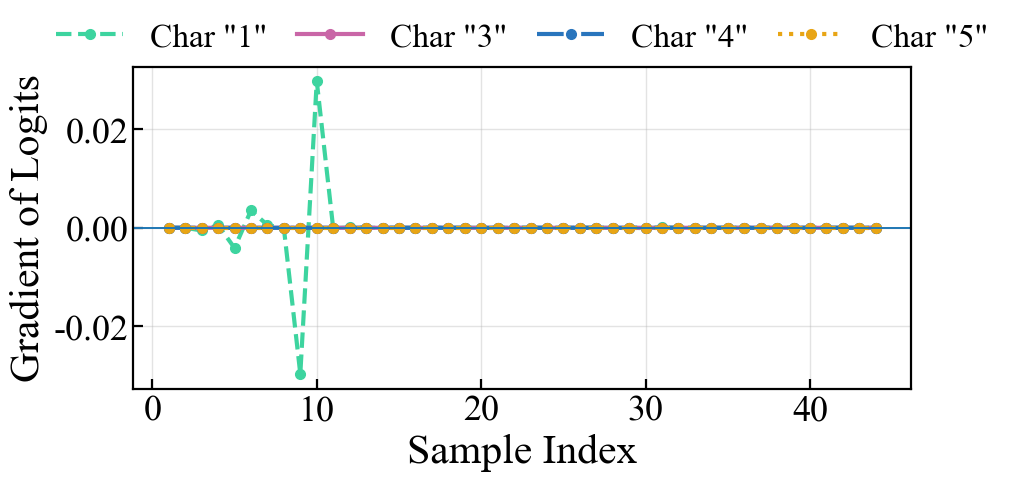
    }
    \includegraphics[width=0.32\textwidth]{
        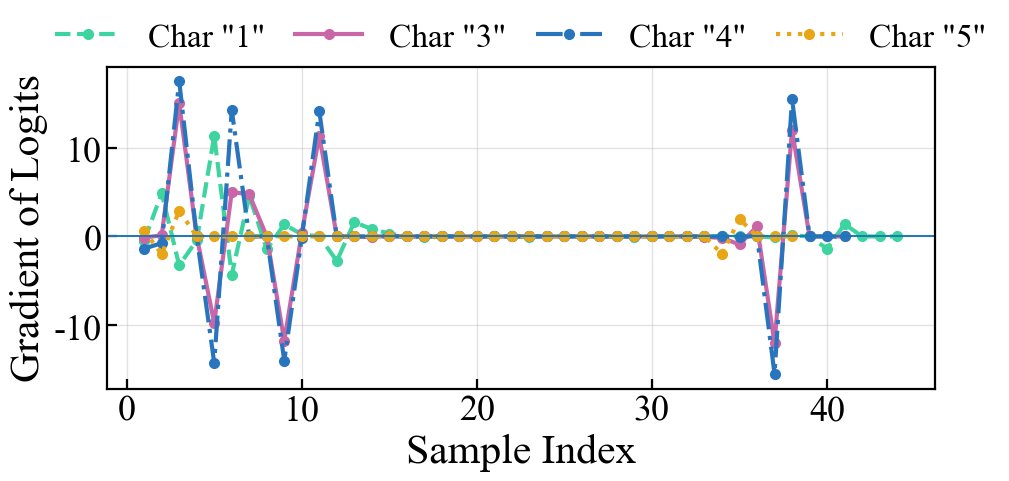
    }
    \includegraphics[width=0.32\textwidth]{
        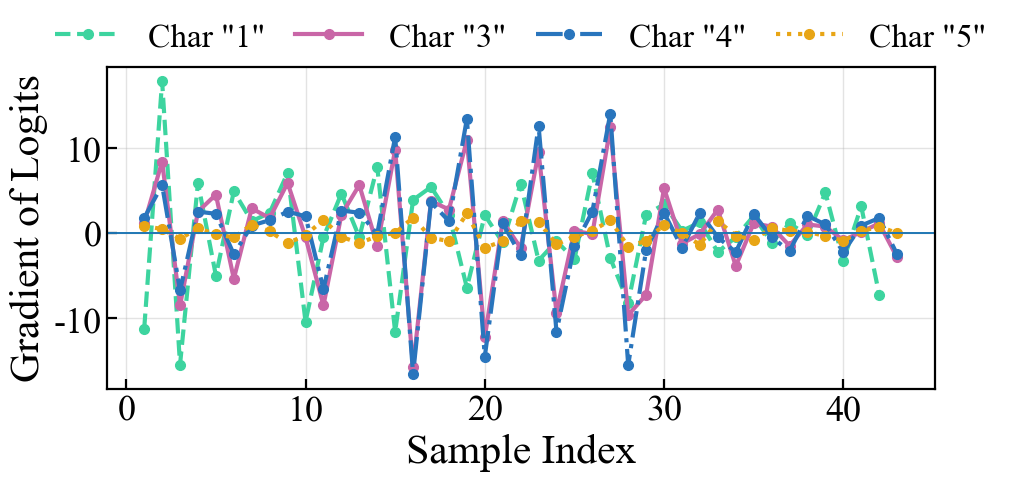
    }
    \caption{\textbf{Discrete gradients of log probability.} (\texttt{Qwen-30B-A3B-Instruct})
    (a) Equivalent instructions yield small and stable gradients. (b) Incorrect but
    consistent instructions show gradual stabilization.(c) Incorrect and inconsistent
    instructions produce large and irregular gradients.}
    \label{fig:add_logit_a3b_gradient}
\end{figure}

\subsection{CoT Effect}
\label{app:cot_effect} We further study the impact of CoT prompting on \texttt{Llama3.2-3B},
focusing on how the intermediate steps of the reasoning chain affects
performance. We consider two CoT variants for the same multi-digit addition problem,
which lead to the same correct final answer but differ in how intermediate steps
are constructed.

\paragraph{In-distribution decomposition CoT.}
For the \emph{in-distribution decomposition} CoT variant, each intermediate reasoning
step corresponds to a well-defined subtask (e.g., digit-wise decomposition and
carry propagation). To ensure that the model has intrinsic capability for these
intermediate steps, we generate $10{,}000$ training examples for \emph{each}
step and fine-tune the model on these step-level datasets prior to evaluation. Each
example is formatted as an instruction--input--output triple, analogous to the
pretraining setup in \cref{app:pretrain_error}.
\begin{quote}
    \small \textbf{Step 1:} $1234561 = 1234560 + 1$ \\
    \textbf{Step 2:} $1323212 = 1323210 + 2$ \\
    \textbf{Step 3:} $1234560 + 1323210 \rightarrow (123456 + 132321)\times 10$
    \\
    \textbf{Step 4:} $1 + 2$ \\
    \textbf{Step 5:} Combine results.
\end{quote}
Each step corresponds to a sub-computation the model can reliably perform in isolation.

\paragraph{Out-of-distribution decomposition CoT.}
For the \emph{out-of-distrubution} CoT variant, the final answer remains correct,
but the intermediate steps are constructed in a form that the model has \emph{not}
been explicitly trained on. In this case, no additional fine-tuning is performed
for the intermediate steps, and the model is required to follow the reasoning chain
purely through prompting.
\begin{quote}
    \small \textbf{Step 1:} $1234561 = 1000000 + 234561$ \\
    \textbf{Step 2:} $1323212 = 1000000 + 323212$ \\
    \textbf{Step 3:} Add corresponding components. $1000000 + 1000000 = 2000000,
    234561 + 323212 = 557773$ \\
    \textbf{Step 4:} Combine results.
\end{quote}
Although logically correct, these intermediate steps do not correspond to well-supported
arithmetic routines in the model’s pretraining distribution and provide little useful
guidance for subsequent inference.

\paragraph{Training Setup.}
All fine-tuning for the good CoT variant uses the same LoRA configuration, optimizer,
and training hyperparameters as in \cref{app:pretrain_error}. In particular, the
architecture, learning rate, number of epochs, batch size, and quantization
settings are kept fixed. This isolates the effect of step-level intrinsic capability
from other confounding factors.

\paragraph{Results.}
We find that the grounded chain substantially improves accuracy over direct
prompting without CoT, whereas the ungrounded chain provides no benefit and can even
degrade performance (Figure~\ref{fig:chain}). This indicates that CoT prompting
is effective only when each reasoning step is well-learned in pretrained sub-tasks,
supporting our view of CoT as a sequence of structured in-context updates rather
than an arbitrary logical derivation.